\newcommand{\iu}{\mathrm{i}\mkern1mu}
\definecolor{darkred}{rgb}{0.7,0,0}
\definecolor{darkblue}{rgb}{0,0,0.7}
\definecolor{darkgreen}{rgb}{0,0.4,0}
\begin{document}
\title{Privacy of SGD under Gaussian or Heavy-Tailed Noise: Guarantees without Gradient Clipping}

\author{%
 \name Umut \c{S}im\c{s}ekli \email umut.simsekli@inria.fr \\
 \addr Inria, CNRS, Ecole Normale Sup\'{e}rieure\\ 
PSL Research University, Paris, France.\\
\AND
 \name Mert G\"{u}rb\"{u}zbalaban \email mg1366@rutgers.edu\\
 \addr Department of Management Science
 and Information Systems\\ Rutgers Business School, Piscataway, NJ, USA \\ %
 \AND
 \name Sinan Y{\i}ld{\i}r{\i}m \email sinanyildirim@sabanciuniv.edu\\
 \addr Faculty of Engineering and Natural Sciences\\Sabanc{\i} University, Istanbul, Turkey.\\
\AND
\name Lingjiong Zhu  \email zhu@math.fsu.edu \\
 \addr Department of Mathematics\\ Florida State University, Tallahassee, FL, USA.
}

\maketitle

\begin{abstract}

The injection of heavy-tailed noise into the iterates of stochastic gradient descent (SGD) has garnered growing interest in recent years due to its theoretical and empirical benefits for optimization and generalization. However, its implications for privacy preservation remain largely unexplored. Aiming to bridge this gap, we provide differential privacy (DP) guarantees for noisy SGD, when the injected noise follows an $\alpha$-stable distribution, which includes a spectrum of heavy-tailed distributions (with infinite variance) as well as the light-tailed Gaussian distribution. Considering the $(\epsilon, \delta)$-DP framework, we show that SGD with heavy-tailed perturbations achieves $(0,  {\mathcal{O}}(1/n))$-DP for a broad class of loss functions which can be non-convex, where $n$ is the number of data points. As a remarkable byproduct, contrary to prior work that necessitates bounded sensitivity for the gradients or clipping the iterates, our theory can handle \emph{unbounded gradients without clipping}, and reveals that under mild assumptions, such a projection step is not actually necessary. 
Our results suggest that, given other benefits of heavy-tails in optimization, heavy-tailed noising schemes can be a viable alternative to their light-tailed counterparts.
\end{abstract}

\begin{keywords}%
  Differential privacy, noisy (S)GD, heavy-tails, Markov Chain perturbations, $V$-uniform ergodicity.
\end{keywords}

\section{Introduction}

\paragraph{Context.} Most machine learning problems can be represented in an \emph{empirical risk minimization} (ERM) framework, where the goal is to minimize a loss function in the following form:
\begin{align} \label{eq: empirical risk minimization}
    \min_{\theta \in \mathbb{R}^d} \left\{F(\theta,X_n) := \frac1{n}\sum_{x \in X_n} f(\theta, x) \right\}.
\end{align}
Here, $X_n := \{x_1, \dots, x_n\} \in \mathcal{X}^n$ is a dataset with $n$ data points that are assumed to be independent and identically distributed (i.i.d.) from an underlying data distribution, $f$ is the loss incurred by a single data point, and $\theta$ is the parameter vector.

We will consider \emph{noisy} stochastic gradient descent (SGD) to solve \eqref{eq: empirical risk minimization} that is based on the following recursion:
\begin{equation} 
\theta_{k}=\theta_{k-1}-\eta{\nabla}F_{k}(\theta_{k-1},X_{n}) + \sigma \xi_{k},
\qquad
{\nabla}F_{k}(\theta, X_{n}):=\frac{1}{b}\sum_{i\in\Omega_{k}}\nabla f(\theta,x_{i}), 
\label{eqn:sgd}
\end{equation}
where $\eta>0$ is the step-size, $\Omega_{k}$ is a random subset of $\{1,2,\ldots,n\}$ with the batch-size $b$, independently and uniformly sampled at the $k$-th iteration, and $(\xi_k)_{k \geq 1}$ is a sequence of independent and identically distributed (i.i.d.) noise vectors. 
This algorithmic framework generalizes several practical settings, the most well known being stochastic gradient Langevin dynamics (SGLD) \citep{welling2011bayesian}, which is obtained when $\xi_k$ is Gaussian distributed.

Understanding the differential privacy (DP) of this algorithm is essential, as it ensures rigorous protection of sensitive data, enables safe deployment in privacy-critical applications, and informs algorithmic choices such as noise distribution, step-size, and projection mechanisms.  The DP framework \citep{dwork2014algorithmic} concerns designing randomized algorithms that aim at producing random outputs that still carry inferential utility while providing statistical deniability about the input dataset.
Noisy SGD with Gaussian and Laplace noise distributions have been studied extensively for their DP guarantees in the literature (see, e.g., \citet{chaudhuri2011differentially, abadi2016, Wang_et_al_2017, yu2019differentially, Kuru_et_al_2022} among many). In the Gaussian noise case, the privacy properties have been analyzed by using different tools \citep{ganesh2020faster,altschuler2022privacy,chourasia2021differential, ye2022differentially, ryffel2022differential}, which mainly cover convex and strongly convex $f$ and require bounded gradients $\nabla f$. The bounded gradient assumption often further necessitates the recursion \eqref{eqn:sgd} to be appended with a projection step onto a bounded set at every iteration. Recently, \citet{asoodeh2023privacy,murata2023diff2,chien2024langevin} provided differential privacy guarantees for noisy SGD under non-convex losses as well; however, they still require a projection step. Moreover, the widespread use of gradient clipping, a common workaround to enforce boundedness, can distort gradient information and negatively impact optimization dynamics, especially in non-convex settings \citep{chen2020understanding}.

Emerged in a non-DP related context, there has been an increasing interest in injecting \emph{heavy-tailed} noise to the SGD iterates, potentially with unbounded higher-order
moments, i.e., $\mathbb{E}[\|\xi_k\|^\alpha] = + \infty$ for some $\alpha > 1$. 
Despite the `daunting' connotation of heavy tails, it has been shown that using heavy-tailed noise in stochastic optimization can be surprisingly beneficial. In the context of learning theory, \citet{simsekli2020hausdorff,barsbey2021heavy,raj2022algorithmic,lim2022chaotic,raj2023algorithmic} showed that using a heavy-tailed noise can result in better generalization performance. More precisely, they investigated the generalization error of \eqref{eqn:sgd} i.e., $|\mathbb{E}_{X_n}[F(\theta,X_n)] - F(\theta,X_n)| $, when $\xi_k$ is heavy-tailed with infinite variance. They demonstrated that the generalization error decreases as $\xi_k$ becomes increasingly heavy-tailed, up to a certain threshold.
In another recent study, \citet{wan2023implicit} proved that the combination of heavy-tailed noise and overparametrization in a one-hidden-layer neural network setting forces the network weights $\theta_k$ to be `compressible', which is a crucial property in low-resource settings. Hence, using heavy-tailed noise can bring about important advantages from computational and generalization error perspectives.

On the other hand, heavy-tailed noise can introduce challenges in terms of minimizing the empirical risk $\theta\mapsto F(\theta,X_n)$ due to the potentially large fluctuations in $\theta_k$ \citep{csimcsekli2020fractional,gorbunov2020stochastic}. Nevertheless, it has been shown that if the loss function $f$ is sufficiently regular, SGD with heavy tails (without gradient- or iterate-clipping) still converges. In a non-convex setting, \citet{csimcsekli2019heavy} showed that $\min_{1\leq k\leq K} \mathbb{E}\|\nabla F(\theta_k, X_n)\|$ can be made arbitrarily small, and \citet{wan2023implicit} showed that one-hidden-layer neural networks with smooth activation functions indeed fall into the setting of \citet{csimcsekli2019heavy}, hence heavy-tailed SGD converges near a critical point. In the case where $f$ belongs to a class of strongly convex functions, \citet{wang2021convergence} showed that SGD with decaying step-sizes converges to the unique global minimum $\theta_\star$ in $L^p$, namely $\mathbb{E}\|\theta_k -\theta_\star\|^p \to 0$ as $k \to \infty$, if $\mathbb{E}\|\xi_k\|^p < \infty$ for some $1\leq p <\alpha$ (recall that $\mathbb{E}\|\xi_k\|^\alpha = +\infty$).

Even though heavy-tailed SGD has been analyzed from learning theoretical and optimization theoretical perspectives, it is still not clear what the effect of injecting heavy-tailed noise would be in terms of DP.
In a broader context, several recent works have explored the intersection of DP and heavy-tailed noise, each from a distinct perspective. In a control-theoretic context, \citet{ito2021privacy} examined privacy in linear dynamical systems using $\alpha$-stable distributed noise to better mask outliers. 
Very recently, \cite{zawacki2024symmetric,zawacki2025heavy} introduced the `symmetric $alpha$-stable mechanism', which achieves pure differential privacy—a stronger guarantee than the approximate DP typically attained by Gaussian mechanisms—while retaining desirable properties such as closure under convolution. Their work primarily addresses static query settings and does not consider optimization dynamics or SGD. 
Finally, perhaps closest to our study, \cite{asi2024private} investigated private stochastic convex optimization under heavy-tailed gradient assumptions, with finite second-order and bounded gradient assumptions. They derived near-optimal rates using clipped DP-SGD. While they also focused on statistical risk minimization, their approach required convex losses, finite variance, and more importantly clipping, which—as our work emphasizes—can obscure the beneficial effects of heavy-tailed noise.
In contrast to all the prior art, our work focuses specifically on analyzing the privacy guarantees of non-clipped noisy SGD with heavy-tailed noise with infinite variance in non-convex settings, providing insights that are both algorithmically and analytically distinct.

\paragraph{Objective.} In this study, we will provide DP guarantees for noisy SGD, \emph{without} gradient clipping, when the noise follows a class of heavy-tailed distributions.  We will deliberately focus on the non-clipped case, as clipping effectively suppresses the heavy-tailed behavior, thereby eliminating all the potential benefits associated with it. 
We will choose the class of the noise distribution as the $\alpha$-stable distribution family: a class of distributions that include both the Gaussian distribution and a wide range of heavy-tailed distributions with diverging high-order moments (we will provide more details in Section~\ref{sec:stable}).

Drawing inspirations from a recent study on algorithmic stability \citep{zhu2023uniform}, we take an alternative route through Markov chain stability theory and develop a novel analysis technique for understanding the privacy properties of noisy SGD.  The analysis involves a direct approach where, for an arbitrary $X_{n}$,
we consider (theoretically) running SGD on a `neighboring' data set $\hat{X}_{n}:=\{\hat{x}_1, \ldots, \hat{x}_n\} = \{x_1, \ldots, x_{i-1}, \hat{x}_i , x_{i+1},\ldots {x}_n\} \in \mathcal{X}^n$ that differs from $X_{n}$ by at most one element, i.e., 
\begin{equation}
\hat{\theta}_{k}=\hat{\theta}_{k-1}-\eta{\nabla}F_{k}(\hat{\theta}_{k-1},\hat{X}_{n}) + \sigma \xi_k,
\qquad
{\nabla}F_{k}({\theta},\hat{X}_{n})=\frac{1}{b}\sum_{i\in\Omega_{k}}\nabla f\left({\theta},\hat{x}_{i}\right),
\label{eqn:sgd2}
\end{equation} 
and analyze the probabilistic difference between its iterates and those in \eqref{eqn:sgd}. If the distributions of $\theta_k$ and $\hat{\theta}_k$ are close in some sense, we can conclude that changing one data point in the dataset would not have a significant impact, hence the privacy of an individual data point can be preserved.  By making use of relatively recent results from the theory of Markov chain stability \citep{RS2018}, we estimate the \emph{total variation} (TV) distance between the laws of $\theta_k$ and $\hat{\theta}_k$, which can be immediately turned into bounds on the privacy leakage. 

\paragraph{Contributions.} Our contributions are as follows:
\begin{itemize}%
\item By building up on the $(\epsilon, \delta)$-DP framework \citep{Dwork_2006, dwork2014algorithmic} (to be introduced formally in the next section), we show that for $\alpha>1$ and \emph{dissipative} (potentially non-convex) loss functions, noisy SGD with $\alpha$-stable perturbations achieves $(0,\delta)$-DP with $\delta = {\mathcal{O}}(1/n)$, where $n$ is the number of data points.  

\item A remarkable outcome revealed by our theory is that the bounded gradient assumption as well as the projection step appended to SGD are \emph{not} actually required for obtaining DP (including the Gaussian case). Our theory shows that SGD enjoys DP without needing projections once the gradients satisfy a pseudo-Lipschitz continuity condition (which has already been considered in the literature and holds for practical problems such as linear and logistic regression) and assuming the data is bounded with high probability (e.g., sub-Gaussian data).

\item  In terms of the dimension $d$, our bound has a dependency of order $d^{\frac{\alpha +1}{2}}$ for large $d$, where $\alpha$ is the parameter that determines %
the heaviness of the tails. This result suggests a potential benefit of heavy tails: as the noise gets heavier tailed (smaller $\alpha$), the dependency of the bound on $d$ weakens.
\item As another technical contribution, rather than directly upper bounding the total variation (TV) distance—which proves to be more challenging—we instead upper bound a related quantity known as the $V$-norm, which itself upper bounds the TV. This approach requires the design of suitable Lyapunov functions and the demonstration of their contraction properties with respect to the Markov kernels induced by noisy SGD. To achieve this goal, we make novel connections to stochastic analysis.

\item Similar to its Gaussian counterparts \citep{chourasia2021differential,ryffel2022differential, chien2024langevin}, our bounds are time-uniform, i.e., they do not increase with the increasing number of iterations. 
\end{itemize}
Besides being able to handle both heavy-tailed and Gaussian noising schemes, allowing for non-convexity, and not requiring projections, our rates match the prior art.
Perhaps surprisingly, this observation reveals that the heavy-tailed noising mechanism in SGD provides similar DP guarantees compared to the Gaussian case, where the dimension dependency gets improved as the tails get heavier. Our results suggest that the considered heavy-tailed mechanism is a viable alternative to its light-tailed counterparts. Moreover, the connection we establish between Markov chain stability theory and DP offers a new analytical perspective that we believe can inform and inspire future developments in the field.

\section{Technical Background}

In this section, we will define the basic technical notions that will be essential for our study. 

\subsection{Differential privacy and the TV distances}
DP is a property that can be attached to randomized algorithms. A randomized algorithm takes a dataset as input and returns a random variable as output, where the source of randomness is in the algorithm's inner mechanism. We give a formal definition below.
\begin{definition}[$(\epsilon,\delta)$-DP, \citet{dwork2014algorithmic}]
    Let $\epsilon, \delta \geq 0$. A randomized algorithm $\mathcal{A}$ is called $(\epsilon,\delta)$-differentially private, if for all neighboring datasets $X, \hat{X} \in \mathcal{X}^n$   that differ by one element (denoted by $X\cong \hat{X}$), and for every measurable $E \subset \mathrm{Range}(\mathcal{A})$, the following relation holds:
    \begin{align}
        \label{eqn:dp}
        \mathbb{P}\left(\mathcal{A}(X) \in E \right) \leq \exp(\epsilon) \mathbb{P}\left(\mathcal{A}(\hat{X}) \in E \right) + \delta.
    \end{align}
\end{definition}

Later, we will exploit a relation between DP and TV distance, whose formal definition is given as follows.
\begin{definition}[TV distance]
    Let $\mu,\nu$ be two probability distributions defined on the same measurable space $(\Omega, \mathcal{F})$. The $\mathrm{TV}$ distance between $\mu$ and $\nu$ is defined as follows:
    \begin{align}
        \label{eqn:tv}
        \mathrm{TV}(\mu,\nu) :=  \sup_{E \in \mathcal{F}} |\mu(E) - \nu(E) | = \frac1{2} \sup _{|f| \leq 1}\left|\int_{\Omega} f(y)\left( \mu(\mathrm{d} y)- \nu(\mathrm{d} y)\right)\right|.
    \end{align}
\end{definition}
With a slight abuse of notation, for two random variables $X, Y$, we will denote  
\begin{align*}
    \mathrm{TV}(X,Y) := \mathrm{TV}( \mathrm{Law}(X), \mathrm{Law}(Y)).
\end{align*}

The following result establishes the link between TV stability and DP.
\begin{proposition}
\label{prop:tv_dp}
    Let $\mathcal{A}$  be a randomized algorithm and $\delta \geq 0$. Then, the following stability condition holds for $\mathcal{A}$:
    \begin{align}
        \mathrm{TV}(\mathcal{A}(X), \mathcal{A}(\hat{X})) \leq \delta \qquad \text{for any} \quad X \cong \hat{X}
    \end{align}
    if and only if $\mathcal{A}$ is $(0,\delta)$-DP. 
\end{proposition}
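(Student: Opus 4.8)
The statement is a direct equivalence obtained by unpacking Definition~\ref{} of $(\epsilon,\delta)$-DP specialized to $\epsilon=0$ (so that $\exp(\epsilon)=1$) and the definition of the TV distance in \eqref{eqn:tv}. No probabilistic machinery is needed; the only two structural facts I would invoke are (i) the neighboring relation $\cong$ is symmetric, and (ii) the relevant $\sigma$-algebra on $\mathrm{Range}(\mathcal{A})$ is closed under complementation, which lets a one-sided inequality over all events be upgraded to a two-sided one. The plan is therefore to prove the two implications separately, each by fixing an arbitrary neighboring pair $X\cong\hat X$ and an arbitrary measurable event $E$, and then translating one inequality into the other.

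\textbf{($\Leftarrow$) TV stability $\Rightarrow$ $(0,\delta)$-DP.} Suppose $\mathrm{TV}(\mathcal{A}(X),\mathcal{A}(\hat X))\le\delta$ for every $X\cong\hat X$. Fix such a pair and a measurable $E$. By the first expression in \eqref{eqn:tv},
\[
\mathbb{P}\!\left(\mathcal{A}(X)\in E\right)-\mathbb{P}\!\left(\mathcal{A}(\hat X)\in E\right)\;\le\;\left|\mathbb{P}\!\left(\mathcal{A}(X)\in E\right)-\mathbb{P}\!\left(\mathcal{A}(\hat X)\in E\right)\right|\;\le\;\mathrm{TV}(\mathcal{A}(X),\mathcal{A}(\hat X))\;\le\;\delta,
\]
which rearranges to $\mathbb{P}(\mathcal{A}(X)\in E)\le \mathbb{P}(\mathcal{A}(\hat X)\in E)+\delta=\exp(0)\,\mathbb{P}(\mathcal{A}(\hat X)\in E)+\delta$. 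Since $X,\hat X$ and $E$ were arbitrary, this is precisely \eqref{eqn:dp} with $\epsilon=0$, so $\mathcal{A}$ is $(0,\delta)$-DP.

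\textbf{($\Rightarrow$) $(0,\delta)$-DP $\Rightarrow$ TV stability.} Suppose $\mathcal{A}$ is $(0,\delta)$-DP. Fix $X\cong\hat X$ and a measurable $E$. Applying \eqref{eqn:dp} (with $\epsilon=0$) to the ordered pair $(X,\hat X)$ gives $\mathbb{P}(\mathcal{A}(X)\in E)-\mathbb{P}(\mathcal{A}(\hat X)\in E)\le\delta$. Because $\cong$ is symmetric, $\hat X\cong X$ is also a valid neighboring pair, so applying \eqref{eqn:dp} to $(\hat X,X)$ yields $\mathbb{P}(\mathcal{A}(\hat X)\in E)-\mathbb{P}(\mathcal{A}(X)\in E)\le\delta$. (Equivalently, one may apply \eqref{eqn:dp} to the complement $E^c$ and use $\mathbb{P}(\cdot\in E^c)=1-\mathbb{P}(\cdot\in E)$.) Combining the two bounds gives $\left|\mathbb{P}(\mathcal{A}(X)\in E)-\mathbb{P}(\mathcal{A}(\hat X)\in E)\right|\le\delta$ for every measurable $E$; taking the supremum over $E$ and using the first equality in \eqref{eqn:tv} yields $\mathrm{TV}(\mathcal{A}(X),\mathcal{A}(\hat X))\le\delta$, and since the pair was arbitrary the stated stability condition holds.

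\textbf{Main point to be careful about.} There is no real obstacle, but the one subtlety worth flagging is the role of the symmetry of the neighboring relation (or, equivalently, complementation of events) in the forward direction: the DP inequality \eqref{eqn:dp} is a priori one-sided, and recovering the two-sided absolute value in the TV distance requires invoking it in both orderings. Without symmetry of $\cong$ the equivalence would degrade to an implication in only one direction. The rest is pure bookkeeping with $\exp(0)=1$.
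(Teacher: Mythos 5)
Your proposal is correct and follows exactly the route the paper intends: the paper's proof is the one-line remark that the equivalence ``directly follows from the definitions,'' and your argument is simply that remark written out in full, including the right handling of the one genuine subtlety (using symmetry of $\cong$, or equivalently complementation, to turn the one-sided DP inequality into the two-sided bound needed for the TV supremum). Nothing is missing and nothing differs in substance from the paper's approach.
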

\begin{proof}
The result directly follows from the definitions of the $\mathrm{TV}$-distance and $(0,\delta)$-DP. 
\end{proof}
Similar links between DP and TV have been already considered in \citet{cuff2016differential,kalavasis2023statistical}.

\subsection{Markov chain stability}
\label{sec:stab}

In this paper, our goal will be to upper bound $\mathrm{TV}(\theta_k, \hat{\theta}_k)$, as this would immediately give as a DP guarantee, thanks to Proposition~\ref{prop:tv_dp}. To this end, we will resort to the Markov chain stability theory which was 
 developed by \citet{RS2018}.

Let $(\theta_{k})_{k\geq0}$ be a Markov chain in $\mathbb{R}^d$ with transition kernel $P$
and initial distribution $p_{0}$, i.e., for any measurable set $A\subseteq\mathbb{R}^{d}$, 
$\mathbb{P}(\theta_{k}\in A|\theta_{0},\cdots,\theta_{k-1})
=\mathbb{P}(\theta_{k}\in A|\theta_{k-1})=P(\theta_{k-1},A)$,
and $p_{0}(A)=\mathbb{P}(\theta_{0}\in A)$ 
and $k\in\mathbb{N}$. 
Let $(\hat{\theta}_{k})_{k\geq0}$ be another Markov
chain with transition kernel $\hat{P}$ and initial distribution $\hat{p}_{0}$.
We denote by $p_{k}$ the distribution of $\theta_{k}$ and by $\hat{p}_{k}$
the distribution of $\hat{\theta}_{k}$. In this context, \citet{RS2018} developed generic analysis tools for estimating $\mathrm{TV}(\theta_k, \hat{\theta}_k) = \mathrm{TV}(p_k, \hat{p}_k)$ by using the properties of the transition kernels associated with each chain. Before proceeding to their result, we first need to define the notion of the $V$-norm of a signed measure and the $V$-uniform ergodicity for Markov chains.

\begin{definition}[$V$-norm]
    Let $\mu$ and $\nu$ be two probability distributions on $\mathbb{R}^d$ and $V: \mathbb{R}^d \rightarrow[1, \infty]$ be measurable function with finite moments with respect to $\mu$ and $\nu$. Then, the $V$-norm between $\mu$ and $\nu$ is defined as follows:
    \begin{align*}
        \left\|\mu-\nu\right\|_V
:=\sup _{|f| \leq V}\left|\int_{\mathbb{R}^d} f(y)\left( \mu(\mathrm{d} y)- \nu(\mathrm{d} y)\right)\right|.
    \end{align*}
\end{definition}
One can view the $V$-norm as a generalization of the total variation norm: the $V$-norm reduces to the total variation norm when $V \equiv 1$ \citep[Section D.3]{douc2018markov}. On the other hand, for a general $V \geq 1$, we define the following notion of ergodicity, which will be central in our analysis.
\begin{definition}[$V$-uniform ergodicity]
\label{def:erg}
    A Markov process $(\theta_{k})_{k\geq0}$ with the transition kernel $P$ is called $V$-uniformly ergodic with an invariant distribution $\pi$, if there exists a $\pi$-almost everywhere finite measurable function $V: \mathbb{R}^d \rightarrow[1, \infty]$ with finite moments with respect to $\pi$ and there are constants $\rho \in[0,1)$ and $C \in(0, \infty)$ such that
\begin{align*}
\left\|P^k(\theta, \cdot)-\pi\right\|_V
=\sup _{|f| \leq V}\left|\int_{\mathbb{R}^d} f(y)\left(P^k(\theta, \mathrm{d} y)-\pi(\mathrm{d} y)\right)\right| \leq C V(\theta) \rho^k, 
\end{align*}
for any $\theta \in \mathbb{R}^d$ and $k \in \mathbb{N}$.
Thus, it holds that
$$
\sup _{\theta \in \mathbb{R}^d} \frac{\left\|P^k(\theta, \cdot)-\pi\right\|_V}{V(\theta)} \leq C \rho^k.
$$
\end{definition}
This notion has been widely used in the analysis of Markov processes \citep{Meyn1993}. By assuming that $(\theta_k)_{k\geq 0}$ is ergodic in the sense of Definition~\ref{def:erg}, we have the following estimate on the $V$-norm.
 
\begin{lemma}[{\citet[Corollary 3.3]{RS2018}}]
\label{lem:perturb_new}
Let $P$ be $V$-uniformly ergodic, that is, there are constants $\rho \in[0,1)$ and $C \in$ $(0, \infty)$ such that
\begin{align}
    \left\|P^k(\theta, \cdot)-\pi\right\|_V \leq C V(\theta) \rho^k, \quad \theta \in \mathbb{R}^d, k \in \mathbb{N} .
    \label{eqn:perturb_erg}
\end{align}
We also assume that there are numbers $\beta \in(0,1)$ and $H \in(0, \infty)$ and a measurable Lyapunov function $\hat{V}: \mathbb{R}^d \rightarrow[1, \infty)$ of $\hat{P}$ such that
\begin{equation}
\label{eqn:fosterlyap}
(\hat{P} \hat{V})(\theta) \leq \beta \hat{V}(\theta)+H.
\end{equation}
Let
\begin{align}
    \gamma=\sup _{\theta \in \mathbb{R}^d} \frac{\|P(\theta, \cdot)-\hat{P}(\theta, \cdot)\|_V}{\hat{V}(\theta)} \quad \text { and } \quad \kappa=\max \left\{\hat{p}_0(\hat{V}), \frac{H}{1-\beta}\right\},
    \label{eqn:pert_gamma}
\end{align}
with $\hat{p}_0(\hat{V})=\int_{\mathbb{R}^d} \hat{V}(\theta) \mathrm{d} \hat{p}_0(\theta)$. Then,
$$
\left\|p_k-\hat{p}_k\right\|_V \leq C\left(\rho^k\left\|p_0-\hat{p}_0\right\|_V+\left(1-\rho^k\right) \frac{\gamma \kappa}{1-\rho}\right) .
$$
    
\end{lemma}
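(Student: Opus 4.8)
The plan is to write both laws via Chapman--Kolmogorov, $p_k=p_0P^k$ and $\hat p_k=\hat p_0\hat P^k$, and split the discrepancy into a transient part controlled by the ergodicity of $P$ and an accumulated‑perturbation part controlled by the drift condition for $\hat P$:
\begin{align*}
p_k-\hat p_k \;=\; (p_0-\hat p_0)P^k \;+\; \hat p_0\bigl(P^k-\hat P^k\bigr).
\end{align*}
Two elementary facts will be used throughout: for a finite signed measure $\lambda$ one has $\|\lambda\|_V=\int V\,\mathrm{d}|\lambda|$ (test against $f=V$ on the positive part of the Jordan decomposition and $f=-V$ on the negative part); and if $\lambda$ has total mass zero and $\Pi$ denotes the constant kernel $\Pi(\theta,\cdot):=\pi$, then $\lambda\Pi=0$, hence $\lambda P^m=\lambda(P^m-\Pi)$ for every $m$.

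From these I would first derive a contraction lemma for $P$: for any zero‑mass signed measure $\lambda$ and any $m\ge0$, $\|\lambda P^m\|_V\le C\rho^m\|\lambda\|_V$. Indeed, for $|f|\le V$, using $\lambda\Pi=0$ to replace $P^mf$ by $P^mf-\pi(f)$ and then $|(P^mf)(\theta)-\pi(f)|\le\|P^m(\theta,\cdot)-\pi\|_V\le CV(\theta)\rho^m$ from \eqref{eqn:perturb_erg}, one gets $|\int f\,\mathrm{d}(\lambda P^m)|\le C\rho^m\int V\,\mathrm{d}|\lambda|$ (the case $m=0$ uses $C\ge1$, which \eqref{eqn:perturb_erg} forces). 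Since $p_0-\hat p_0$ has total mass $1-1=0$, this immediately gives $\|(p_0-\hat p_0)P^k\|_V\le C\rho^k\|p_0-\hat p_0\|_V$, the first term of the claimed bound.

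For the second term I would telescope $P^k-\hat P^k=\sum_{j=0}^{k-1}\hat P^j(P-\hat P)P^{k-1-j}$ (the right side equals $\sum_{j=0}^{k-1}\bigl(\hat P^jP^{k-j}-\hat P^{j+1}P^{k-1-j}\bigr)$, a telescoping sum), so that $\hat p_0(P^k-\hat P^k)=\sum_{j=0}^{k-1}\lambda_jP^{k-1-j}$ with $\nu_j:=\hat p_0\hat P^j$ and $\lambda_j:=\nu_j(P-\hat P)$. Each $\nu_j$ is a probability measure, so $\lambda_j$ has total mass zero and the contraction lemma gives $\|\lambda_jP^{k-1-j}\|_V\le C\rho^{k-1-j}\|\lambda_j\|_V$. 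From the definition of $\gamma$ in \eqref{eqn:pert_gamma}, testing against $|f|\le V$ yields $|((P-\hat P)f)(\theta)|\le\|P(\theta,\cdot)-\hat P(\theta,\cdot)\|_V\le\gamma\hat V(\theta)$, hence $\|\lambda_j\|_V\le\gamma\,\nu_j(\hat V)=\gamma\,\hat p_0(\hat P^j\hat V)$. Iterating the Foster--Lyapunov inequality \eqref{eqn:fosterlyap} gives $\hat P^j\hat V\le\beta^j\hat V+H\frac{1-\beta^j}{1-\beta}$, so $\hat p_0(\hat P^j\hat V)\le\beta^j\hat p_0(\hat V)+(1-\beta^j)\frac{H}{1-\beta}\le\kappa$, the last step because this is a convex combination of $\hat p_0(\hat V)$ and $\frac{H}{1-\beta}$. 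Thus $\|\lambda_jP^{k-1-j}\|_V\le C\gamma\kappa\,\rho^{k-1-j}$, and summing the geometric series $\sum_{j=0}^{k-1}\rho^{k-1-j}=\frac{1-\rho^k}{1-\rho}$ gives $\|\hat p_0(P^k-\hat P^k)\|_V\le C\gamma\kappa\frac{1-\rho^k}{1-\rho}$. Adding the two estimates produces exactly the stated inequality.

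I do not expect a conceptual obstacle here; the work is in the measure‑theoretic bookkeeping. The points that need care are: the identity $\|\lambda\|_V=\int V\,\mathrm{d}|\lambda|$ and the cancellation $\lambda\Pi=0$ for zero‑mass $\lambda$; verifying that all kernel compositions and integrals are well defined, which is where finiteness of the relevant $V$‑ and $\hat V$‑moments matters (notably $\kappa<\infty$, i.e. $\hat p_0(\hat V)<\infty$, together with \eqref{eqn:fosterlyap}, keeps $\nu_j(\hat V)$ finite and uniformly bounded by $\kappa$); and keeping in mind that the two possibly distinct functions $V$ and $\hat V$ enter the estimates only through the constant $\gamma$, so they never need to be compared directly.
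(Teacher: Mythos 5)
Your proof is correct. Note that the paper does not prove this lemma at all—it is imported verbatim as \citet[Corollary 3.3]{RS2018}—so there is no in-paper argument to compare against; what you have written is essentially the standard perturbation-theory proof underlying that reference: split $p_k-\hat p_k=(p_0-\hat p_0)P^k+\hat p_0(P^k-\hat P^k)$, contract the first term via $V$-uniform ergodicity applied to zero-mass signed measures, telescope the second as $\sum_{j=0}^{k-1}\hat p_0\hat P^j(P-\hat P)P^{k-1-j}$, bound each increment by $\gamma\,\hat p_0(\hat P^j\hat V)$, and control the latter uniformly by $\kappa$ through iterating the Foster--Lyapunov drift. All the delicate points are handled correctly (the zero-mass cancellation against $\pi$, the $m=0$ case via $C\ge 1$, the convex-combination bound $\beta^j\hat p_0(\hat V)+(1-\beta^j)\tfrac{H}{1-\beta}\le\kappa$, and the geometric sum giving the $(1-\rho^k)/(1-\rho)$ factor), so the argument stands as a complete, self-contained derivation of the cited result.
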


While it might seem technical, this result will prove very useful for developing DP bounds for noisy SGD. Informally, Lemma~\ref{lem:perturb_new} suggests a three-step recipe for bounding the TV distance between $w_k$ and $\hat{w}_k$: (i) identify a Lyapunov function $V$ and show that $(w_k)_{k\geq 0}$ is $V$-uniformly ergodic, (ii) identify another Lyapunov function $\hat{V}$, estimate the constants in \eqref{eqn:fosterlyap}, and (iii) bound the $V$-norm of the difference between between \emph{one-step} transition kernels $P$ and $\hat{P}$, weighted by $\hat{V}$ (cf.\ \eqref{eqn:pert_gamma}). Once these steps are performed, Lemma~\ref{lem:perturb_new} immediately gives an upper bound on $\mathrm{TV}(w_k, \hat{w}_k)$: we have that 
\begin{align*}
   \mathrm{TV}(w_k, \hat{w}_k) \leq \frac1{2} \left\|p_k-\hat{p}_k\right\|_V\,
\end{align*}
since $V \geq 1$  (cf.\ \eqref{eqn:tv}). This ultimately provides a DP guarantee by Proposition~\ref{prop:tv_dp}.

Note that, in a learning theory context, \citet{raj2023algorithmic} followed a stochastic differential equations-based route for obtaining bounds on the Wasserstein distance between the laws of $\theta_k$ and $\hat{\theta}_k$. Their analysis cannot be directly used in our setting as the Wasserstein distance does not have a direct link with DP. On the other hand, \citet{zhu2023uniform} followed a Markov chain stability route for establishing Wasserstein-stability of noisy SGD; however, similar to \citet{raj2023algorithmic}, their approach does not have a direct link with DP and it also does not apply to the heavy-tailed setting as they require bounded second-order moments.

\subsection{Stable distributions}

\label{sec:stable}
We will consider a specific noise distribution for $(\xi_k)_{k\geq 1 }$, such that we will assume that it follows a \emph{rotationally invariant stable distribution}, which has the following characteristic function for $\alpha \in (0,2]$:
\begin{align}\label{stable:dist}
    \mathbb{E}\left[\exp(\iu u^\top \xi_k)\right]  = \exp(- \|u\|^\alpha),
\end{align}
for all $u \in \mathbb{R}^d$ and $k \geq 1$, where $\iu := \sqrt{-1}$.
Here $\alpha\in(0,2]$ is known as the tail-index
that determines
the tail thickness of the distribution. The tail becomes heavier as $\alpha$ gets smaller.
In particular, when $\alpha=2$, the stable distribution reduces to the Gaussian distribution.
When $0<\alpha<2$, the moments of stable distributions
are finite only up to the order $\alpha$
in the sense that the $p$-th moments are finite
if and only if $p<\alpha$,
which implies infinite variance when $\alpha<2$ and infinite mean when $\alpha\leq 1$.
In the rest of the paper, we focus on the regime $\alpha\in(1,2]$, which
includes the Gaussian case ($\alpha=2$) and the heavy-tailed case ($1<\alpha<2$)
with a finite mean. Similar noise models for SGD have been already considered in prior work, see e.g., \citet{nguyen2019first,csimcsekli2020fractional,wan2023implicit}. 
For further properties of stable distributions, we refer to \citet{ST1994}.

\section{Main Assumptions}

In this section, we state our main assumptions. We first state some regularity conditions for the loss $f$, then we will introduce an \emph{optional}  condition, which is not strictly necessary, yet significantly simplifies the proofs.

\subsection{Regularity conditions}

In this section, we will present the main assumptions that will be used throughout the paper. Our first assumption is a pseudo-Lipschitz continuity assumption on the gradient of the loss function.

\begin{assumption}\label{assump:1}
For every $x \in \mathcal{X}$, $f(\cdot,x)$ is
differentiable
and 
there exist  constants $K_1, K_2>0$ such that
for any $\theta,\hat{\theta}\in\mathbb{R}^{d}$ and every $x, \hat{x} \in \mathcal{X}$, 
\begin{align}
\|\nabla f(\theta, x) - \nabla f(\hat{\theta},\hat{x})\| 
\leq  K_1 \|\theta- \hat{\theta}\| + K_2 \| x-\hat{x} \|( \|\theta \| + \|\hat{\theta}\| +1). \label{eq: regularity assumption - 1}
\end{align} 
\end{assumption}
This assumption has been used for decoupling the data and the parameter and it has been considered in various settings. 
It is similar to the pseudo-Lipschitz-like condition studied by \citet{erdogdu2018global}.
It is satisfied for many various problems such as GLMs \citep{bach2014adaptivity}.

Our second assumption is a uniform dissipativity condition on the loss function.

\begin{assumption}\label{assump:grad}
There exist universal positive constants $B$, $m$, and $K$ such that
for any $\theta_{1},\theta_{2}\in\mathbb{R}^{d}$ and $x\in\mathcal{X}$,
\begin{align*}
\Vert\nabla f(0,x)\Vert\leq B,
\qquad
\left\langle\nabla f(\theta_{1},x)-\nabla f(\theta_{2},x),\theta_{1}-\theta_{2}\right\rangle
\geq
m\Vert\theta_{1}-\theta_{2}\Vert^{2}-K.
\end{align*}
\end{assumption}

This dissipativity assumption is satisfied 
when the loss function admits some gradient growth in radial directions outside a compact set. 
Also, any function that is strongly convex outside of a ball of some positive radius satisfies Assumption~\ref{assump:grad}.
In particular, this assumption is satisfied for some one-hidden-layer neural networks \citep{akiyama2022excess}, 
non-convex formulations of classification problems (e.g.\ in logistic regression with a sigmoid/non-convex link function), 
robust regression problems \citep{gao2022global}, sampling and Bayesian learning problems and global convergence in non-convex optimization problems \citep{raginsky2017non,gao2022global}. Moreover, any regularized regression problem where the loss is a strongly convex quadratic plus a smooth penalty that grows slower than a quadratic satisfies Assumption~\ref{assump:grad}, 
such as smoothed Lasso regression; see \citet{erdogdu2022} for more examples. Informally, the constant $K$ measures the `level of non-convexity' of the problem: when $K=0$ the loss becomes strongly convex; for $K>0$ the function class can start accommodating non-convex functions. 

In Proposition~\ref{lemma-assump-satisfied-ridge-regression} and Proposition~\ref{prop:log_reg} in Appendix~\ref{sec:exp_constants}, we show that our assumptions are satisfied for $\ell_2$-regularized linear and logistic regression problems and we explicitly compute the required constants.

\subsection{(Optional) Existence of a universal stable point}

In this section, we introduce an assumption that requires the existence of a `universal stable point'. This assumption is not required for obtaining our bounds; however, in case it is assumed to hold, we will show that we can obtain tighter results.  

\begin{assumption}
    \label{asmp:interp}
    There exists $\vartheta_{\star} \in \mathbb{R}^d$ such that for every $x \in \mathcal{X}$, $\nabla f (\vartheta_\star, x) = 0$.
\end{assumption}

This condition is similar to the `stable-point interpolation' condition as defined by \citet[Definition 4]{Mishkin_2020} and also to the `interpolation condition' as considered by \citet[Definition 4.9]{garrigos2023handbook}. However, it is milder in the sense that, we do not require the implication $\nabla F(\theta,X_n) = 0 \Rightarrow \nabla f(\theta, x_i) =0 $ for every admissible $\theta$ as opposed to \citet{Mishkin_2020}, nor nor do we impose the constraint that $\vartheta_\star$ has to be a minimizer as it is required in \citet{garrigos2023handbook}. Instead, Assumption~\ref{asmp:interp} requires the \emph{existence} of a single stable point $\vartheta_\star$ such that the gradient of $f$ vanishes at $\vartheta_\star$. However, we need this condition to hold for every $x \in \mathcal{X}$ contrary to \citet{Mishkin_2020} and \citet{garrigos2023handbook}, who require their conditions to hold only on a given training set.   

To illustrate the assumption, we provide the following two examples where the condition holds. 

\textbf{Example 1 (Neural networks).} Consider a supervised learning setting $x = (a,y)$, where $a \in \mathbb{R}^p$ is the feature and $y\in \mathbb{R}$ is the label and consider the following fully-connected neural network architecture: $f(\theta, x) = \ell ( \theta_2^\top h (\theta_1^\top a), y) $, where $\ell$ is a differentiable loss function, $\theta_1 \in \mathbb{R}^{p \times d_1}$, $\theta_2 \in \mathbb{R}^{d_1}$ are the network weights, $\theta \equiv \{\theta_1, \theta_2\}$ and $h : \mathbb{R} \to \mathbb{R}$ is a differentiable activation function applied component-wise satisfying $h(0)=0$.\footnote{The condition $h(0)=0$ is satisfied by many smooth activation functions such as hyperbolic tangent, ELU, SELU, and GELU.} Then Assumption~\ref{asmp:interp} holds with $\vartheta_\star = 0 \in \mathbb{R}^d$. 

\textbf{Example 2 (Realizable settings).} Consider the same supervised learning setting with $x = (a,y)$ and assume that exists a parametric function $g_{\vartheta_\star} \in \{g_\theta : \theta \in \mathbb{R}^d\}$ such that for every $x = (a,y) \in \mathcal{X}$, $y = g_{\vartheta_\star}(a)$ (i.e., no label noise). If we have $f(\theta, x) = \ell(g_\theta(a),y)$ for some nonnegative and differentiable $\ell$ with $\ell(y',y') =0$ for all $y' \in \mathbb{R}$, then Assumption~\ref{asmp:interp} holds with $\vartheta_\star$. %
Note that in this case $f(\vartheta_\star,x) =0$ for all $x\in \mathcal{X}$, which is more than what is required by Assumption~\ref{asmp:interp}. This setting is sometimes called a `well-specified statistical model' \citep{bickel2015mathematical}. 

We shall underline that Assumption~\ref{asmp:interp} is optional and only requires the existence of a universal stable point, we do not need the optimization algorithm to converge towards it.  

\section{Privacy of Noisy Gradient Descent}

\label{sec:gd}

We first focus on the noisy gradient descent (GD) case where $\nabla F_k = \nabla F$
for all $k$. We handle this setting separately as its proofs are relatively simpler and might be more instructive. More precisely, we consider the following recursion 
\begin{align}
    \label{eqn:gd}
    \theta_{k}=\theta_{k-1}-\eta{\nabla}F(\theta_{k-1},X_{n}) + \sigma \xi_{k},
\end{align}
for $\alpha \in (1,2]$ and we will follow the three-step recipe given in Section~\ref{sec:stab}. Here, the recursion for $(\hat{\theta}_k)_{k\geq 0}$ is defined similarly to the one given in \eqref{eqn:sgd2}:
\begin{align}
    \label{eqn:gdhat}
    \hat{\theta}_{k}=\hat{\theta}_{k-1}-\eta{\nabla}F(\hat{\theta}_{k-1},\hat{X}_{n}) + \sigma \xi_{k}.
\end{align}

In the rest of this section, we will follow the `three-step route' suggested by Lemma~\ref{lem:perturb_new} and establish a DP guarantee for noisy GD under $\alpha$-stable noise.

\paragraph{The design of the Lyapunov functions and the distance between one-step transition kernels.} We start by estimating the term $\gamma$ in \eqref{eqn:pert_gamma}, i.e.,
\begin{align}
    \sup _{\theta \in \mathbb{R}^d} \frac{\|P(\theta, \cdot)-\hat{P}(\theta, \cdot)\|_V}{\hat{V}(\theta)}, 
    \label{eqn:gamma_2}
\end{align}
which requires us to define the Lyapunov functions $V$ and $\hat{V}$. This part constitutes the `art' part of our analysis as it requires the design of the `good' $V$, $\hat{V}$, that are effective across all parts of the analysis, namely, \eqref{eqn:perturb_erg}, \eqref{eqn:fosterlyap}, and \eqref{eqn:pert_gamma}.

First of all, as we only need to have an estimate on $\mathrm{TV}(\theta_k, \hat{\theta}_k)$ to obtain a DP guarantee, it is tempting to choose $V \equiv 1$, as  with this choice of $V$, we have that $\frac1{2}\|p_k - \hat{p}_k\|_V = \mathrm{TV}(\theta_k, \hat{\theta}_k)$. Hence, if this option was viable, Lemma~\ref{lem:perturb_new} would immediately give us a bound on the $\mathrm{TV}$ distance, which would be sufficient for our purposes. However, under this choice, the $V$-uniform ergodicity condition \eqref{eqn:perturb_erg} would reduce to uniform ergodicity \citep{douc2018markov}, which could only hold when $\theta$ is restricted in a compact space. Since we aim to avoid the use of clipping, we adopt a more refined approach that does not require the iterates to be confined to a bounded domain.

Let us introduce the following (family of) Lyapunov functions for $0 < p < \min(\frac1{2},\alpha-1)$ and $\alpha \in (1,2)$, which we established after a series of trial and error:
\begin{align}
    \label{eqn:v_vhat_new}
    V_p(\theta) :=& \left(1 + \| \theta - \vartheta_\star \|^2\right)^{p/2}, \\
    \label{eqn:v_vhat_new2}
    \hat{V}_p(\theta) :=& V_{1+p}(\theta) = \left(1 + \|\theta - \vartheta_\star\|^2\right)^{(1+p)/2}.
\end{align}
The reasoning of this choice is as follows. In the proof of the following lemma, we show that, with our choice of $V_p$, we have the following estimate:
\begin{align*}
    \|P(\theta, \cdot)-\hat{P}(\theta, \cdot)\|_{V_p} = \mathcal{O} \left(  \frac{\|x - \hat{x}\|  \|\theta\|^{1+p}
 }{n}  \right)   , 
\end{align*}
 where $x,\hat{x}$ are two data points.
Noticing that  $\hat{V}_p(\theta) \approx \|\theta\|^{1+p}$, the terms depending on $\theta$ in \eqref{eqn:gamma_2} gracefully cancel, hence the reason behind our choice of $\hat{V}_p$. Furthermore, $\|x - \hat{x}\|$ is bounded once the data domain is bounded, hence the supremum in \eqref{eqn:gamma_2} is upper-bounded, ultimately circumventing the requirement of gradient clipping, even under heavy tails. The next lemma formalizes this informal explanation.
\begin{lemma}
\label{lem:gamma_gd_new}
      Let $\alpha \in (1,2)$, $p\in (0,\min(\frac1{2},\alpha-1))$, $P$ be the transition kernel associated with the Markov process $(\theta_k)_{k\geq 0}$ in \eqref{eqn:gd} and $\hat{P}$ be the transition kernel associated with $(\hat{\theta}_k)_{k\geq 0}$. Suppose that Assumptions~\ref{assump:1} and \ref{assump:grad} hold and further assume that $\sup_{x,\hat{x} \in \mathcal{X}} \|x-\hat{x}\|\leq D$, for some $D<\infty$. Suppose that Assumption~\ref{asmp:interp} holds and set $V_p(\theta)$ and $  \hat{V}_p(\theta)$ as in \eqref{eqn:v_vhat_new} and \eqref{eqn:v_vhat_new2}, respectively.
      Then, the following inequality holds:
        \begin{align*}
        \gamma =& \sup_{\theta \in \mathbb{R}^d} \frac{\|P(\theta, \cdot) - \hat{P}(\theta, \cdot)\|_{V_p}}{\hat{V}_p(\theta)} 
        \leq \frac{ C_\gamma } {n } ,
    \end{align*}
    where $C_\gamma>0$ is a constant whose explicit expression is provided in the proof.
\end{lemma}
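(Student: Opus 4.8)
The plan is to bound the $V_p$-norm of the difference between the two one-step kernels $P(\theta,\cdot)$ and $\hat P(\theta,\cdot)$ pointwise in $\theta$, and then divide by $\hat V_p(\theta)=(1+\|\theta-\vartheta_\star\|^2)^{(1+p)/2}$ and take the supremum. Both kernels are push-forwards of the \emph{same} noise law: $P(\theta,\cdot)=\mathrm{Law}\bigl(\theta-\eta\nabla F(\theta,X_n)+\sigma\xi\bigr)$ and $\hat P(\theta,\cdot)=\mathrm{Law}\bigl(\theta-\eta\nabla F(\theta,\hat X_n)+\sigma\xi\bigr)$, i.e.\ translates of one another: if $\mu_\theta$ denotes the stable density centered at $m(\theta):=\theta-\eta\nabla F(\theta,X_n)$ and $\hat\mu_\theta$ the one centered at $\hat m(\theta):=\theta-\eta\nabla F(\theta,\hat X_n)$, then $\hat\mu_\theta$ is $\mu_\theta$ shifted by $\Delta(\theta):=\hat m(\theta)-m(\theta)=\eta\bigl(\nabla F(\theta,X_n)-\nabla F(\theta,\hat X_n)\bigr)$. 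So the first step is to control the shift: since $X_n,\hat X_n$ differ in one coordinate, $\nabla F(\theta,X_n)-\nabla F(\theta,\hat X_n)=\tfrac1n\bigl(\nabla f(\theta,x_i)-\nabla f(\theta,\hat x_i)\bigr)$, and by Assumption~\ref{assump:1} (with $\theta=\hat\theta$) this is at most $\tfrac1n K_2\|x_i-\hat x_i\|(2\|\theta\|+1)\le \tfrac1n K_2 D(2\|\theta\|+1)$. Hence $\|\Delta(\theta)\|\le \tfrac{\eta K_2 D}{n}(2\|\theta\|+1)=:\tfrac{c}{n}(2\|\theta\|+1)$.

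The second step is a quantitative perturbation estimate for the stable density under a translation, measured in $V_p$-norm. Write $g$ for the rotationally invariant $\alpha$-stable density (with scale $\sigma$). Then $\|P(\theta,\cdot)-\hat P(\theta,\cdot)\|_{V_p}=\sup_{|\phi|\le V_p}\bigl|\int \phi(y)\bigl(g(y-m(\theta))-g(y-\hat m(\theta))\bigr)\,dy\bigr|\le \int V_p(y)\,|g(y-m(\theta))-g(y-\hat m(\theta))|\,dy$. Substituting $z=y-m(\theta)$ this becomes $\int V_p(z+m(\theta))\,|g(z)-g(z-\Delta)|\,dz$ with $\Delta=\Delta(\theta)$. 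The idea is to write $g(z)-g(z-\Delta)=\int_0^1 \nabla g(z-s\Delta)\cdot(-\Delta)\,ds$ (the stable density is smooth with integrable gradient), giving an upper bound $\|\Delta\|\int_0^1\!\!\int V_p(z+m(\theta))\,\|\nabla g(z-s\Delta)\|\,dz\,ds$. One then needs two facts about $\alpha$-stable densities: that $\int \|\nabla g(z)\|\,dz<\infty$ (a dimension-dependent constant, finite for all $\alpha\in(0,2]$), and — to absorb the weight $V_p(z+m(\theta))\approx (1+\|z\|^2+\|m(\theta)\|^2)^{p/2}$ — that $\int (1+\|z\|)^p\|\nabla g(z)\|\,dz<\infty$, which holds precisely because $p<\alpha-1$ so the moment of order $p$ against $\|\nabla g\|$ (which decays like $\|z\|^{-\alpha-1-d}$ at infinity) converges; this is exactly where the hypothesis $p<\alpha-1$ is used, and also where the dimension dependence enters. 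Using $V_p(z+m(\theta))\le (1+\|z\|)^p\,(1+\|m(\theta)\|)^p$ (up to a constant from sub-additivity of $t\mapsto t^{p/2}$), we get $\|P(\theta,\cdot)-\hat P(\theta,\cdot)\|_{V_p}\le \|\Delta(\theta)\|\,(1+\|m(\theta)\|)^p\, C_{\alpha,d,p,\sigma}$.

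The third step is to combine and take the supremum. We have $\|\Delta(\theta)\|\le \tfrac{c}{n}(2\|\theta\|+1)$ and $1+\|m(\theta)\|\le 1+\|\theta\|+\eta\|\nabla F(\theta,X_n)\|$; by Assumptions~\ref{assump:1}--\ref{assump:grad} the gradient grows at most linearly in $\|\theta\|$ (e.g.\ $\|\nabla f(\theta,x)\|\le\|\nabla f(0,x)\|+K_1\|\theta\|\le B+K_1\|\theta\|$, or via the $K_2$-term and $\|x-\hat x\|$ boundedness; either way $\|\nabla F(\theta,X_n)\|\le B+K_1\|\theta\|$ after averaging), so $1+\|m(\theta)\|\le C'(1+\|\theta\|)$ for an explicit $C'$. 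Therefore $\|P(\theta,\cdot)-\hat P(\theta,\cdot)\|_{V_p}\le \tfrac{C''}{n}(1+\|\theta\|)^{1+p}$, and dividing by $\hat V_p(\theta)=(1+\|\theta-\vartheta_\star\|^2)^{(1+p)/2}\ge c''(1+\|\theta\|)^{1+p}$ (again by sub-additivity / triangle inequality, absorbing $\|\vartheta_\star\|$ into the constant) the $\theta$-dependence cancels, yielding $\gamma\le C_\gamma/n$ with $C_\gamma$ assembled from $\eta, K_1, K_2, D, B, \sigma, \alpha, d, p$ and $\|\vartheta_\star\|$.

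The main obstacle I expect is the second step: establishing the weighted-gradient integrability bound $\int (1+\|z\|)^p\,\|\nabla g(z)\|\,dz<\infty$ for the rotationally invariant $\alpha$-stable density in $\mathbb{R}^d$ with a clean, explicit constant, and tracking its dependence on $d$ (which is what ultimately produces the advertised $d^{(\alpha+1)/2}$ scaling). This requires the known polynomial tail asymptotics and smoothness/derivative bounds for multivariate stable densities (e.g.\ from the series representations or from \citet{ST1994}), and some care near the origin where $\nabla g$ is bounded but the change-of-variables region $z-s\Delta$ must be handled uniformly in $s\in[0,1]$ and in $\Delta$ — though since we only need $\|\Delta(\theta)\|$ as a prefactor and the integral of $\|\nabla g\|$ is translation-invariant, the uniformity is not actually problematic once one bounds $V_p(z+m(\theta))$ by a product that is independent of $s$. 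A secondary, purely bookkeeping, difficulty is keeping all the constants explicit so that $C_\gamma$ can be stated in the proof as claimed.
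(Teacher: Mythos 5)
Your proposal is sound and reaches the same conclusion, but the central estimate is obtained by a genuinely different device than the paper's. You exploit the fact that $P(\theta,\cdot)$ and $\hat P(\theta,\cdot)$ are translates of the same stable density $g$ and bound $\int V_p(y)\,|g(y-m(\theta))-g(y-\hat m(\theta))|\,\mathrm{d}y$ by $\|\Delta(\theta)\|\int_0^1\!\int V_p(z+m(\theta))\,\|\nabla g(z-s\Delta)\|\,\mathrm{d}z\,\mathrm{d}s$, which requires smoothness of the multivariate stable density and a weighted gradient integrability bound $\int(1+\|z\|)^p\|\nabla g(z)\|\,\mathrm{d}z<\infty$. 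The paper instead conditions on the Gaussian scale-mixture representation of the stable law (Proposition 2.5.2 of \citet{ST1994}): conditionally on the positive $(\alpha/2)$-stable mixing variable $\phi$, both kernels are Gaussians with the same covariance $\phi\sigma^2 \mathrm{I}_d$, so one can apply the weighted Pinsker-type inequality $\|\mu-\nu\|_{V}\le\sqrt{2}\{\mu(V^2)+\nu(V^2)\}^{1/2}\mathrm{KL}^{1/2}(\mu\mid\nu)$ (Lemma~\ref{lem:vnorm_kl}), use the closed-form Gaussian KL (which is where Assumption~\ref{assump:1} and the $1/n$ factor enter, exactly as in your step one), and then integrate out $\phi$ via explicit fractional moments $\mathbb{E}[\phi^{-1/2}]$, $\mathbb{E}[\phi^{(p-1)/2}]$. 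The payoff of the paper's route is that every constant reduces to Gamma-function expressions (Gaussian moments of order $2p$ and fractional moments of the subordinator), giving the explicit $C_\gamma$ and the advertised dimension dependence with no analysis of the stable density itself; the payoff of your route is conceptual economy (translation plus a smoothing bound) and no reliance on the KL/Pinsker machinery, at the price of the obstacle you correctly identify: producing an explicit, dimension-tracked constant for $\int(1+\|z\|)^p\|\nabla g(z)\|\,\mathrm{d}z$. Note that this last integral can itself be made explicit by the very same subordination trick, writing $\nabla g(z)=\int_0^\infty \nabla\varphi_{\phi\sigma^2}(z)\,p(\phi)\,\mathrm{d}\phi$ with $\varphi_v$ the $\mathcal{N}(0,v\mathrm{I}_d)$ density, whence the weighted integral reduces to Gaussian moments times $\mathbb{E}[\phi^{(p-1)/2}]$ — at which point your argument and the paper's essentially converge.

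Two small corrections. First, the hypothesis $p<\alpha-1$ is not what makes $\int(1+\|z\|)^p\|\nabla g(z)\|\,\mathrm{d}z$ finite: with $\|\nabla g(z)\|\asymp\|z\|^{-(d+\alpha+1)}$ at infinity, that integral converges for all $p<\alpha+1$, so your condition is amply sufficient but not ``exactly where it is used''; in the paper $p<\alpha-1$ is needed so that $\hat V_p=V_{1+p}$ has exponent $1+p<\alpha$ (finiteness of the stable moment in the drift condition of Lemma~\ref{lem:erg_new_p2}), while inside the proof of Lemma~\ref{lem:gamma_gd_new} only $2p<1$ is invoked for sub-additivity. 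Second, in your final cancellation you should (as the paper does) either split on $\|\theta-\vartheta_\star\|\le 1$ versus $>1$ or argue as you sketch via $1+\|\theta\|\le(1+\|\vartheta_\star\|)(1+\|\theta-\vartheta_\star\|)$; both are routine, and neither affects the $1/n$ rate.
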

This lemma shows that the transition kernels $P$ and $\hat{P}$ will get closer with the rate of $1/n$ as the number of data points $n$ increases. We note that we place the bounded data assumption in Lemma~\ref{lem:gamma_gd_new} for notational clarity. This condition can be replaced by more general sub-Gaussian (or related) data assumptions, where in that case our bounds would hold in high probability over the data samples.

\paragraph{Estimation of the Lyapunov functions and ergodicity of the Markov chains.}

As the second step, we show that, under our choice of $V_p$ and $\hat{V}_p$, the $V$-uniform ergodicity \eqref{eqn:perturb_erg} and the Lyapunov condition \eqref{eqn:fosterlyap} hold. We start by the ergodicity condition, whose proof is simpler.

\begin{lemma}
\label{lem:erg}

Let $P$ be the transition kernel associated with the Markov process $(\theta_k)_{k\geq 0}$ (i.e., \eqref{eqn:gd}). Suppose that Assumptions~\ref{assump:1} and \ref{assump:grad} hold, $\eta < \min\{m/K_1^2, 1/m\}$. 
Further assume that Assumption~\ref{asmp:interp} holds and set $V_p(\theta) := (1 + \|\theta - \vartheta_\star\|^2)^{p/2}$, where $\vartheta_\star$ is defined in Assumption~\ref{asmp:interp} and $p\in (0, 1]$.
Then, the process \eqref{eqn:gd} admits a unique invariant measure $\pi$ such that the following inequality holds for some constants $c >0$, $\rho \in (0,1)$:
    \begin{align*}
    \left\|P^k(\theta, \cdot)-\pi\right\|_{V_p} \leq c V_p(\theta) \rho^k, \quad \theta \in \mathbb{R}^d, k \in \mathbb{N} .   
    \end{align*}
\end{lemma}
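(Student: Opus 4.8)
The plan is to establish $V_p$-uniform ergodicity of the noisy GD chain \eqref{eqn:gd} by verifying the two ingredients that are classically known (see \citet{Meyn1993,douc2018markov}) to be sufficient: (i) a Foster--Lyapunov drift condition of the form $(PV_p)(\theta) \le \lambda V_p(\theta) + b\,\mathbf{1}_{\mathcal{C}}(\theta)$ for some $\lambda \in (0,1)$, $b<\infty$ and a ``small'' (here compact) set $\mathcal{C}$, and (ii) a minorization condition on $\mathcal{C}$: there exist $\varepsilon>0$ and a probability measure $\nu$ with $P(\theta,\cdot) \ge \varepsilon \nu(\cdot)$ for all $\theta \in \mathcal{C}$. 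Once both hold, the standard geometric ergodicity theorem for $V$-norms (e.g. \citet[Theorem 16.0.1]{Meyn1993}) yields exactly the claimed bound $\|P^k(\theta,\cdot)-\pi\|_{V_p} \le c V_p(\theta)\rho^k$, together with existence and uniqueness of the invariant measure $\pi$.

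First I would prove the drift condition. Write $\theta^+ = \theta - \eta \nabla F(\theta,X_n) + \sigma\xi$ for one step of \eqref{eqn:gd}, and set $u := \theta - \vartheta_\star$. Using Assumption~\ref{asmp:interp} to write $\nabla F(\theta,X_n) = \nabla F(\theta,X_n) - \nabla F(\vartheta_\star,X_n)$, the dissipativity in Assumption~\ref{assump:grad} gives $\langle \nabla F(\theta,X_n), u\rangle \ge m\|u\|^2 - K$, while Assumption~\ref{assump:1} (pseudo-Lipschitzness, specialized to $x=\hat{x}$, so it reduces to genuine Lipschitzness with constant $K_1$) gives $\|\nabla F(\theta,X_n)\| \le \|\nabla F(\vartheta_\star,X_n)\| + K_1\|u\| = K_1\|u\|$. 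Hence the deterministic part contracts: $\|\theta-\eta\nabla F(\theta,X_n)-\vartheta_\star\|^2 \le (1 - 2\eta m + \eta^2 K_1^2)\|u\|^2 + 2\eta K$, and the condition $\eta < \min\{m/K_1^2, 1/m\}$ ensures the factor $1-2\eta m + \eta^2 K_1^2 =: \bar\rho \in (0,1)$. Then $1 + \|\theta^+-\vartheta_\star\|^2 \le \bar\rho(1+\|u\|^2) + (\text{const}) + 2\sigma\langle (\text{drift}), \xi\rangle + \sigma^2\|\xi\|^2$, and I would take the $p/2$-th power (using concavity of $t\mapsto t^{p/2}$ for $p\le 1$, so subadditivity applies) and then the conditional expectation over $\xi$. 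The crucial point for the heavy-tailed case is that $\mathbb{E}\|\xi\|^{p} < \infty$ precisely because $p < \alpha$ (and $p \le 1 < \alpha$ is assumed), so $\mathbb{E}(1 + \sigma^2\|\xi\|^2)^{p/2} \le \mathbb{E}(1 + \sigma^p\|\xi\|^p) < \infty$ — this is where the restriction $p\in(0,1]$ together with $\alpha>1$ is used, and it is exactly why the ordinary $V(\theta)=\|\theta\|^2$-type Lyapunov function (which would need a second moment) fails. Collecting terms, $(PV_p)(\theta) \le \bar\rho^{p/2} V_p(\theta) + (\text{finite constant})$, i.e. a global drift inequality, which immediately restricts to $(PV_p)(\theta) \le \lambda V_p(\theta) + b\mathbf{1}_{\mathcal{C}}$ on the sublevel sets of $V_p$.

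For the minorization on a compact set $\mathcal{C}$, I would use that the transition density of \eqref{eqn:gd} at $\theta$ is $y \mapsto \sigma^{-d} p_\alpha((y - \theta + \eta\nabla F(\theta,X_n))/\sigma)$ where $p_\alpha$ is the (everywhere positive, continuous) density of the rotationally invariant $\alpha$-stable law. Since $\mathcal{C}$ is compact and $\nabla F(\cdot,X_n)$ is continuous (hence bounded on $\mathcal{C}$ by Assumption~\ref{assump:1}), the map $\theta \mapsto $ (the shift) ranges over a compact set, so the density is uniformly bounded below by a positive constant on $\mathcal{C}\times \mathcal{C}'$ for any fixed compact $\mathcal{C}'$; this yields $P(\theta,\cdot) \ge \varepsilon\,\mathrm{Unif}(\mathcal{C}')$ on $\mathcal{C}$, establishing the minorization. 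The main obstacle I anticipate is not any single one of these steps but the bookkeeping needed to make the constants genuinely $\theta$-independent and to get a clean subadditive handling of the $p/2$-th power with the stable noise — in particular, verifying carefully that the cross term $2\sigma\langle (\text{deterministic drift}), \xi\rangle$ does not spoil the drift when $\alpha$ is close to $1$ (it has a finite $p$-th moment but one should avoid accidentally needing a first or second moment of $\xi$); the concavity/subadditivity route circumvents this, but it needs to be written out with care. Once the drift and minorization are in place, invoking the $V$-geometric ergodicity theorem is routine and delivers the stated conclusion.
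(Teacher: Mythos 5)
Your proposal is correct and takes essentially the same route as the paper: a Foster--Lyapunov drift inequality for $V_p$ obtained from the dissipativity/pseudo-Lipschitz contraction of the deterministic map (centered at $\vartheta_\star$) together with $\mathbb{E}\|\xi_1\|^p<\infty$ for $p<\alpha$, then a drift-plus-compact-small-set argument and the standard $V$-geometric ergodicity theorem (the paper simply cites \citet[Appendix A]{lu2022central} and \citet[Theorem 6.3]{meyn1992stability} for the minorization/conclusion step that you verify by hand via the positive stable density). The cross-term concern you flag is avoided in the paper by applying the triangle inequality at the level of norms, namely $\|\theta_1-\vartheta_\star\|^p\le\|\theta-\eta\nabla F(\theta,X_n)-\vartheta_\star\|^p+\sigma^p\|\xi_1\|^p$, before bounding only the squared deterministic part---exactly the subadditive handling you anticipate, so no second (or even first) moment of $\xi_1$ is ever needed.
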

This result shows that $(\theta_k)_{k\geq 0}$ is $V_p$-uniformly ergodic even when the loss can be non-convex.

We now proceed to the final step, where we show that $\hat{V}_p = V_{1+p}$ (for $p<1$) satisfies the Lyapunov condition \eqref{eqn:fosterlyap}. For simplicity, we prove the condition for $V_p$ for $p \geq 1$, which is equivalent to $\hat{V}_p$ for $p<1$. 
\begin{lemma}
\label{lem:erg_new_p2}
Let $\alpha \in (1,2)$, $P$ be the transition kernel associated with the Markov process $(\theta_k)_{k\geq 0}$ (i.e., \eqref{eqn:gd}) and $\hat{P}$ be the transition kernel associated with $(\hat{\theta}_k)_{k\geq 0}$. Suppose that Assumptions~\ref{assump:1} and \ref{assump:grad} hold, and the step-size is chosen as $\eta < \min\{m/(2K_1^2), 1/m,1\}$. 
Suppose that Assumption~\ref{asmp:interp} holds and set $V_p(\theta) := (1 + \|\theta - \vartheta_\star\|^2)^{p/2}$, where $\vartheta_\star$ is defined in Assumption~\ref{asmp:interp} and $p\in [1, \alpha)$.
   Then, the following inequalities hold:
   \begin{align*}
            (PV_p)(\theta)  \leq \beta_p  V_p(\theta) + H_p, \qquad 
            (\hat{P}V_p)(\theta)  \leq \beta_p  V_p(\theta) + H_p,
    \end{align*}
    where
        $\beta_p :=  1 - \frac{mp\eta}{4}$ %
    and $H_p$ is explicitly given in the proof. 
\end{lemma}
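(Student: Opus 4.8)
The plan is to establish the drift (Foster--Lyapunov) inequality $(PV_p)(\theta)\le\beta_p V_p(\theta)+H_p$ by a direct computation on one step of the recursion \eqref{eqn:gd}, and then observe that the identical argument applies verbatim to $\hat P$ since $\hat X_n$ is also a dataset satisfying Assumptions~\ref{assump:1} and~\ref{assump:grad} (with the same constants) and Assumption~\ref{asmp:interp} still holds with the same $\vartheta_\star$. So the real content is a single-chain estimate. Write $\theta_+ = \theta - \eta\nabla F(\theta,X_n) + \sigma\xi$. Without loss of generality shift coordinates so that $\vartheta_\star = 0$ (this is what lets us use Assumption~\ref{asmp:interp}: $\nabla F(\vartheta_\star,X_n)=0$, hence $\nabla F(\theta,X_n) = \nabla F(\theta,X_n)-\nabla F(0,X_n)$ and the dissipativity bound $\langle \nabla F(\theta,X_n),\theta\rangle \ge m\|\theta\|^2 - K$ together with $\|\nabla F(\theta,X_n)\| \le K_1\|\theta\|$ both become available by averaging Assumptions~\ref{assump:1} and~\ref{assump:grad} over the batch). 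Then $\|\theta - \eta\nabla F(\theta,X_n)\|^2 = \|\theta\|^2 - 2\eta\langle\nabla F(\theta,X_n),\theta\rangle + \eta^2\|\nabla F(\theta,X_n)\|^2 \le \|\theta\|^2 - 2\eta(m\|\theta\|^2 - K) + \eta^2 K_1^2\|\theta\|^2 = (1 - 2m\eta + \eta^2 K_1^2)\|\theta\|^2 + 2\eta K$, and the step-size condition $\eta < m/(2K_1^2)$ gives $1 - 2m\eta + \eta^2K_1^2 \le 1 - \tfrac{3}{2}m\eta$, so the deterministic part of the iterate contracts: $1 + \|\theta - \eta\nabla F(\theta,X_n)\|^2 \le (1 - \tfrac32 m\eta)(1+\|\theta\|^2) + (2\eta K + \tfrac32 m\eta)$.

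Next I would handle the noise. Since $V_p(\theta) = (1+\|\theta\|^2)^{p/2}$, I need $\mathbb{E}\big[(1 + \|a + \sigma\xi\|^2)^{p/2}\big]$ where $a := \theta - \eta\nabla F(\theta,X_n)$. The key subadditivity fact is that for $p\in[1,2]$ (hence $p/2\le 1$) the map $t\mapsto t^{p/2}$ is concave, so $(1 + \|a+\sigma\xi\|^2)^{p/2} \le (1 + 2\|a\|^2 + 2\sigma^2\|\xi\|^2)^{p/2}$; a cleaner route is to use $(u+v)^{p/2} \le u^{p/2} + v^{p/2}$ for $u,v\ge 0$ and $p\le 2$, after bounding $\|a+\sigma\xi\|^2 \le (\|a\| + \sigma\|\xi\|)^2$. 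Either way one arrives at something like $\mathbb{E}[V_p(\theta_+)] \le (1 + \|a\|^2)^{p/2}\cdot(\text{something} \to 1) + (\text{lower order in }\|a\|)\cdot \mathbb{E}[\text{powers of }\|\xi\| \le p < \alpha]$, where crucially $\mathbb{E}\|\xi\|^q < \infty$ for all $q < \alpha$ because $\xi$ is $\alpha$-stable and we work in the regime $p < \alpha$. More carefully: expand $(1+\|a\|^2)^{p/2}$ using the contraction above to get a factor $(1-\tfrac32 m\eta)^{p/2} \le 1 - \tfrac{p}{2}\cdot\tfrac{3}{2}m\eta\cdot(\text{const}) $; absorbing constants and using $\eta<1$ one extracts the stated $\beta_p = 1 - \tfrac{mp\eta}{4}$ (the $\tfrac14$ rather than $\tfrac38$ leaving slack to swallow the cross terms and the additive noise contribution), and all the leftover $\theta$-independent (or sub-$V_p$-growth, then Young-ed into $\beta_p V_p + \text{const}$) terms collect into $H_p$, which will depend on $K,K_1,m,\eta,\sigma,p$ and the moments $\mathbb{E}\|\xi\|^q$, $q\le p$.

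The main obstacle I anticipate is the bookkeeping needed to turn the contraction of $\|a\|^2$ into a genuine multiplicative contraction of $V_p = (1+\|\cdot\|^2)^{p/2}$ by the clean factor $1 - \tfrac{mp\eta}{4}$, while simultaneously keeping every residual term either constant or of strictly smaller $\theta$-growth than $V_p$ (so that Young's inequality, e.g. $c\|\theta\|^{p'} \le \tfrac{mp\eta}{8}\|\theta\|^{p} + C$ for $p' < p$, finishes the job). The heavy-tailed nature of $\xi$ is exactly why we must stay with exponents $p < \alpha$ and why $\alpha > 1$ matters — it guarantees enough finite moments of $\xi$ to make these residual terms finite. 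A secondary, purely technical point is justifying $(u+v)^{p/2}\le u^{p/2}+v^{p/2}$ and the binomial-type expansions uniformly in $\theta$; these are elementary but need $p\le 2$, consistent with the hypothesis $p\in[1,\alpha)\subseteq[1,2)$. Once the single-chain bound is in hand with explicit $\beta_p,H_p$ depending only on $(K_1,K,m,\eta,\sigma,p,d,\alpha)$ — not on the dataset — the claim for $\hat P$ follows immediately because $\hat X_n$ enters the argument only through these same constants.
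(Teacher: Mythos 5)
Your proposal is correct in substance, but it proves the drift inequality by a genuinely different route than the paper. The paper decomposes one step as a drift integral $\int_0^\eta \langle \nabla V_p(\theta+rb(\theta)),b(\theta)\rangle\,\mathrm{d}r$ plus a noise increment, and controls the latter by applying It\^{o}'s formula to the rotationally invariant $\alpha$-stable process together with the fractional-Laplacian bound of Lemma~\ref{lem:fraclap}, then uses Young's inequality to absorb the resulting $\|\theta-\vartheta_\star\|^{p-1}$ term; the noise thus enters through the generator and contributes at scale $\sigma^\alpha$, at the price of the $\Gamma(\frac{d+\alpha}{2})/\Gamma(\frac{d}{2})$ factors that drive the $d^{(\alpha+1)/2}$ dependence discussed after Theorem~\ref{thm:dp_gd}. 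You instead argue pathwise and elementarily: with $a=\theta-\eta\nabla F(\theta,X_n)$, the dissipativity and pseudo-Lipschitz bounds (available at $\vartheta_\star$ by Assumption~\ref{asmp:interp}) give $1+\|a-\vartheta_\star\|^2\le \lambda(1+\|\theta-\vartheta_\star\|^2)+2\eta(m+K)$ with $\lambda\le 1-\tfrac32 m\eta$ under $\eta\le m/(2K_1^2)$, and then, since $p<\alpha<2$, subadditivity of $t\mapsto t^{p/2}$ applied to $\|a-\vartheta_\star+\sigma\xi\|^2\le\|a-\vartheta_\star\|^2+2\sigma\|a-\vartheta_\star\|\|\xi\|+\sigma^2\|\xi\|^2$ yields
\begin{align*}
(PV_p)(\theta)\;\le\;\lambda^{p/2}V_p(\theta)+(2\eta(m+K))^{p/2}+(2\sigma)^{p/2}\,\mathbb{E}\|\xi_1\|^{p/2}\,\|a-\vartheta_\star\|^{p/2}+\sigma^p\,\mathbb{E}\|\xi_1\|^{p},
\end{align*}
with both noise moments finite because $p<\alpha$; concavity gives $\lambda^{p/2}\le 1-\tfrac{3pm\eta}{4}$, and Young's inequality with budget $\tfrac{pm\eta}{2}$ absorbs the cross term, landing exactly on $\beta_p=1-\tfrac{mp\eta}{4}$. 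This is a valid and complete plan (your first suggestion via $\|a+\sigma\xi\|^2\le 2\|a\|^2+2\sigma^2\|\xi\|^2$ would destroy the contraction by a factor $2^{p/2}$, but you correctly discard it for the cross-term expansion), and the extension to $\hat P$ is immediate for the reason you give. The trade-off is quantitative: since the lemma only fixes $\beta_p$ and leaves $H_p$ to the proof, your argument proves the statement with a different admissible $H_p$ — one whose dimension dependence enters only through $\mathbb{E}\|\xi_1\|^{p}=\mathcal{O}(d^{p/2})$ (better than the paper's $d^{(\alpha+1)/2}$), but which carries an $\mathcal{O}(\sigma^p/(m p\eta))$ term from the Young step, i.e.\ a $1/\eta$ degradation versus the paper's milder $\sigma^{\alpha p}\eta^{1-p}$; substituting your proof would therefore change (indeed partly improve, partly worsen) the constants analyzed in the main text, though not the $(0,\mathcal{O}(1/n))$-DP conclusion.
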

Compared to Lemma~\ref{lem:erg}, the proof of Lemma~\ref{lem:erg_new_p2} is significantly more involved. The difficulty in this result stems from the fact that we need show contraction for higher-order moments of $\theta_k$ since we consider $V_p$ with $p\geq 1$. To overcome this difficulty, we get inspired by recent techniques that have been introduced in stochastic analysis \citep{chen2022approximation}.

\paragraph{Privacy guarantee for noisy GD.}

Equipped with these lemmas, we finally have the following DP-bound for noisy GD.

\begin{theorem}
\label{thm:dp_gd}
Let $p\in (0,\min(\frac1{2},\alpha-1))$ and $\mathcal{A}$ be the noisy GD algorithm given in \eqref{eqn:gd}, such that $\mathcal{A}(X_n) = \theta_k$ for some $k \geq 1$. Suppose that Assumptions~\ref{assump:1}, \ref{assump:grad}, \ref{asmp:interp} hold, $\eta < \min(m/(2K_1^2), 1/m, 1)$, $\sup_{x,\hat{x} \in \mathcal{X}} \|x-\hat{x}\|\leq D <\infty$. Further assume that 
$$\int_{\mathbb{R}^d} (1+ \|\theta - \vartheta_\star\|^2)^{\frac{1+p}{2}} \> p_0(\mathrm{d} \theta) \leq H_{1+p}/(1-\beta_{1+p}),$$ 
where $H_{1+p}$ and $\beta_{1+p}$ are given in Lemma~\ref{lem:erg_new_p2} and $p_0$ is the distribution of $\theta_0$. Then, for any iteration $k$ and noise scale $\sigma >0$,
$\mathcal{A}$ is $(0,\delta)$-DP with 
\begin{align}
    \label{eqn:gd_bound} \delta \leq \frac1{n} \cdot \frac{ c\left(1-\rho^k\right) C_\gamma H_{1+p}}{2(1-\rho)(1-\beta_{1+p})},
\end{align}
where $c$ and $\rho$ are given in Lemma~\ref{lem:erg}, and $C_\gamma$ is given in Lemma~\ref{lem:gamma_gd_new}.
\end{theorem}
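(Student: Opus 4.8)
The plan is to combine the three preparatory lemmas of this section through the Markov-chain perturbation bound of Lemma~\ref{lem:perturb_new}, then to translate the resulting $V$-norm estimate into a total-variation bound and finally into an $(0,\delta)$-DP statement via Proposition~\ref{prop:tv_dp}. Concretely, I would take $V = V_p$ and $\hat{V} = \hat{V}_p = V_{1+p}$ from \eqref{eqn:v_vhat_new}--\eqref{eqn:v_vhat_new2}. Before invoking the lemmas, I would first check that the single exponent $p$ lands in all the required windows: since $p \in (0,\min(\tfrac12,\alpha-1))$ we have $p \in (0,1]$, so Lemma~\ref{lem:erg} applies to $V_p$, while $1+p \in (1,\alpha)$, so Lemma~\ref{lem:erg_new_p2} applies with exponent $1+p$ in place of $p$; the standing step-size condition $\eta < \min(m/(2K_1^2), 1/m, 1)$ is the strongest among those demanded by Lemmas~\ref{lem:erg}, \ref{lem:erg_new_p2} and \ref{lem:gamma_gd_new}, and the bounded-data hypothesis $\sup_{x,\hat{x} \in \mathcal{X}}\|x-\hat{x}\| \leq D$ is exactly what Lemma~\ref{lem:gamma_gd_new} needs, so all three lemmas are in force under the hypotheses of the theorem.

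Next I would verify the three ingredients of Lemma~\ref{lem:perturb_new}. The $V_p$-uniform ergodicity \eqref{eqn:perturb_erg} of $P$ is supplied by Lemma~\ref{lem:erg} together with its constants $c>0$ and $\rho\in(0,1)$. The Foster--Lyapunov drift \eqref{eqn:fosterlyap} for $\hat{P}$ with Lyapunov function $\hat{V}_p$ is supplied by Lemma~\ref{lem:erg_new_p2} applied at exponent $1+p$, which yields $\beta := \beta_{1+p} = 1 - m(1+p)\eta/4 \in (0,1)$ and $H := H_{1+p} \in (0,\infty)$; note that Lemma~\ref{lem:erg_new_p2} establishes the drift for \emph{both} $P$ and $\hat{P}$, so the version for $\hat{P}$ that Lemma~\ref{lem:perturb_new} requires is indeed available. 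Finally, the quantity $\gamma$ of \eqref{eqn:pert_gamma} coincides by construction with the ratio controlled in Lemma~\ref{lem:gamma_gd_new}, hence $\gamma \leq C_\gamma/n$.

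It then remains to evaluate the two remaining constants in Lemma~\ref{lem:perturb_new}. Since the initialization of noisy GD does not depend on the dataset, $\theta_0$ and $\hat{\theta}_0$ share the same law $p_0$, so $\|p_0 - \hat{p}_0\|_{V_p} = 0$ and the transient term vanishes. For $\kappa = \max\{\hat{p}_0(\hat{V}_p),\, H_{1+p}/(1-\beta_{1+p})\}$, the theorem's assumption $\int_{\mathbb{R}^d}(1+\|\theta-\vartheta_\star\|^2)^{(1+p)/2}\, p_0(\mathrm{d}\theta) \leq H_{1+p}/(1-\beta_{1+p})$ forces $\hat{p}_0(\hat{V}_p) \leq H_{1+p}/(1-\beta_{1+p})$, so $\kappa = H_{1+p}/(1-\beta_{1+p})$. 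Plugging everything into Lemma~\ref{lem:perturb_new} gives
\[
\|p_k - \hat{p}_k\|_{V_p} \;\leq\; \frac{c(1-\rho^k)}{1-\rho}\,\gamma\kappa \;\leq\; \frac1n\cdot\frac{c(1-\rho^k)\,C_\gamma\,H_{1+p}}{(1-\rho)(1-\beta_{1+p})},
\]
and since $V_p \geq 1$ the second representation in \eqref{eqn:tv} yields $\mathrm{TV}(\theta_k,\hat{\theta}_k) \leq \tfrac12\|p_k-\hat{p}_k\|_{V_p}$, which is precisely the right-hand side of \eqref{eqn:gd_bound}. As this holds for every neighboring pair $X_n \cong \hat{X}_n$, Proposition~\ref{prop:tv_dp} delivers $(0,\delta)$-DP with the stated $\delta$.

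The theorem itself is therefore essentially an assembly step, and the substantive difficulty is entirely contained in Lemmas~\ref{lem:gamma_gd_new}, \ref{lem:erg} and \ref{lem:erg_new_p2}. The only places where the assembly could go wrong, and hence what I would be most careful about, are (i) making sure the one free parameter $p$ simultaneously satisfies the constraints of all three lemmas once it appears at both exponents $p$ and $1+p$ (in particular that $1+p<\alpha$), and (ii) noticing that the initialization term in Lemma~\ref{lem:perturb_new} drops out because both chains start from the same data-independent law $p_0$; were that not the case, the leading $1/n$ behavior of $\delta$ would be obscured by a transient contribution depending on $\|p_0-\hat{p}_0\|_{V_p}$.
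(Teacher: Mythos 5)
Your proposal is correct and follows essentially the same route as the paper's own proof: it assembles Lemmas~\ref{lem:erg}, \ref{lem:erg_new_p2} (at exponent $1+p$) and \ref{lem:gamma_gd_new} inside the perturbation bound of Lemma~\ref{lem:perturb_new}, uses the common initialization to kill the transient term and the moment assumption to identify $\kappa = H_{1+p}/(1-\beta_{1+p})$, and then passes from the $V_p$-norm to TV (picking up the factor $\tfrac12$) and to $(0,\delta)$-DP via Proposition~\ref{prop:tv_dp}. Your additional checks on the exponent windows and the step-size condition are implicit in the paper but correctly carried out.
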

Let us provide some remarks about this result. Firstly, Theorem~\ref{thm:dp_gd} shows that noisy GD either with heavy-tailed or Gaussian noise (the limit $\alpha \to 2$), and \emph{without projections} will achieve $(0,\delta)$-DP with $\delta = \mathcal{O}( 1/n)$. 
Secondly, let us investigate the constant $H_{1+p}$ in terms of its dependency on the dimension $d$. A quick inspection and book keeping of the constants reveal that $H_{1+p}$ has the following structure:
\begin{align*}
    H_{1+p} = \mathcal{O}\left( \left(\frac{\Gamma(\frac{d+\alpha}{2})  }{\Gamma(\frac{d}{2})}\right)^{p} + \frac{\Gamma(\frac{d+\alpha}{2})  }{\Gamma(\frac{d}{2})} \sqrt{d} + \frac{\Gamma(\frac{d+\alpha}{2})  }{\Gamma(\frac{d}{2})}  \frac{\Gamma(\frac{d+p}{2}) }{\Gamma(\frac{d}{2})} \right).
\end{align*}
By using Gautschi's inequality on the ratio of gamma functions\footnote{Let $x$ be a positive real number, and let $s \in(0,1)$. Then, Gautschi's inequality states that:
$
x^{1-s}<\frac{\Gamma(x+1)}{\Gamma(x+s)}<(x+1)^{1-s} 
$.
} \citep{gautschi1959some}, we have the following growth rates for large $d$:
\begin{align*}
\frac{\Gamma(\frac{d+\alpha}{2})  }{\Gamma(\frac{d}{2})} = \mathcal{O}\left(d^{\alpha/2}\right), \qquad \frac{\Gamma(\frac{d+p}{2}) }{\Gamma(\frac{d}{2})} = \mathcal{O}\left(d^{p/2}\right).
\end{align*}
By combining these estimates and keeping the leading order term, we observe that $H_{1+p} = \mathcal{O}\left(d^{\frac{\alpha +1}{2}}\right)$ as $d \to \infty$. This result suggests a potential benefit of heavy tails: as the noise gets heavier tailed (smaller $\alpha$), the dependency of the bound on $d$ weakens (assuming the other constants that do not have an explicit dependence on $d$ do not grow faster than $H_{1+p}$).  

In the context of generalization bounds, similar observations have been made: smaller $\alpha$ yields a lower generalization bound until $\alpha$ reaches a certain threshold; if we keep decreasing $\alpha$ beyond that threshold, the bound starts increasing \citep{raj2022algorithmic,raj2023algorithmic,pmlr-v235-dupuis24a}. Here, we only focused on the analysis of the dimension dependence of our bound for varying $\alpha$; unfortunately, as opposed to the mentioned generalization error studies, we are not able to analyze the dependence of the whole DP bound on $\alpha$, as we have two non-explicit constants, which may or may not depend on $\alpha$.

\subsection{DP guarantees without Assumption~\ref{asmp:interp}}

In this section, we will illustrate how the theory can be developed without necessitating Assumption~\ref{asmp:interp}. Essentially, Assumption~\ref{asmp:interp} provides us a point $\vartheta_\star$ which is a stable point of both $ F(\cdot, X_n)$ and $ F(\cdot, \hat{X}_n)$. In the absence of such a point, we can define the following Lyapunov functions: for $0 < p < \min(\frac1{2},\alpha-1)$ and $\alpha \in (1,2)$:
\begin{align}
    \label{eqn:v_vhat_woasmp}
    V_p(\theta) =& \left(1 + \| \theta - \theta_\star \|^2\right)^{p/2}, \\
    \label{eqn:v_vhat_woasmp2}
    \hat{V}_p(\theta) =& \left(1 + \|\theta - \hat{\theta}_\star\|^2\right)^{(1+p)/2},
\end{align}
where $\theta_\star$ and $\hat{\theta}_\star$ are stable points of $F(\cdot,X_n)$ and $F(\cdot,\hat{X}_n)$, respectively.

With this choice of $V_p$, Lemma~\ref{lem:erg} holds with the exact proof strategy. Similarly, by using the exact proof strategy of Lemma~\ref{lem:erg_new_p2}, we can show that $(\hat{P}\hat{V}_p)(\theta)  \leq \beta_p  \hat{V}_p(\theta) + H_{1+p}$ (with the exact same constants) for this new choice of $\hat{V}_p$.

On the other hand, the proof of Lemma~\ref{lem:gamma_gd_new} needs to be adapted as $V_p$ and $\hat{V}_p$ are interacting. The following result shows that the exact same conclusion of Lemma~\ref{lem:gamma_gd_new} can be obtained with a different constant, which ultimately indicates that the DP result in Theorem~\ref{thm:dp_gd} does not require Assumption~\ref{asmp:interp}.

\begin{lemma}
\label{lem:gamma_gd_withoutasmp}
      Let $\alpha \in (1,2)$, $p\in (0,\min(\frac1{2},\alpha-1))$, $P$ be the transition kernel associated with the Markov process $(\theta_k)_{k\geq 0}$ in \eqref{eqn:gd} and $\hat{P}$ be the transition kernel associated with $(\hat{\theta}_k)_{k\geq 0}$. Suppose that Assumptions~\ref{assump:1} and \ref{assump:grad} hold and further assume that $\sup_{x,\hat{x} \in \mathcal{X}} \|x-\hat{x}\|\leq D$, for some $D<\infty$. Suppose that Assumption~\ref{asmp:interp} holds and set $V_p(\theta)$ and $  \hat{V}_p(\theta)$ as in \eqref{eqn:v_vhat_woasmp} and \eqref{eqn:v_vhat_woasmp2}, respectively.
      Then, the following inequality holds:
        \begin{align*}
        \gamma =& \sup_{\theta \in \mathbb{R}^d} \frac{\|P(\theta, \cdot) - \hat{P}(\theta, \cdot)\|_{V_p}}{\hat{V}_p(\theta)} 
        \leq \frac{ \hat{C}_\gamma } {n } ,
    \end{align*}
    where $\hat{C}_\gamma>0$ is a finite constant.
\end{lemma}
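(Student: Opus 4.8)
The plan is to mimic the proof of Lemma~\ref{lem:gamma_gd_new}, but to carefully track the fact that now $V_p$ is centered at $\theta_\star$ while $\hat V_p$ is centered at $\hat\theta_\star$, so the cancellation of the $\theta$-dependent factors is no longer exact. First I would fix $\theta \in \mathbb{R}^d$ and write the two one-step kernels as push-forwards of a common $\alpha$-stable noise variable $\sigma\xi$: namely $P(\theta,\cdot) = \mathrm{Law}(\theta - \eta\nabla F(\theta,X_n) + \sigma\xi)$ and $\hat P(\theta,\cdot) = \mathrm{Law}(\theta - \eta\nabla F(\theta,\hat X_n) + \sigma\xi)$. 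Since these two laws are translates of the same stable density $q_\sigma$ by the two drift vectors, the integrand $\int f\,(P - \hat P)(\theta,\mathrm{d} y)$ with $|f|\le V_p$ can be bounded by $\int V_p(y)\,|q_\sigma(y - \mu_1) - q_\sigma(y - \mu_2)|\,\mathrm{d} y$, where $\mu_1, \mu_2$ are the two post-drift means. The displacement is $\|\mu_1 - \mu_2\| = \eta\|\nabla F(\theta,X_n) - \nabla F(\theta,\hat X_n)\|$, which by Assumption~\ref{assump:1} (the pseudo-Lipschitz bound applied term-by-term over the batch, using that $X_n$ and $\hat X_n$ differ in one coordinate with $\|x_i - \hat x_i\|\le D$) is $\mathcal{O}\big(\frac{D}{n}(\|\theta\| + 1)\big)$. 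This is exactly the same displacement estimate as in Lemma~\ref{lem:gamma_gd_new}.

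Next I would bound $\int V_p(y)\,|q_\sigma(y-\mu_1) - q_\sigma(y-\mu_2)|\,\mathrm{d} y$ by (displacement) $\times$ (a Lipschitz-type / total-variation-weighted modulus of $q_\sigma$ against the weight $V_p$). Concretely, one writes $q_\sigma(y-\mu_1) - q_\sigma(y-\mu_2) = \int_0^1 \nabla q_\sigma(y - \mu_2 - t(\mu_1-\mu_2))\cdot(\mu_1 - \mu_2)\,\mathrm{d} t$ and integrates; the key analytic input is that the rotationally invariant $\alpha$-stable density is smooth with $\int (1+\|z\|^2)^{p/2}\,\|\nabla q_\sigma(z)\|\,\mathrm{d} z < \infty$ for $p < \alpha - 1$ (this is precisely where the constraint $p < \min(\tfrac12,\alpha-1)$ and $\alpha\in(1,2)$ enters — the stable density's gradient decays like $\|z\|^{-\alpha-1-d}$, so it is integrable against a weight growing like $\|z\|^p$ iff $p<\alpha+1$, and the $p$-th moment bound needs $p<\alpha$; the tighter $p<\alpha-1$ gives the clean shift of the $V_p$ weight from $y$ to the base point). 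After a change of variables, $\int V_p(y)\|\nabla q_\sigma(y - c)\|\,\mathrm{d} y \lesssim V_{p}(c) \cdot L_{\alpha,\sigma,d,p}$ for a finite constant $L_{\alpha,\sigma,d,p}$, uniformly over $c$ ranging over the segment between $\mu_1$ and $\mu_2$; and $V_p$ evaluated at either post-drift mean is $\mathcal{O}(V_p(\theta) + V_p(\text{const}))$ by Assumption~\ref{assump:grad} (the drift moves $\theta$ by at most $\mathcal{O}(1+\|\theta\|)$). Combining, $\|P(\theta,\cdot) - \hat P(\theta,\cdot)\|_{V_p} \le \frac{\hat C}{n}\,(1 + \|\theta\|)\,V_p(\theta)$ for a finite $\hat C$ depending on $\eta, D, K_1, K_2, B, m, \sigma, d, p, \alpha$.

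Finally I would divide by $\hat V_p(\theta) = (1 + \|\theta - \hat\theta_\star\|^2)^{(1+p)/2}$ and take the supremum over $\theta$. Since $V_p(\theta) = (1+\|\theta-\theta_\star\|^2)^{p/2}$ and $\theta_\star, \hat\theta_\star$ are both fixed finite points, there is a constant $c_\star = c_\star(\|\theta_\star - \hat\theta_\star\|)$ with $(1+\|\theta\|)\,V_p(\theta) \le c_\star\,(1+\|\theta-\hat\theta_\star\|^2)^{(1+p)/2} = c_\star\,\hat V_p(\theta)$ for all $\theta$ — this is the elementary inequality $(1+\|\theta\|)(1+\|\theta-\theta_\star\|^2)^{p/2} = \mathcal{O}((1+\|\theta-\hat\theta_\star\|^2)^{(1+p)/2})$ obtained by comparing growth rates (both sides grow like $\|\theta\|^{1+p}$) and absorbing the bounded-region discrepancy into the constant. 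This yields $\gamma \le \hat C_\gamma / n$ with $\hat C_\gamma = \hat C\, c_\star < \infty$.

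The main obstacle I expect is the second step: controlling $\int V_p(y)\,|q_\sigma(y-\mu_1) - q_\sigma(y-\mu_2)|\,\mathrm{d} y$ with a constant that is (i) finite — which requires quantitative tail and gradient bounds on the rotationally invariant $\alpha$-stable density in $d$ dimensions (these are standard but need to be invoked carefully, and they are the source of the $d$-dependence flagged in Theorem~\ref{thm:dp_gd}), and (ii) depends on $\theta$ only through $V_p(\theta)$ up to the displacement factor, so that after dividing by $\hat V_p$ the supremum is finite despite $V_p$ and $\hat V_p$ being centered at different points. Once the mismatch in centers is handled by the elementary comparison inequality above (trading an unbounded-but-controlled growth for a larger constant), the argument closes exactly as in Lemma~\ref{lem:gamma_gd_new}.
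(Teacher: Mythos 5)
Your proposal is correct in outline, but it proves the key kernel-difference estimate by a genuinely different route than the paper. The paper never touches the stable density directly: it reuses the machinery of Lemma~\ref{lem:gamma_gd_new}, namely the scale-mixture-of-Gaussians representation of the rotationally invariant $\alpha$-stable noise (Lemma~\ref{lem:v_norm_cond}), the weighted Pinsker-type bound $\|\mu-\nu\|_{V}\leq \sqrt{2}\left\{\mu(V^2)+\nu(V^2)\right\}^{1/2}\mathrm{KL}^{1/2}(\mu\mid\nu)$ of Lemma~\ref{lem:vnorm_kl}, and the closed-form KL between the two conditional Gaussians, which yields the same $\mathcal{O}\bigl(\tfrac{D}{n}(1+\|\theta\|)\bigr)$ displacement you derive; the only genuinely new work relative to Lemma~\ref{lem:gamma_gd_new} is the re-centering, handled via $\|\theta-\theta_\star\|^{2p}\leq\|\theta-\hat{\theta}_\star\|^{2p}+\|\theta_\star-\hat{\theta}_\star\|^{2p}$ (valid since $2p<1$) together with the data-uniform bound $\|\theta_\star\|,\|\hat{\theta}_\star\|\leq\frac{B+\sqrt{B^{2}+4mK}}{2m}$ from Lemma~\ref{lem:nonconvex:minimizer}. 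You instead bound $\int V_p(y)\,|q_\sigma(y-\mu_1)-q_\sigma(y-\mu_2)|\,\mathrm{d} y$ by a line integral of $\nabla q_\sigma$ weighted by $V_p$, which requires quantitative gradient-decay estimates for the $d$-dimensional isotropic stable density. This is workable and arguably more direct, but it trades away the explicit Gamma-function constants that the mixture-of-Gaussians route produces (and which the paper later uses to read off the $d^{(\alpha+1)/2}$ dependence). Also, your weight-shift step $\int V_p(y)\|\nabla q_\sigma(y-c)\|\,\mathrm{d} y\lesssim V_p(c)L$ holds for any $p\geq 0$ by Peetre's inequality once $\int(1+\|z\|^2)^{p/2}\|\nabla q_\sigma(z)\|\,\mathrm{d} z<\infty$ (i.e.\ $p<\alpha+1$), so the constraint $p<\alpha-1$ is not what makes it work; that constraint is needed so that $\hat{V}_p=V_{1+p}$ with $1+p<\alpha$ satisfies the drift condition of Lemma~\ref{lem:erg_new_p2}, not for this lemma.

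One point you should make explicit: your final constant $c_\star$ depends on $\|\theta_\star-\hat{\theta}_\star\|$, which a priori depends on the particular pair of neighboring datasets. For the lemma to feed into the DP theorem, $\hat{C}_\gamma$ must be uniform over all neighboring pairs; this is precisely why the paper invokes Lemma~\ref{lem:nonconvex:minimizer}, which under Assumption~\ref{assump:grad} bounds every stable point by $\frac{B+\sqrt{B^{2}+4mK}}{2m}$ independently of the data. Adding that one line closes your argument.
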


\section{Privacy of Noisy Stochastic Gradient Descent}
\label{sec:sgd}

We will now analyze the DP properties of SGD, given in the recursion \eqref{eqn:sgd}. We will follow the same three-step recipe that we followed for GD. The intermediate lemmas are similar to the ones that we derived for GD, hence we report them in Appendix~\ref{sec:app:proof_sgd}. The next theorem establishes a DP bound on the noisy SGD with heavy-tailed perturbations. 

\begin{theorem}
\label{thm:dp_sgd}
Let $p\in (0,\min(\frac1{2},\alpha-1))$ and $\mathcal{A}$ be the noisy SGD algorithm given in \eqref{eqn:sgd}, such that $\mathcal{A}(X_n) = \theta_k$ for some $k \geq 1$. Suppose that Assumptions~\ref{assump:1}, \ref{assump:grad}, \ref{asmp:interp} hold, $\eta < \min(m/(2K_1^2), 1/m, 1)$, $\sup_{x,\hat{x} \in \mathcal{X}} \|x-\hat{x}\|\leq D <\infty$. Further assume that 
$$\int_{\mathbb{R}^d} (1+ \|\theta - \vartheta_\star\|^2)^{\frac{1+p}{2}} \> p_0(\mathrm{d} \theta) \leq H_{1+p}/(1-\beta_{1+p}),$$ 
where $H_{1+p}$ and $\beta_{1+p}$ are given in Lemma~\ref{lem:erg_new_p2} and $p_0$ is the distribution of $\theta_0$. Then, for any iteration $k$, batch-size $b$, and noise scale $\sigma >0$,
$\mathcal{A}$ is $(0,\delta)$-DP with 
\begin{align}
    \label{eqn:gd_bound_2} \delta \leq \frac1{n} \cdot \frac{ c\left(1-\rho^k\right) C_\gamma H_{1+p}}{2(1-\rho)(1-\beta_{1+p})},
\end{align} 
where $c$, $\rho$, $C_\gamma$ are defined similarly to the ones in Theorem~\ref{thm:dp_gd}.
\end{theorem}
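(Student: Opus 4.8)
The plan is to follow exactly the same three-step route used for the noisy GD case in Theorem~\ref{thm:dp_gd}, adapting each intermediate lemma to account for the randomized mini-batch gradient $\nabla F_k(\theta, X_n) = \frac1b \sum_{i \in \Omega_k} \nabla f(\theta, x_i)$ in place of the full gradient $\nabla F(\theta, X_n)$. Since $\Omega_k$ is drawn independently and uniformly at each step, the SGD transition kernel $P$ (resp. $\hat{P}$) is a mixture over the $\binom{n}{b}$ batch choices of the GD-type kernels, and the $\alpha$-stable noise is still added at every step. The key structural observation is that all three ingredients of Lemma~\ref{lem:perturb_new} are stable under this mixing: conditioning on $\Omega_k$, the one-step map is again a contraction-in-expectation plus stable noise, so the Lyapunov/ergodicity estimates survive after averaging over the batch.

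Concretely, I would first prove the SGD analogue of Lemma~\ref{lem:erg}: under $\eta < \min\{m/K_1^2, 1/m\}$, the chain \eqref{eqn:sgd} is $V_p$-uniformly ergodic with constants $c, \rho$. This follows because Assumption~\ref{assump:grad} gives, for \emph{every} $x$, the one-point dissipativity inequality, hence for every fixed batch $\Omega$ the batch-gradient $\nabla F_k(\cdot, X_n)$ inherits the same $m, K, K_1$ (averages of dissipative/pseudo-Lipschitz maps remain so with the same constants); the drift-and-minorization argument of Lemma~\ref{lem:erg} then goes through verbatim conditionally on $\Omega_k$, and taking expectation over $\Omega_k$ preserves the geometric drift toward $\pi$. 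Second, I would establish the SGD analogue of Lemma~\ref{lem:erg_new_p2}, i.e.\ $(PV_p)(\theta) \le \beta_p V_p(\theta) + H_p$ and the same for $\hat{P}$, with $\beta_p = 1 - mp\eta/4$ and the same $H_p$ (the $\alpha$-stable moment terms contributing to $H_p$ are unchanged since the injected noise $\sigma\xi_k$ is identical in \eqref{eqn:sgd} and \eqref{eqn:gd}); again this is done by conditioning on the batch, invoking the stochastic-analysis moment estimates of \citet{chen2022approximation} as in the GD proof, and averaging. Third, I would prove the SGD analogue of Lemma~\ref{lem:gamma_gd_new}: $\gamma = \sup_\theta \|P(\theta,\cdot) - \hat{P}(\theta,\cdot)\|_{V_p} / \hat{V}_p(\theta) \le C_\gamma / n$. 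Here one writes $P(\theta,\cdot) - \hat{P}(\theta,\cdot)$ as an average over $\Omega_k$ of the differences between the two stable laws centered at $\theta - \eta\nabla F_k(\theta,X_n)$ and $\theta - \eta\nabla F_k(\theta,\hat{X}_n)$; since $X_n$ and $\hat{X}_n$ differ only in coordinate $i$, the two batch-gradients differ only on batches containing $i$, contributing a factor $\mathbb{P}(i \in \Omega_k) = b/n$, and on those batches the gradient difference is $\frac1b\|\nabla f(\theta,x_i) - \nabla f(\theta,\hat{x}_i)\| = \mathcal{O}(\|x_i - \hat{x}_i\|(\|\theta\|+1)/b)$ by Assumption~\ref{assump:1}; the $b$'s cancel, the bounded-data assumption controls $\|x_i - \hat{x}_i\| \le D$, and the $\alpha$-stable density Lipschitz bound (the same one used in Lemma~\ref{lem:gamma_gd_new}) converts the mean shift into a $V_p$-norm bound of order $\|\theta\|^{1+p}/n$, which is absorbed by $\hat{V}_p(\theta) \asymp \|\theta\|^{1+p}$.

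Finally, with these three SGD lemmas in hand, I apply Lemma~\ref{lem:perturb_new} with $\hat{V} = \hat{V}_p = V_{1+p}$: the hypothesis on $p_0$ ensures $\hat{p}_0(\hat{V}) \le H_{1+p}/(1-\beta_{1+p})$, so $\kappa = H_{1+p}/(1-\beta_{1+p})$; taking $p_0 = \hat{p}_0$ kills the $\rho^k\|p_0 - \hat{p}_0\|_V$ term, leaving $\|p_k - \hat{p}_k\|_{V_p} \le c(1-\rho^k)\gamma\kappa/(1-\rho) \le c(1-\rho^k)C_\gamma H_{1+p}/(n(1-\rho)(1-\beta_{1+p}))$. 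Since $V_p \ge 1$, $\mathrm{TV}(\theta_k,\hat{\theta}_k) \le \frac12\|p_k - \hat{p}_k\|_{V_p}$, and Proposition~\ref{prop:tv_dp} converts this into $(0,\delta)$-DP with the stated $\delta$. I expect the main obstacle to be the SGD analogue of Lemma~\ref{lem:erg_new_p2}: the higher-order ($p \ge 1$) moment-contraction estimate already required delicate stochastic-analysis arguments in the GD case, and one must check carefully that the mini-batch randomness does not spoil the sign of the drift coefficient — in particular that the cross terms between the batch-gradient fluctuation and $\theta - \vartheta_\star$, after conditioning and averaging, still yield $\beta_p < 1$ with the same $mp\eta/4$ rate rather than degrading it by a batch-variance term; this is where the uniform (per-datum) form of Assumption~\ref{assump:grad}, as opposed to an on-average dissipativity, is essential.
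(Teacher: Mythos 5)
Your proposal is correct and follows essentially the same route as the paper: the appendix lemmas for SGD are obtained exactly by conditioning on the mini-batch $\Omega_k$ (with Assumption~\ref{asmp:interp} making $\vartheta_\star$ a stable point of every batch gradient, so the GD drift and Lyapunov arguments go through per batch with the same constants, and the $b/n$ inclusion probability cancelling the $1/b$ in the batch-gradient difference gives the same $C_\gamma/n$), after which Lemma~\ref{lem:perturb_new} and Proposition~\ref{prop:tv_dp} are applied as in Theorem~\ref{thm:dp_gd}. The potential degradation of the drift rate by batch fluctuations that you flag does not arise, precisely because Assumption~\ref{asmp:interp} lets the contraction be established conditionally on each batch rather than only on average.
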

We omit the proof of Theorem~\ref{thm:dp_sgd} as it follows the same lines as the proof of Theorem~\ref{thm:dp_gd}, except that we need to invoke the lemmas proven in Appendix~\ref{sec:app:proof_sgd} instead of the ones in the previous section. 

We note that we obtain the exact same DP guarantee for noisy GD and noisy SGD: it achieves $(0,\delta)$-DP with $\delta = \mathcal{O}(1/n)$. Here, an important observation is that the batch-size does not appear directly in the bound given in Theorem~\ref{thm:dp_sgd}. This is due to the fact that the global stable point condition (Assumption~\ref{asmp:interp}) allows us to simplify some terms so that the randomness introduced by the stochastic gradients can be handled uniformly. Hence the effect of the stochastic gradients is not reflected in the bound. If we did not use Assumption~\ref{asmp:interp}, due to the additional noise coming from minibatches, the analysis of noisy SGD would introduce an technical challenge, which could be handled at the cost of larger constants.

\subsection{Comparison to prior work when $\alpha =2$.}

As our results are the first DP guarantees for (S)GD with heavy-tailed noise to our knowledge, we are not able to perform a comparison for the heavy-tailed case. Hence, we will attempt to compare our bounds to the prior work when the noise is Gaussian, which corresponds to $\alpha =2$ in our framework. In the Gaussian noise case, under different assumptions on the loss function \citep{chourasia2021differential,altschuler2022privacy,ryffel2022differential} proved DP guarantees by using the notion ($\mathrm{a},\varepsilon$)-R\'{e}nyi DP \citep{mironov2017renyi}.
They showed that noisy (S)GD achieves $(\mathrm{a},\varepsilon$)-R\'{e}nyi DP with $\varepsilon = \mathcal{O}(\mathrm{a}/n^2)$.

To be able to have a fair comparison, we need to convert our results to ($\mathrm{a},\varepsilon$)-R\'{e}nyi DP. Setting $\delta = C/n$ for $C >0$, by \citet[Theorem 4]{asoodeh2021three}, our $(0, \delta)$-DP bounds imply ($\mathrm{a},\varepsilon$)-R\'{e}nyi DP with   $\mathrm{a} = n/C$ and $\varepsilon = \mathcal{O}(1/n)$. 
Hence, when we set $\mathrm{a} = n/ C$ in the bounds of prior work, we observe that they also obtain $\varepsilon = \mathcal{O}(1/n)$. This shows that, even though our approach does not necessitate projections and can cover heavy tails as well, we recover the order of the existing bounds.

\section{Conclusion}

We established DP guarantees for noisy gradient descent and stochastic gradient descent under $\alpha$-stable perturbations, which encompass both heavy-tailed and Gaussian distributions. By using recent tools from Markov chain stability theory, we showed that the algorithms achieve a time-uniform (i.e., does not depend on the number of iterations) $(0, \mathcal{O}(1/n))$-DP for a broad class of loss functions, which can be non-convex. Contrary to prior work, we showed that clipping the iterates is not required for DP once the loss function and the data satisfy mild assumptions.  
Our results suggest that the heavy-tailed mechanism is a viable alternative to its light-tailed counterparts, especially given the other benefits of heavy tails. Finally, we hope that our proof technique would inspire future developments in the field.

\section*{Acknowledgments}
Umut \c{S}im\c{s}ekli's research is supported European Research Council Starting Grant
DYNASTY – 101039676.
Mert G\"urb\"uzbalaban's research is partially supported by the Office of Naval Research under Award Numbers N00014-21-1-2244 and N00014-24-1-2628; and by the grants National Science Foundation (NSF)  CCF-1814888 and NSF DMS-2053485. Lingjiong Zhu is partially supported by the grants NSF DMS-2053454, NSF DMS-2208303.

\appendix

\begin{center}
\bf Appendix

\end{center}

The Appendix is organized as follows:
\begin{itemize}
    \item In Appendix~\ref{sec:exp_constants}, we compute the constants required for our assumptions for regularized linear and logistic regression problems. 
    \item In Appendix~\ref{sec:app:proof_gd}, we provide the proofs of the results of privacy of noisy GD in Section~\ref{sec:gd} in the main paper.
    \item In Appendix~\ref{sec:app:proof_sgd}, we provide the proofs of the results of privacy of noisy SGD in Section~\ref{sec:sgd} in the main paper.
    \item We present some additional technical lemmas in Appendix~\ref{sec:technical}.
\end{itemize}

\section{Computation of the Constants for Assumptions~\ref{assump:1} and \ref{assump:grad}}
\label{sec:exp_constants}

\subsection{Linear Regression}

In this section, we will derive the constants required for Assumptions~\ref{assump:1} and \ref{assump:grad} for a regularized linear regression problem.

\begin{proposition}
    \label{lemma-assump-satisfied-ridge-regression} Consider ridge regression with quadratic loss $f(\theta,x) := \frac{1}{2}(\theta^{\top} a - b)^2 +\frac{\lambda}{2} \|\theta\|^2$ where $x = (a,b)$ is the input-output data pair with $a \in \mathbb{R}^{d}$ and $b\in \mathbb{R}$ and $\lambda>0$ is a regularization parameter. Given $p \in (0,1)$, let $R_p>0$ be a constant such that the data $\|x\|\leq R_p$ with probability $1-p$. Then, Assumption~\ref{assump:1} holds with probability $1-2p$ with constants $K_1 = R_p^2 + \lambda $ and $K_2 = 2R_p$. Furthermore, Assumption~\ref{assump:grad} holds with constants $B = R_p^2$ with probability $1-p$, for $m = \lambda$ and for any $K\geq 0$.
\end{proposition}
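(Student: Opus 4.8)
The plan is to compute the gradient of the ridge loss explicitly and then verify each of the two assumptions by direct algebra, tracking where the high-probability data bound $\|x\| \le R_p$ is needed. For ridge regression $f(\theta,x) = \tfrac12(\theta^\top a - b)^2 + \tfrac\lambda2\|\theta\|^2$ with $x = (a,b)$, the gradient is $\nabla f(\theta,x) = (\theta^\top a - b)a + \lambda\theta = (aa^\top + \lambda I)\theta - ba$. First I would establish Assumption~\ref{assump:grad}: for the base point, $\nabla f(0,x) = -ba$, so $\|\nabla f(0,x)\| = |b|\,\|a\| \le \|x\|^2 \le R_p^2 =: B$ on the event $\{\|x\|\le R_p\}$ (using $|b|\le\|x\|$ and $\|a\|\le\|x\|$), which holds with probability at least $1-p$. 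For the co-coercivity-type bound, since $\nabla f(\theta_1,x) - \nabla f(\theta_2,x) = (aa^\top + \lambda I)(\theta_1 - \theta_2)$, we get $\langle \nabla f(\theta_1,x) - \nabla f(\theta_2,x), \theta_1-\theta_2\rangle = (\theta_1-\theta_2)^\top(aa^\top + \lambda I)(\theta_1-\theta_2) \ge \lambda\|\theta_1-\theta_2\|^2$ because $aa^\top \succeq 0$; this is deterministic and gives $m = \lambda$, $K = 0$ (hence any $K\ge 0$ works), with no data bound needed.

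Next I would verify Assumption~\ref{assump:1}. Write $\nabla f(\theta,x) - \nabla f(\hat\theta,\hat x) = \big[(aa^\top+\lambda I)\theta - ba\big] - \big[(\hat a\hat a^\top+\lambda I)\hat\theta - \hat b\hat a\big]$ and split it as $(aa^\top+\lambda I)(\theta-\hat\theta)$ plus a remainder involving the difference of the data-dependent pieces evaluated at $\hat\theta$, namely $(aa^\top - \hat a\hat a^\top)\hat\theta - (ba - \hat b\hat a)$. The first term is bounded by $(\|a\|^2 + \lambda)\|\theta-\hat\theta\| \le (R_p^2+\lambda)\|\theta-\hat\theta\|$ on $\{\|x\|\le R_p\}$, giving $K_1 = R_p^2 + \lambda$. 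For the remainder, I would use the algebraic identity $aa^\top - \hat a\hat a^\top = (a-\hat a)a^\top + \hat a(a - \hat a)^\top$ and similarly $ba - \hat b\hat a = (b-\hat b)a + \hat b(a-\hat a)$, so that every summand carries a factor $\|a - \hat a\|$ or $|b - \hat b|$, each of which is $\le \|x - \hat x\|$. Collecting the norm factors, the remainder is bounded by $\|x-\hat x\|$ times a quantity of the form $(\text{const}\cdot R_p)(\|\hat\theta\| + 1)$, and one symmetrizes (or simply bounds $\|\hat\theta\| \le \|\theta\| + \|\hat\theta\|$) to land in the form $K_2\|x-\hat x\|(\|\theta\| + \|\hat\theta\| + 1)$ with $K_2 = 2R_p$. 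The bound on $\|a\|, |b|, \|\hat a\|, |\hat b|$ requires both $\|x\|\le R_p$ and $\|\hat x\| \le R_p$, two events each of probability $\ge 1-p$, hence the union bound gives probability $\ge 1 - 2p$ for Assumption~\ref{assump:1}.

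The only real bookkeeping obstacle is getting the constant $K_2$ to come out exactly $2R_p$ rather than some larger multiple of $R_p$: the naive triangle-inequality expansion of $\|(aa^\top-\hat a\hat a^\top)\hat\theta - (ba-\hat b\hat a)\|$ produces several terms, and one must group them carefully (pairing the $aa^\top$ term with the $ba$ term, and noticing that on the event where the data norms are bounded by $R_p$ the coefficients combine) to avoid a suboptimal constant. This is routine but requires attention; everything else is a direct substitution. I would also note explicitly that when $\theta = \hat\theta$ and $x = \hat x$ the bound is trivially $0\le 0$, and that differentiability of $f(\cdot,x)$ is immediate since it is a quadratic.
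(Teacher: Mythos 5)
Your proposal is correct and follows essentially the same route as the paper's proof: compute $\nabla f(\theta,x)=(aa^{\top}+\lambda I)\theta-ba$, split the gradient difference through an intermediate point, use the rank-one identity for $aa^{\top}-\hat a\hat a^{\top}$ to pull out factors of $\|x-\hat x\|$, and apply a union bound over the two events $\{\|x\|\le R_p\}$ and $\{\|\hat x\|\le R_p\}$; the dissipativity part is identical. The only cosmetic difference is that you vary the parameter first (intermediate point $\nabla f(\hat\theta,x)$) while the paper varies the data first (intermediate point $\nabla f(\theta,\hat x)$), and your grouping does indeed yield exactly $K_2=2R_p$.
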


\begin{proof}[Proof of Proposition~\ref{lemma-assump-satisfied-ridge-regression}] First, we note that $\nabla f(\theta,x) = aa^{\top}\theta - ba +\lambda \theta $. If we consider the data point $\hat x = (\hat a, \hat b)$, then 
 \begin{align*} 
 \| \nabla f(\theta,x) - \nabla f(\theta, \hat x) \| 
 &= \left\| \left(aa^{\top} - \hat a \hat a^{\top}\right)\theta - \left(b a -  \hat b\hat a\right)  \right\| \\
 &\leq \left\| \left(aa^{\top} - a\hat a^{\top} + a \hat a^{\top} - \hat a \hat a^{\top}\right)\theta\right\| + \left\|b a -  \hat b\hat a  \right\| \\
 &\leq (\|a\| \|a-\hat a\| + \|\hat a\|\|a-\hat a\|) \|\theta\| + \|b\| \left\|a - \hat a\right\| + \|\hat a\| \|b-\hat b\| \\
 &\leq 2R_p \|x-\hat x\| \|\theta\| + 2R_p \|x-\hat x\|, 
 \end{align*}
provided that $\|x\| \leq R_p$ and $\|\hat x \|\leq R_p$. This is the case with probability (at least) $1-2p$. Similarly, 
 \begin{eqnarray*} 
 \left\| \nabla f(\theta, \hat x) - \nabla f(\hat\theta, \hat x)\right\| = \left\|  \hat a \hat a^{\top} (\theta - \hat\theta) + \lambda (\theta-\hat \theta) \right\| \leq (R_p^2 + \lambda) \|\theta - \hat \theta\|,
 \end{eqnarray*}
with probability $1-p$ when $\|\hat x \|\leq R_p$. Therefore, we conclude that
\begin{eqnarray*}
\left\| \nabla f(\theta,x) - \nabla f(\hat\theta, \hat x)\right\| 
&\leq& \left\| \nabla f(\theta,x) - \nabla f(\theta, \hat x) \right\| + \left\| \nabla f(\theta, \hat x) - \nabla f(\hat\theta, \hat x)\right\| 
\\
&\leq& (R_p^2 +\lambda) \|\theta - \hat \theta\| + 2R_p \|x-\hat x\|(1+\|\theta\|),
\end{eqnarray*}
with probability $1-2p$. This proves that Assumption~\ref{assump:1} holds with  with probability $1-2p$ with constants $K_1 = R_p^2 +\lambda$ and $K_2 = 2R_p$. In regards to Assumption~\ref{assump:grad}, note that  
\begin{align*}
&\Vert\nabla f(0,x)\Vert = \Vert ba \| \leq R_p^2, \quad\mbox{ with probability } 1-p,
\\
&\left\langle\nabla f(\theta_{1},x)-\nabla f(\theta_{2},x),\theta_{1}-\theta_{2}\right\rangle = \left\langle (aa^{\top}+\lambda I) (\theta_1 - \theta_2) ,\theta_{1}-\theta_{2}\right\rangle 
\geq \lambda \Vert\theta_{1}-\theta_{2}\Vert^{2}, 
\end{align*} 
for any $\theta_{1},\theta_{2}\in\mathbb{R}^{d}$, where $I$ is the identity matrix. The proof is complete.
\end{proof}

\subsection{Logistic Regression}

In this section, we will derive the constants required for Assumptions~\ref{assump:1} and~\ref{assump:grad} for a regularized logistic regression problem. 
To fit the logistic regression problem into our framework, we will need to come up with an equivalent definition for the loss function. Let us start with the conventional definition of the logistic regression problem:   Let $x = (u, z)$, where $u \in \mathbb{R}^{d}$ is the feature vector and $z \in \{-1, +1\}$ is the binary response. The loss function is defined as
\[
f(\theta, x) = \log \left(1 + e^{-z u^{\top} \theta}\right) + \frac{\lambda}{2} \| \theta \|^{2}, \quad z \in \{-1, 1\},\,\, u, \theta \in \mathbb{R}^{d},
\]
where $\lambda>0$ is a regularization parameter.

The product $z u$ is arguably artificial. We can reduce the data points $(u, z)$ of logistic regression to the product of the feature $u$ and the label $z$, i.e., $uz$, since the loss function of the model can be equivalently written as $\log \left(1 + e^{-(zu)^{\top} \theta}\right)$. 

Therefore, we will instead let $x = z u$ and define the logistic model in terms of the product $x$ and $\theta$ only, which is formalized in the following proposition.

\begin{proposition}
\label{prop:log_reg}

Consider the logistic regression problem with $\ell_2$ regularization: $f(\theta, x) := \log (1+ \exp(- x^\top \theta)) + \frac{\lambda}{2} \|\theta\|^2$, where $x = u z$ is the product of the feature $u \in \mathbb{R}^d$ and the label $z \in \{-1,1\}$, and $\lambda >0$ is the regularization parameter. Assume that $\|x\| \leq R$ for every $x \in \mathcal{X}$. Then, Assumption~\ref{assump:1} holds with constants $K_1 = R^2+ \lambda $ and $K_2 = \max\{1, R\}$. Furthermore, Assumption~\ref{assump:grad} holds with constants $B = R/2$, $m = \lambda$ and for any $K\geq 0$.
\end{proposition}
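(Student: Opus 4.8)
\textbf{Proof plan for Proposition~\ref{prop:log_reg}.}

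The plan is to compute the gradient of the logistic loss explicitly and then bound the two quantities appearing in Assumptions~\ref{assump:1} and~\ref{assump:grad} by elementary estimates, mirroring the structure of the proof of Proposition~\ref{lemma-assump-satisfied-ridge-regression}. Writing $s(t) := 1/(1+e^{t})$ for the (shifted) sigmoid, one has $\nabla f(\theta,x) = -\,s(x^\top\theta)\,x + \lambda\theta$, since $\frac{d}{dt}\log(1+e^{-t}) = -s(t)$ and $s(t) \in (0,1)$. I would record at the outset the two facts that will do all the work: $s$ is bounded in $(0,1)$, and $s$ is Lipschitz with constant $\tfrac14$ (because $|s'(t)| = s(t)(1-s(t)) \le \tfrac14$).

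For Assumption~\ref{assump:1}, I would split $\|\nabla f(\theta,x) - \nabla f(\hat\theta,\hat x)\|$ via the triangle inequality into the ``parameter'' difference $\|\nabla f(\theta,\hat x) - \nabla f(\hat\theta,\hat x)\|$ and the ``data'' difference $\|\nabla f(\theta,x) - \nabla f(\theta,\hat x)\|$. For the parameter part, $\nabla f(\theta,\hat x) - \nabla f(\hat\theta,\hat x) = -\bigl(s(\hat x^\top\theta) - s(\hat x^\top\hat\theta)\bigr)\hat x + \lambda(\theta - \hat\theta)$; using the $\tfrac14$-Lipschitzness of $s$ and Cauchy–Schwarz, $|s(\hat x^\top\theta) - s(\hat x^\top\hat\theta)| \le \tfrac14\|\hat x\|\,\|\theta-\hat\theta\| \le \tfrac{R}{4}\|\theta-\hat\theta\|$, so this term is at most $(\tfrac{R^2}{4} + \lambda)\|\theta-\hat\theta\|$, which is $\le (R^2+\lambda)\|\theta-\hat\theta\|$. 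For the data part, I would write $-s(x^\top\theta)x + s(\hat x^\top\theta)\hat x = -\bigl(s(x^\top\theta) - s(\hat x^\top\theta)\bigr)x - s(\hat x^\top\theta)(x - \hat x)$ and bound: the first summand by $\tfrac14|x^\top\theta - \hat x^\top\theta|\,\|x\| \le \tfrac14\|x-\hat x\|\,\|\theta\|\,R$, the second by $\|x-\hat x\|$ (since $s < 1$). Collecting, the data part is $\le \|x-\hat x\|(\tfrac{R}{4}\|\theta\| + 1) \le \max\{1,R\}\,\|x-\hat x\|(\|\theta\|+1)$, and combining with the parameter part gives \eqref{eq: regularity assumption - 1} with $K_1 = R^2+\lambda$, $K_2 = \max\{1,R\}$. (Since the bound need only hold for the specified $K_2$, keeping a loose constant is harmless; I expect the $\tfrac14$ factors can simply be absorbed.)

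For Assumption~\ref{assump:grad}, the norm bound is immediate: $\nabla f(0,x) = -s(0)x + 0 = -\tfrac12 x$, so $\|\nabla f(0,x)\| \le R/2 =: B$. For the dissipativity/co-coercivity-type bound, I would write $\langle \nabla f(\theta_1,x) - \nabla f(\theta_2,x), \theta_1-\theta_2\rangle = -\bigl(s(x^\top\theta_1) - s(x^\top\theta_2)\bigr)\,x^\top(\theta_1-\theta_2) + \lambda\|\theta_1-\theta_2\|^2$. The key observation is that $s$ is \emph{nonincreasing}, so $\bigl(s(x^\top\theta_1) - s(x^\top\theta_2)\bigr)\bigl(x^\top\theta_1 - x^\top\theta_2\bigr) \le 0$, whence $-\bigl(s(x^\top\theta_1) - s(x^\top\theta_2)\bigr)x^\top(\theta_1-\theta_2) \ge 0$. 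Therefore the inner product is $\ge \lambda\|\theta_1-\theta_2\|^2$, which gives Assumption~\ref{assump:grad} with $m = \lambda$ and any $K \ge 0$ (indeed $K=0$ works, reflecting that the regularized logistic loss is in fact strongly convex). The only mild subtlety — and the one place to be careful — is tracking the sign conventions in the sigmoid derivative and confirming monotonicity of $t \mapsto s(t)$ on the composed quantity $x^\top\theta$; once that is pinned down, every step is a one-line elementary inequality and there is no real obstacle.
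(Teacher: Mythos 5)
Your proposal is correct, and its skeleton matches the paper's: the same triangle-inequality split into a ``data'' part and a ``parameter'' part for Assumption~\ref{assump:1}, the same evaluation $\nabla f(0,x) = -\tfrac12 x$ for $B = R/2$, and strong convexity of the regularized loss for $m=\lambda$, $K\geq 0$. The genuine difference is in how the sigmoid differences are controlled. The paper bounds $\bigl|\tfrac{1}{1+e^{a}}-\tfrac{1}{1+e^{b}}\bigr|$ via the inequality $|u-v|\leq|\log u-\log v|$ for $u,v\in(0,1)$ and then invokes an external $1$-Lipschitz estimate for the softplus map (citing \citet[Section 4.2]{Yildirim_and_Ermis_2019}), which yields the constants $R^2+\lambda$ and $\max\{1,R\}$ directly. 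You instead use the elementary fact that $s(t)=1/(1+e^{t})$ satisfies $|s'(t)|=s(t)(1-s(t))\leq\tfrac14$, which is self-contained, avoids the log-comparison detour, and produces the sharper intermediate constants $\tfrac{R^2}{4}+\lambda$ and $\tfrac{R}{4}$ before relaxing them to the stated $K_1,K_2$ (harmless, since Assumption~\ref{assump:1} only becomes easier to satisfy with larger constants, and your bound with $\|\theta\|$ alone is dominated by the required $\|\theta\|+\|\hat\theta\|+1$ factor). For Assumption~\ref{assump:grad} you also make the convexity argument explicit by noting that $s$ is nonincreasing, so $-\bigl(s(x^\top\theta_1)-s(x^\top\theta_2)\bigr)x^\top(\theta_1-\theta_2)\geq 0$, whereas the paper simply asserts $\lambda$-strong convexity; these are the same fact, yours just verified by hand. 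Your sign conventions check out ($\tfrac{d}{dt}\log(1+e^{-t})=-s(t)$, $s(0)=\tfrac12$), so there is no gap.
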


\begin{proof}%
    For every $x,x'\in\mathcal{X}$ and $\theta,\theta'\in\mathbb{R}^{d}$, we would like to provide an upper bound for $\| \nabla f(\theta', x')- \nabla f(\theta, x) \|$. Using the triangular inequality, we have that
\begin{align}
\| \nabla f(\theta', x')- \nabla f(\theta, x) \|  \leq \| \nabla f(\theta, x')- \nabla f(\theta, x) \| + \| \nabla f(\theta, x') -\nabla f(\theta', x') \|.  \label{eq: triangular}
\end{align}
For the first term on the right-hand side of \eqref{eq: triangular}, we have 
\begin{align*}
\| \nabla f(\theta, x')- \nabla f(\theta, x) \| &= \left\| \frac{x e^{-x^{\top} \theta}}{1 + e^{-x^{\top} \theta}} - \frac{x' e^{-x'^{\top} \theta}}{1 + e^{-x'^{\top} \theta}} \right\| \\
& \leq \left\| x \left( \frac{e^{-x^{\top} \theta}}{1 + e^{-x^{\top} \theta}} - \frac{e^{-x'^{\top} \theta}}{1 + e^{-x'^{\top} \theta}} \right) \right\| + \left\| (x - x') \frac{e^{-x'^{\top} \theta}}{1 + e^{-x'^{\top} \theta}} \right\| \\
& \leq \| x \| \left\vert \frac{e^{-x^{\top} \theta}}{1 + e^{-x^{\top} \theta}} - \frac{e^{-x'^{\top} \theta}}{1 + e^{-x'^{\top} \theta}} \right\vert + \| x - x' \| \\
& =  \| x \| \left\vert \frac{1}{1 + e^{x^{\top} \theta}} - \frac{1}{1 + e^{x'^{\top} \theta}} \right\vert + \| x - x' \| \\
& \leq  \| x \| \left\vert \log \left(1 + e^{x^{\top} \theta}\right) - \log \left(1 + e^{x'^{\top} \theta}\right) \right\vert + \| x - x' \|,
\end{align*}
where the last line is since for $0< a, b < 1$ we have $|a - b| \leq |\log a - \log b| = |\log (1/a) - \log (1/b)| $. 
Using, e.g., \citet[Section 4.2]{Yildirim_and_Ermis_2019}, we have
\[
\left\vert \log \left(1 + e^{x^{\top} \theta}\right) - \log \left(1 + e^{x'^{\top} \theta}\right) 
\right\vert \leq \left|\theta^{\top} \left(x - x'\right)\right| \leq \|\theta\| \| x - x' \|.
\]
Therefore, for the first term on the right-hand side in \eqref{eq: triangular} we arrive at
\begin{align}
\left\| \nabla f(\theta, x')- \nabla f(\theta, x) \right\| &\leq \| x \| \|\theta\| \| x - x' \| + \| x - x' \| \nonumber \\
&\leq \max \{1, \| x \| \} \| x - x' \| (\| \theta\| + 1). \label{eq: triangular-term 1 final}
\end{align}
For the second term on the right-hand side in \eqref{eq: triangular}, we have
\begin{align}
\left\| \nabla f(\theta, x')  - \nabla f(\theta', x') \right\| &=  \left\| \frac{x' e^{-x'^{\top} \theta'}}{1 + e^{-x'^{\top} \theta'}} - \frac{x' e^{-x'^{\top} \theta}}{1 + e^{-x'^{\top} \theta}} + \lambda \left(\theta - \theta'\right) \right\| \nonumber \\
& \leq \left\| x' \left( \frac{e^{-x'^{\top} \theta'}}{1 + e^{-x'^{\top} \theta'}} - \frac{e^{-x'^{\top} \theta}}{1 + e^{-x'^{\top} \theta}} \right) \right\| + \lambda \| \theta' - \theta \| \nonumber\\
& = \| x' \| \left\vert \frac{1}{1 + e^{x'^{\top} \theta'}} - \frac{1}{1 + e^{x'^{\top} \theta}} \right\vert + \lambda \| \theta' - \theta \| \nonumber\\
& \leq  \| x' \| \left\vert \log \left(1 + e^{x'^{\top} \theta'}\right) - \log \left(1 + e^{x'^{\top} \theta}\right) \right\vert + \lambda \| \theta' - \theta \|  \nonumber\\
& \leq \| x' \| \| x' \| \| \theta' - \theta \| + \lambda \| \theta' - \theta \| \nonumber\\
&= (\| x' \|^{2} + \lambda) \| \theta' - \theta \|, \label{eq: triangular-term 2 final}
\end{align}
where we have followed similar lines to those for the first term. Combining \eqref{eq: triangular-term 1 final} and \eqref{eq: triangular-term 2 final}, we end up with
\begin{align*}
\left\| \nabla f(\theta', x')- \nabla f(\theta, x) \right\| &\leq \max \{1, \| x \| \} \| x - x' \| (\| \theta\| + 1) +  (\| x' \|^{2} + \lambda)\| \theta' - \theta \| \\
& \leq \max \{1, \| x \| \} \| x - x' \| (\| \theta\| + \|\theta'\| + 1) +  (\| x' \|^{2} + \lambda) \| \theta' - \theta \|. 
\end{align*}
Since $x$ is bounded in norm, letting $K_{1} = R^2 + \lambda$ and $K_{2} = \max \{1, R \}$, we have \eqref{eq: regularity assumption - 1} in Assumption~\ref{assump:1}.

On the other hand, since the loss function $f(\cdot, x)$ is $\lambda$-strongly convex for every $x$, the second inequality in Assumption~\ref{assump:grad} is satisfied with $m = \lambda$ and $K \geq 0$. Finally, we have that 
\begin{align*}
    \|\nabla f(0,x)\| = \|x\|/2 \leq R/2.
\end{align*}
Hence, the first inequality in  Assumption~\ref{assump:grad} holds with $B = R/2$.
This completes the proof.
\end{proof}

\section{Proofs of the Results of Section~\ref{sec:gd}}
\label{sec:app:proof_gd}

\subsection{Proof of Lemma~\ref{lem:gamma_gd_new}}

Let $\mathcal{S}(\alpha, \beta, \gamma, \delta)$ denote the univariate $\alpha$-stable distribution with the following characteristic function \citep[Definition 1.3]{nolan2020univariate}: if $ Z \sim \mathcal{S}(\alpha, \beta, \gamma, \delta)$
    \begin{align*}
        \mathbb{E} \exp (\iu u Z)= \begin{cases}\exp \left(-\gamma^\alpha|u|^\alpha\left[1+\iu \beta\left(\tan \frac{\pi \alpha}{2}\right)(\operatorname{sign} u)\left(|\gamma u|^{1-\alpha}-1\right)\right]+\iu \delta u\right) & \alpha \neq 1 \\ \exp \left(-\gamma|u|\left[1+\iu \beta \frac{2}{\pi}(\operatorname{sign} u) \log (\gamma|u|)\right]+\iu \delta u\right) & \alpha=1\end{cases}.
    \end{align*}

\begin{lemma}
\label{lem:v_norm_cond}
    Let $V:\mathbb{R}^d\to [1,\infty)$ be a measurable function,  $\alpha \in (1,2)$, $P$ be the transition kernel associated with the Markov process $(\theta_k)_{k\geq 0}$ in \eqref{eqn:gd}, and $\hat{P}$ be the transition kernel associated with $(\hat{\theta}_k)_{k\geq 0}$. Then, the following inequality holds:
    \begin{align}
  \nonumber \|P(\theta, \cdot) - \hat{P}(\theta, \cdot)\|_V \leq \int_{\mathbb{R}_+} \sqrt{ 2\left\{\mu_{\theta,\phi} \left(V^2\right)+\hat{\mu}_{\theta,\phi}\left(V^2\right)\right\} \mathrm{KL}(\mu_{\theta,\phi} \mid \hat{\mu}_{\theta,\phi})}  \>  p(\phi)  \mathrm{d}\phi,
\end{align}
where $p(\phi)$ is the probability density function of $\mathcal{S}\left(\frac{\alpha}{2}, 1, \left( \cos \frac{\pi \alpha}{4} \right)^{2/\alpha},0\right)$, and
\begin{align*}
 \mu_{\theta,\phi} =& \> \mathcal{N}\Bigl(\theta -\eta \nabla F(\theta, X_n), \phi \sigma^2 \mathrm{I}_d \Bigr), \\   
 \hat{\mu}_{\theta,\phi} =& \> \mathcal{N}\left(\theta -\eta \nabla F(\theta, \hat{X}_n), \phi \sigma^2 \mathrm{I}_d \right),
\end{align*}
 are Gaussian measures
with $\mathrm{I}_d$ being the $d\times d$ identity matrix.  
\end{lemma}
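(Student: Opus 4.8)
\textbf{Proof proposal for Lemma~\ref{lem:v_norm_cond}.}

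The plan is to exploit the fact that a rotationally invariant $\alpha$-stable vector admits a \emph{subordination} (Gaussian scale mixture) representation: if $\xi_k$ has characteristic function $\exp(-\|u\|^\alpha)$ with $\alpha \in (1,2)$, then $\xi_k \stackrel{d}{=} \sqrt{\phi}\, g$ where $g \sim \mathcal{N}(0,\mathrm{I}_d)$ is independent of the positive random variable $\phi$, and $\phi$ is a (scaled) positive $\alpha/2$-stable variable. First I would pin down the precise law of $\phi$: the Laplace transform identity $\mathbb{E}[e^{-t\phi}] = e^{-t^{\alpha/2}}$ (up to the scale constant) holds for the one-sided stable law, and matching the characteristic function $\mathbb{E}[\exp(\iu u^\top \sqrt\phi g)] = \mathbb{E}[\exp(-\tfrac12\phi\|u\|^2)] = \exp(-(\tfrac12\|u\|^2)^{\alpha/2})$ against the desired $\exp(-\|u\|^\alpha)$ fixes the scale parameter; this is where the constant $\left(\cos\frac{\pi\alpha}{4}\right)^{2/\alpha}$ and the totally-skewed parametrization $\mathcal{S}\!\left(\frac{\alpha}{2},1,(\cos\frac{\pi\alpha}{4})^{2/\alpha},0\right)$ come from (standard, e.g. \citet{ST1994,nolan2020univariate}). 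Consequently, conditionally on $\phi$, one SGD step from $\theta$ produces a Gaussian $\mu_{\theta,\phi} = \mathcal{N}(\theta - \eta\nabla F(\theta,X_n), \phi\sigma^2\mathrm{I}_d)$, and unconditionally $P(\theta,\cdot) = \int_{\mathbb{R}_+} \mu_{\theta,\phi}\, p(\phi)\,\mathrm{d}\phi$, and likewise $\hat P(\theta,\cdot) = \int_{\mathbb{R}_+}\hat\mu_{\theta,\phi}\,p(\phi)\,\mathrm{d}\phi$.

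Next I would use the convexity/linearity of the $V$-norm in its arguments: since $\|\cdot\|_V = \sup_{|f|\le V}|\int f\,\mathrm{d}(\cdot)|$ is a seminorm and the two mixing measures share the same mixing density $p(\phi)$,
\[
\|P(\theta,\cdot) - \hat P(\theta,\cdot)\|_V = \Bigl\| \int_{\mathbb{R}_+} (\mu_{\theta,\phi} - \hat\mu_{\theta,\phi})\, p(\phi)\,\mathrm{d}\phi \Bigr\|_V \le \int_{\mathbb{R}_+} \|\mu_{\theta,\phi} - \hat\mu_{\theta,\phi}\|_V \, p(\phi)\,\mathrm{d}\phi,
\]
where the inequality is Jensen/triangle inequality for the seminorm (justified by Fubini once integrability is checked against $V$, which holds since $V$ has finite moments under these Gaussians). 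It then remains to bound, for each fixed $\phi$, the $V$-norm between the two Gaussians $\mu_{\theta,\phi}$ and $\hat\mu_{\theta,\phi}$, which differ only in their means.

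For the per-$\phi$ bound I would invoke a weighted Pinsker-type inequality: for any two probability measures $\mu,\hat\mu$ and weight $V\ge 1$, one has $\|\mu - \hat\mu\|_V \le \sqrt{2\{\mu(V^2) + \hat\mu(V^2)\}\,\mathrm{KL}(\mu\|\hat\mu)}$. This follows by writing $\int f\,\mathrm{d}(\mu-\hat\mu) = \int f\,(1 - \tfrac{\mathrm{d}\hat\mu}{\mathrm{d}\mu})\,\mathrm{d}\mu$ for $|f|\le V$, applying Cauchy–Schwarz to split off $\int V^2\,\mathrm{d}\mu$ from $\int (1-\tfrac{\mathrm{d}\hat\mu}{\mathrm{d}\mu})^2\,\mathrm{d}\mu = \chi^2(\hat\mu\|\mu)$, symmetrizing to get the $\mu(V^2)+\hat\mu(V^2)$ form, and finally bounding the chi-square term by the KL divergence via the standard inequality $\log(1+t)\le t$ (or, more carefully, a known two-sided comparison; I would state it as a short technical lemma, likely already placed in Appendix~\ref{sec:technical}). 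Plugging $\mu = \mu_{\theta,\phi}$, $\hat\mu = \hat\mu_{\theta,\phi}$ into this bound, and then integrating against $p(\phi)\,\mathrm{d}\phi$ as above, yields exactly the claimed inequality.

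The main obstacle I anticipate is \emph{not} the Gaussian KL computation (that is the explicit and elementary $\frac{1}{2\phi\sigma^2}\|\eta(\nabla F(\theta,X_n) - \nabla F(\theta,\hat X_n))\|^2$, to be exploited in the downstream Lemma~\ref{lem:gamma_gd_new}), but rather getting the subordination constant and skewness exactly right and rigorously justifying the interchange of the $\sup_{|f|\le V}$ with the mixing integral — i.e. that the generalized Minkowski inequality for the $V$-seminorm applies, which requires verifying $\int_{\mathbb{R}_+}\mu_{\theta,\phi}(V)\,p(\phi)\,\mathrm{d}\phi < \infty$ (finiteness of the relevant moments of the $\alpha$-stable step), true here because $V = V_p$ grows only polynomially of degree $p < \alpha$. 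A secondary delicate point is ensuring the weighted Pinsker bound is applied with the weight $V$ that will later be taken to be $V_p$, so that $\mu(V^2)$ and $\hat\mu(V^2)$ — second moments of $V_p$, i.e. moments of order $2p < \alpha$ after the subordination is unwound — remain finite; this is why the statement keeps $V$ abstract and defers the moment accounting to Lemma~\ref{lem:gamma_gd_new}.
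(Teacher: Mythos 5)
Your proposal is correct and follows essentially the same route as the paper: the Gaussian scale-mixture (subordination) representation of the $\alpha$-stable step with mixing density $p(\phi)$, pulling the $V$-norm inside the mixing integral via Tonelli/generalized Minkowski, and then the per-$\phi$ weighted-Pinsker bound $\|\mu-\hat\mu\|_V \le \sqrt{2}\{\mu(V^2)+\hat\mu(V^2)\}^{1/2}\mathrm{KL}^{1/2}(\mu\mid\hat\mu)$, which is exactly the paper's Lemma~\ref{lem:vnorm_kl} quoted from \citet{durmus2017nonasymptotic}. One small caveat: your sketched derivation of that auxiliary bound via the $\chi^2$ divergence runs in the wrong direction (one has $\mathrm{KL}\le\chi^2$, not the converse); the standard proof instead writes $|f-g|=|\sqrt{f}-\sqrt{g}|(\sqrt{f}+\sqrt{g})$, applies Cauchy--Schwarz, and uses that the squared Hellinger distance is at most $\tfrac12\mathrm{KL}$ --- but since you invoke the lemma as a known result, this does not affect the validity of your argument.
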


\begin{proof}[Proof of Lemma~\ref{lem:v_norm_cond}]
 For $\theta \in \mathbb{R}^d$, let $p_\theta$ and $\hat{p}_\theta$ denote the densities associated with $P(\theta, \cdot)$ and $\hat{P}(\theta, \cdot)$, respectively. Since the injected noise $\xi_1$ is rotationally invariant stable distributed, by \citet[Proposition 2.5.2]{ST1994}, we have the following scale-mixture of Gaussian representation for $p_\theta$ and $\hat{p}_\theta$:
    \begin{align*}
        p_\theta(x) &= \int_{\mathbb{R}_+} p_\theta(x|\phi) p(\phi) \mathrm{d}\phi, \\
        \hat{p}_\theta(x) &= \int_{\mathbb{R}_+} \hat{p}_\theta(x|\phi) p(\phi) \mathrm{d}\phi,
    \end{align*}
    where for $\phi  \in \mathbb{R}_+$, $p_\theta(x|\phi)$ and $\hat{p}_\theta(x|\phi)$ are the probability density functions corresponding to $\mu_{\theta,\phi}$ and $\hat{\mu}_{\theta,\phi}$ respectively. 
By using this decomposition and the non-variational definition of the $V$-norm (i.e., $\|\mu\|_V = |\mu|(V)$), we have that:
\begin{align}
  \nonumber \|P(\theta, \cdot) - \hat{P}(\theta, \cdot)\|_V =& \int_{\mathbb{R}^d} V(y) |P(\theta, \mathrm{d}y) - \hat{P}(\theta, \mathrm{d}y)| \\
  \nonumber  =& \int_{\mathbb{R}^d} V(y) |p_\theta(y) - \hat{p}_\theta(y)| \mathrm{d}y \\
  \nonumber  =& \int_{\mathbb{R}^d} V(y) \left| \int_{\mathbb{R}_+} \Bigl( p_\theta(y|\phi) - \hat{p}_\theta(y|\phi) \Bigr) p(\phi) \mathrm{d}\phi \right| \mathrm{d}y \\
  \label{eqn:proof:gamma_gd_new_1} \leq & \int_{\mathbb{R}_+} \left[ \int_{\mathbb{R}^d} V(y) \left|   p_\theta(y|\phi) - \hat{p}_\theta(y|\phi)   \right| \mathrm{d}y  \right] p(\phi) \mathrm{d}\phi \\
  \label{eqn:proof_onestep3}
    =& \int_{\mathbb{R}_+} \|\mu_{\theta,\phi}  - \hat{\mu}_{\theta,\phi} \|_V \>  p(\phi)  \mathrm{d}\phi,
\end{align}
where \eqref{eqn:proof:gamma_gd_new_1} follows from Tonelli's theorem, and $\mu_{\theta,\phi}$ and $\hat{\mu}_{\theta,\phi}$ are the Gaussian measures associated with the densities $p_\theta(\cdot|\phi)$ and $\hat{p}_\theta(\cdot|\phi)$, respectively. 

By Lemma~\ref{lem:vnorm_kl}, we have that
\begin{align}
    \label{eqn:proof_onestep_2}
    \|\mu_{\theta,\phi}- \hat{\mu}_{\theta,\phi}\|_V \leq \sqrt{2}\left\{\mu_{\theta,\phi} \left(V^2\right)+\hat{\mu}_{\theta,\phi}\left(V^2\right)\right\}^{1 / 2} \mathrm{KL}^{1 / 2}(\mu_{\theta,\phi} \mid \hat{\mu}_{\theta,\phi}) .
\end{align}
Using this inequality in \eqref{eqn:proof_onestep3} yields the desired result. This completes the proof. 
\end{proof}

\begin{proof}
[Proof of Lemma~\ref{lem:gamma_gd_new}]
We start by estimating the $V_p$-norm of the difference between one-step transition kernels $P$ and $\hat{P}$. 

By Lemma~\ref{lem:v_norm_cond}, we have that 
\begin{align}
     \|P(\theta, \cdot) - \hat{P}(\theta, \cdot)\|_{V_p} \leq \int_{\mathbb{R}_+} \sqrt{ 2\left\{\mu_{\theta,\phi} \left(V_p^2\right)+\hat{\mu}_{\theta,\phi}\left(V_p^2\right)\right\} \mathrm{KL}\left(\mu_{\theta,\phi} \mid \hat{\mu}_{\theta,\phi}\right)}  \>  p(\phi)  \mathrm{d}\phi .
     \label{eqn:proof_onestep_tmp}
\end{align}
Since the measures $\mu_{\theta,\phi}$ and $\hat{\mu}_{\theta,\phi}$ are Gaussian, the KL term in \eqref{eqn:proof_onestep_tmp} admits the following analytical expression (see e.g., \citet[Fact A.3]{arbas2023polynomial}): 
\begin{align}
    \mathrm{KL}\left(\mu_{\theta,\phi} \mid \hat{\mu}_{\theta,\phi}\right) = \frac{\eta^2}{\phi \sigma^2} \|\nabla F(\theta, X_n) - \nabla F(\theta, \hat{X}_n)\|^2.
\end{align}
Recall that $X_{n}=\{x_{1},\ldots,x_{n}\}$
and $\hat{X}_{n}=\{\hat{x}_{1},\ldots,\hat{x}_{n}\}$
differ by at most one data point, and without loss of generality
we assume that $x_{j}=\hat{x}_{j}$ for every $j\neq i$
for some $i\in\{1,\ldots,n\}$.
By using the definition of $F$, invoking Assumption~\ref{assump:1}, and using the fact that $\mathcal{X}$ is bounded, we further have that:
    \begin{align}
          \left(\mathrm{KL}\left(\mu_{\theta,\phi} \mid \hat{\mu}_{\theta,\phi}\right)\right)^{1/2} =& \frac{ \eta} {n \sigma \sqrt{\phi}} \left\|\nabla f(\theta, x_i) - \nabla f(\theta, \hat{x}_i)\right\| \nonumber\\
         \leq&  \frac{ 2 K_2 \eta } {n \sigma }   \|x_i - \hat{x}_i\| (\|\theta\|+1) \nonumber \\
         \leq& \frac1{\sqrt{\phi}} \underbrace{\frac{ 2 K_2 D \eta} {n \sigma } }_{=: C_1} (\|\theta\|+1) \label{defn:C:1}\\
         \leq& \phi^{-1/2} C_1 (1 + \|\theta - \vartheta_\star\|) + \phi^{-1/2} C_1 \|\vartheta_\star\|.\nonumber
    \end{align}
We now proceed to the estimation of the expectations of the  Lyapunov function under $\mu_{\theta,\phi}$ and $\hat{\mu}_{\theta,\phi}$. Recalling that $2p <1$ such that $(x+y)^{2p}\leq x^{2p}+y^{2p}$ for any $x,y\geq 0$, we have that:
\begin{align}
    \nonumber \mu_{\theta,\phi}(V_p^2) =& \int_{\mathbb{R}^d} (1 + \|y - \vartheta_\star\|^2)^p \> p_\theta(y | \phi) \>  \mathrm{d}y \\
    \nonumber \leq& 1 + \int_{\mathbb{R}^d} \|y - \vartheta_\star\|^{2p} \> p_\theta(y | \phi) \>  \mathrm{d}y \\
    \nonumber =& 1 + \int_{\mathbb{R}^d} \|y - \vartheta_\star - (\theta - \eta \nabla F(\theta,X_n)) + (\theta - \eta \nabla F(\theta,X_n)) \|^{2p} \> p_\theta(y | \phi) \>  \mathrm{d}y \\
    \leq& 1 +  \|\theta - \vartheta_\star - \eta \nabla F(\theta,X_n)) \|^{2p} + \int_{\mathbb{R}^d} \|y  - (\theta - \eta \nabla F(\theta,X_n)) \|^{2p} \> p_\theta(y | \phi) \>  \mathrm{d}y .\label{three:terms}
\end{align}
Let us first focus on the second term in \eqref{three:terms}. We can compute that:
    \begin{align}
        \nonumber \|\theta  - \vartheta_\star - \eta \nabla F(\theta,X_n) \|^2 =& \|\theta  - \vartheta_\star\|^2 - 2\eta \langle \theta  - \vartheta_\star,  \nabla F(\theta,X_n) - \nabla F(\vartheta_\star,X_n) \rangle \\
        \nonumber &\qquad \qquad \qquad \qquad \qquad \qquad + \eta^2 \|\nabla F(\theta,X_n) - \nabla F(\vartheta_\star,X_n)\|^2 \\
        \label{eqn:proof_onestepnew1}
        \leq & \left(1-2\eta m + \eta^2 K_1^2\right) \|\theta  - \vartheta_\star\|^2 + 2\eta K,
    \end{align}
    where in \eqref{eqn:proof_onestepnew1} we used Assumptions~\ref{assump:1} and \ref{assump:grad}.

For the third term in \eqref{three:terms}, we notice that:
\begin{align*}
    \int_{\mathbb{R}^d} \|y  - (\theta - \eta \nabla F(\theta,X_n)) \|^{2p} \> p_\theta(y | \phi) \>  \mathrm{d}y =& \phi^p \sigma^{2p} \mathbb{E}\|G\|^{2p}, \\
    =:& \phi^p \sigma^{2p} C(p), 
\end{align*}
where $G$ is a standard Gaussian random vector, and
\begin{equation}\label{defn:C:p}
C(p)=\mathbb{E}\|G\|^{2p}=2^{p}\frac{\Gamma(p+\frac{d}{2})}{\Gamma(\frac{d}{2})}, 
\end{equation}
where we used the fact that $\Vert G\Vert^{2}$ is a chi-square distribution of degree $d$ and the expression of the $p$-th moment of a chi-square distribution.

By combining these computations, and using the inequality that with $2p<1$, $(x+y)^{2p}\leq x^{2p}+y^{2p}$ for any $x,y\geq 0$, we obtain that:
\begin{align}
    \nonumber \mu_{\theta,\phi}(V_p^2) \leq& 1 +  \left(1-2\eta m + \eta^2 K_1^2\right)^p \|\theta  - \vartheta_\star\|^{2p} + (2\eta K)^p +  \phi^p \sigma^{2p} C(p) .
\end{align}
By following similar steps, we also obtain:
\begin{align}
    \nonumber \hat{\mu}_{\theta,\phi}(V_p^2) \leq& 1 +  \left(1-2\eta m + \eta^2 K_1^2\right)^p \|\theta  - \vartheta_\star\|^{2p} + (2\eta K)^p +  \phi^p \sigma^{2p} C(p) .
\end{align}
Using these estimates for the integrand in \eqref{eqn:proof_onestep_tmp}, we get:
\begin{align*}
     &\sqrt{ 2\left\{\mu_{\theta,\phi} \left(V_p^2\right)+\hat{\mu}_{\theta,\phi}\left(V_p^2\right)\right\} \mathrm{KL}(\mu_{\theta,\phi} \mid \hat{\mu}_{\theta,\phi})} 
     \\
     &\leq 2 \left\{ 1 +  \left(1-2\eta m + \eta^2 K_1^2\right)^p \|\theta  - \vartheta_\star\|^{2p} + (2\eta K)^p +  \phi^p \sigma^{2p} C(p)  \right\}^{1 / 2} \\
     & \qquad\qquad \cdot\left(\phi^{-1/2} C_1 (1 + \|\theta - \vartheta_\star\|) +\phi^{-1/2} C_1 \|\vartheta_\star\| \right)\\
     &\leq 2\phi^{-1/2}  C_1 \left\{ 1 +  \left(1-2\eta m + \eta^2 K_1^2\right)^{p/2} \|\theta  - \vartheta_\star\|^{p} + (2\eta K)^{p/2} +  \phi^{p/2} \sigma^{p} \sqrt{C(p)}  \right\} \\
     & \qquad\qquad \cdot\left\{  (1 + \|\theta - \vartheta_\star\|) + \|\vartheta_\star\| \right\}.
\end{align*}
Next, we denote 
\begin{equation}\label{defn:C:2:C:3}
C_2 := \mathbb{E}[\phi^{-1/2}]\qquad\text{and}
\qquad
C_3 := \mathbb{E}[\phi^{(p-1)/2}].
\end{equation}
By Equation~(12) of \citet{matsui2016fractional}, we have that 
    \begin{align}
         \mathbb{E}[\phi^{-1/2}] =& \left( \Gamma(3/2) \cos (-\pi/4) \right)^{-1} \Gamma\left(1 + \frac{1}{\alpha}\right) \left(1 + \tan^2 \frac{\pi \alpha}{4}\right)^{-1/(2\alpha)} \cos \left( -\pi/4 \right) \gamma_\phi^{-1/2}\nonumber \\
         =& \frac{2}{\sqrt{\pi}} \Gamma\left(1 + \frac{1}{\alpha}\right) \left( \cos\frac{\pi \alpha}{4}\right)^{1/\alpha} \gamma_\phi^{-1/2} \nonumber\\
         =& \frac{2}{\sqrt{\pi}} \Gamma\left(1 + \frac{1}{\alpha}\right),\label{defn:C:2}
\end{align}
where $\gamma_{\phi} := \left( \cos \frac{\pi \alpha}{4} \right)^{2/\alpha}$ and we used the identities $\Gamma(3/2)=\sqrt{\pi}/2$ and $1 + \tan^2(x) = 1/\cos^2(x)$. 
Similarly, we can compute that
\begin{align}
\mathbb{E}[\phi^{(p-1)/2}]&=\left(\Gamma\left(\frac{3-p}{2}\right)\cos\left(\frac{(p-1)\pi}{4}\right)\right)^{-1}\nonumber
\\
&\qquad\qquad\qquad\cdot\Gamma\left(\frac{1+\alpha-p}{\alpha}\right)\left(1+\tan^{2}\frac{\pi\alpha}{4}\right)^{\frac{p-1}{2\alpha}}\cos\left(\frac{(p-1)\pi}{4}\right)\gamma_{\phi}^{\frac{p-1}{2}}\nonumber
\\
&=\frac{\Gamma\left(\frac{1+\alpha-p}{\alpha}\right)}{\Gamma\left(\frac{3-p}{2}\right)}\left(\cos\frac{\pi\alpha}{4}\right)^{\frac{1-p}{\alpha}}\left(\left( \cos \frac{\pi \alpha}{4} \right)^{2/\alpha}\right)^{\frac{p-1}{2}}\nonumber
\\
&=\frac{\Gamma\left(\frac{1+\alpha-p}{\alpha}\right)}{\Gamma\left(\frac{3-p}{2}\right)}.\label{defn:C:3}
\end{align}
In addition, by denoting 
\begin{equation}\label{defn:C:4:C:5}
C_4 := 2C_1 C_2, 
\qquad
C_5:= 2C_1 C_3, 
\end{equation}
and using the above inequality again in \eqref{eqn:proof_onestep_tmp}, we obtain:
\begin{align}
  \nonumber &\|P(\theta, \cdot) - \hat{P}(\theta, \cdot)\|_{V_p}
  \\
  &\leq  \left(C_4  +  C_4 \left(1-2\eta m + \eta^2 K_1^2\right)^{p/2} \|\theta  - \vartheta_\star\|^{p} + C_4 (2\eta K)^{p/2} + C_5  \sigma^{p} \sqrt{C(p)}\right)\nonumber   \\
     & \qquad\qquad \cdot\left\{  (1 + \|\theta - \vartheta_\star\|) + \|\vartheta_\star\| \right\}\nonumber \\
   &= \Bigl( C_6 \| \theta -\vartheta_\star \|^p + C_7 \Bigr) \Bigl(  (1 + \|\theta - \vartheta_\star\|) + \|\vartheta_\star\| \Bigr),
\end{align}
where
\begin{align}
    C_6 :=& C_4 \left(1-2\eta m + \eta^2 K_1^2\right)^{p/2},\label{defn:C:6} \\
    C_7:= & C_4 \sqrt{2}  + C_4 (2\eta K)^{p/2} + C_5  \sigma^{p} \sqrt{C(p)}. \label{defn:C:7}
\end{align}
We first consider the case $\|\theta- \vartheta_\star\| \leq 1$.
\begin{align}
    &\sup_{\theta \in \mathbb{R}^d, \|\theta- \vartheta_\star\| \leq 1 } \frac{ \|P(\theta, \cdot) - \hat{P}(\theta, \cdot)\|_{V_p} }{\hat{V}_p(\theta)} \nonumber \\
    &\leq \sup_{\theta \in \mathbb{R}^d, \|\theta- \vartheta_\star\| \leq 1 } \frac{ \Bigl( C_6 \| \theta -\vartheta_\star \|^p + C_7 \Bigr) \Bigl(  (1 + \|\theta - \vartheta_\star\|) + \|\vartheta_\star\| \Bigr) }{(1 + \|\theta- \vartheta_\star\|^2)^{(1+p)/2}} \nonumber  \\
    &\leq \sup_{\theta \in \mathbb{R}^d, \|\theta- \vartheta_\star\| \leq 1 } \frac{ \Bigl( C_6  + C_7 \Bigr) \Bigl(  2 + \|\vartheta_\star\| \Bigr) }{(1 + \|\theta- \vartheta_\star\|^2)^{(1+p)/2} }\nonumber\\
    &\leq (C_6+C_7)(2+\|\vartheta_\star\|).
\end{align}
We now proceed with the case $\|\theta-\vartheta_\star \|>1$. First notice that, in this case, we have that:
\begin{align*}
    &\Bigl( C_6 \| \theta -\vartheta_\star \|^p + C_7 \Bigr) \Bigl(  (1 + \|\theta - \vartheta_\star\|) + \|\vartheta_\star\| \Bigr) 
    \\
    &= C_6(1+ \|\vartheta_\star\|) \| \theta -\vartheta_\star \|^p + C_6 \| \theta -\vartheta_\star \|^{1+p}+C_7 \| \theta -\vartheta_\star \| + C_7 (1+ \|\vartheta_\star\|) \\
    &\leq \Bigl(C_7 + C_6(2+ \|\vartheta_\star\|) \Bigr) \| \theta -\vartheta_\star \|^{1+p} 
     + C_7 (1+ \|\vartheta_\star\|)\\
     &\leq \Bigl(C_7 + C_6(2+ \|\vartheta_\star\|) \Bigr)(1+ \| \theta -\vartheta_\star \|^2)^{(1+p)/2} 
     + C_7 (1+ \|\vartheta_\star\|).
    \end{align*}
By using the above inequality, we obtain the following:
\begin{align}
    \nonumber \sup_{\theta \in \mathbb{R}^d, \|\theta- \vartheta_\star\| > 1 }& \frac{ \|P(\theta, \cdot) - \hat{P}(\theta, \cdot)\|_{V_p} }{\hat{V}_p(\theta)}  \\
    \leq& \sup_{\theta \in \mathbb{R}^d, \|\theta- \vartheta_\star\| > 1 } \frac{ \Bigl(C_7 + C_6(2+ \|\vartheta_\star\|) \Bigr)(1+ \| \theta -\vartheta_\star \|^2)^{(1+p)/2} 
     + C_7 (1+ \|\vartheta_\star\|)}{(1+\|\theta- \vartheta_\star\|^2)^{(1+p)/2}}\nonumber \\
     \leq& \Bigl(C_7 + C_6(2+ \|\vartheta_\star\|) \Bigr) + C_7 (1+ \|\vartheta_\star\|)\nonumber\\
     = & (C_6 + C_7)(2+ \|\vartheta_\star\|).
\end{align}

By combining all these estimates, we obtain the following inequality:
\begin{align*}
    \sup_{\theta \in \mathbb{R}^d } \frac{ \|P(\theta, \cdot) - \hat{P}(\theta, \cdot)\|_{V_p} }{\hat{V}_p(\theta)}  \leq (C_6 + C_7)(2+ \|\vartheta_\star\|),
\end{align*}
where it follows from \eqref{defn:C:6}, \eqref{defn:C:4:C:5}, \eqref{defn:C:1}, \eqref{defn:C:2:C:3}, \eqref{defn:C:2} that
\begin{align*}
C_{6}&= C_4 \left(1-2\eta m + \eta^2 K_1^2\right)^{p/2}
\\
&=2C_{1}C_{2}\left(1-2\eta m + \eta^2 K_1^2\right)^{p/2}
\\
&=\frac{8K_{2}D\eta}{n\sigma}\frac{\Gamma(1+\frac{1}{\alpha})}{\sqrt{\pi}}\left(1-2\eta m + \eta^2 K_1^2\right)^{p/2},
\end{align*}
and it follows from that \eqref{defn:C:7}, \eqref{defn:C:4:C:5}, \eqref{defn:C:p}, \eqref{defn:C:1}, \eqref{defn:C:2:C:3}, \eqref{defn:C:2}, \eqref{defn:C:3} that
\begin{align*}
C_{7}&= C_4 \left(\sqrt{2}  + (2\eta K)^{p/2}\right) + C_5 \sqrt{2} \sigma^{p} \sqrt{C(p)}
\\
&= 2C_{1}C_{2} \left(\sqrt{2}  + (2\eta K)^{p/2}\right) + 2C_{1}C_{3} \sqrt{2} \sigma^{p} \sqrt{2^{p}\frac{\Gamma(p+\frac{d}{2})}{\Gamma(\frac{d}{2})}}
\\
&= \frac{4K_{2}D\eta}{n\sigma}\left(\frac{2}{\sqrt{\pi}}\Gamma\left(1+\frac{1}{\alpha}\right) \left(\sqrt{2}  + (2\eta K)^{p/2}\right) + \frac{\Gamma(\frac{1+\alpha-p}{\alpha})}{\Gamma(\frac{3-p}{2})} \sqrt{2} \sigma^{p} 2^{p/2}\left(\frac{\Gamma(p+\frac{d}{2})}{\Gamma(\frac{d}{2})}\right)^{1/2}\right).
\end{align*}
This completes the proof.
\end{proof}

\subsection{Proof of Lemma~\ref{lem:erg}}

We first prove contraction for $p\leq 1$. 
\begin{lemma}
\label{lem:erg_new_p}
Let $P$ be the transition kernel associated with the Markov process $(\theta_k)_{k\geq 0}$ (i.e., \eqref{eqn:gd}) and $\hat{P}$ be the transition kernel associated with $(\hat{\theta}_k)_{k\geq 0}$. Suppose that Assumptions~\ref{assump:1} and \ref{assump:grad} hold, and the step-size is chosen as $\eta < \min\{m/K_1^2, 1/m\}$. 
Suppose that Assumption~\ref{asmp:interp} holds and set $V_p(\theta) := (1 + \|\theta - \vartheta_\star\|^2)^{p/2}$, where $\vartheta_\star$ is defined in Assumption~\ref{asmp:interp} and $p\in (0, 1]$.
   Then, the following inequalities hold:
   \begin{align*}
           \nonumber (PV_p)(\theta)  \leq& \beta_p  V_p(\theta) + H_p,\\
           \nonumber (\hat{P}V_p)(\theta)  \leq& \beta_p  V_p(\theta) + H_p,
    \end{align*}
    where
    \begin{align*}
        \beta_p :=& 1-\eta m p/2, \\
        H_p :=&  1 + (2\eta K)^{p/2}  + \sigma  2^p \frac{\Gamma\left(1-\frac{p}{\alpha}\right) \Gamma\left(\frac{d+p}{2}\right)}{\Gamma(1-\frac{p}{2}) \Gamma(\frac{d}{2})}.
    \end{align*}
\end{lemma}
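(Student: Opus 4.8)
The plan is to prove the two drift inequalities by a single one-step computation, since the two chains differ only in which dataset feeds the gradient. Fix $\theta\in\mathbb{R}^d$ and write the noisy GD update as $\theta_{+}-\vartheta_\star = v + \sigma\xi$ with $v := (\theta-\vartheta_\star) - \eta\bigl(\nabla F(\theta,X_n) - \nabla F(\vartheta_\star,X_n)\bigr)$, where I used $\nabla F(\vartheta_\star,X_n)=0$, valid by Assumption~\ref{asmp:interp}. First I would control the deterministic displacement: expanding $\|v\|^2$, bounding the inner product via the averaged dissipativity bound of Assumption~\ref{assump:grad} and the quadratic term via the $K_1$-Lipschitzness of $\nabla f(\cdot,x)$ (Assumption~\ref{assump:1} with equal data arguments, averaged over the sample) gives $\|v\|^2 \le (1-2\eta m+\eta^2K_1^2)\|\theta-\vartheta_\star\|^2 + 2\eta K$, exactly as in \eqref{eqn:proof_onestepnew1}; since $\eta<m/K_1^2$ this simplifies to $\|v\|^2 \le (1-\eta m)\|\theta-\vartheta_\star\|^2 + 2\eta K$.

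The heart of the argument is to estimate $\mathbb{E}_\xi\bigl[(1+\|v+\sigma\xi\|^2)^{p/2}\bigr]$ without invoking a second moment of $\xi$, which is infinite for $\alpha<2$. The idea I would use is to symmetrize with $\xi\stackrel{d}{=}-\xi$, so that the expectation equals $\tfrac12\mathbb{E}_\xi\bigl[(1+\|v+\sigma\xi\|^2)^{p/2}+(1+\|v-\sigma\xi\|^2)^{p/2}\bigr]$. Writing the two arguments as $a\pm c$ with $a:=1+\|v\|^2+\sigma^2\|\xi\|^2$ and $c:=2\sigma\langle v,\xi\rangle$ (note $a\pm c = 1+\|v\pm\sigma\xi\|^2\ge 1$), concavity of $t\mapsto t^{p/2}$ — valid since $p/2\le 1/2$ — yields $\tfrac12\bigl((a+c)^{p/2}+(a-c)^{p/2}\bigr)\le a^{p/2}$, which annihilates the cross term. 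Hence $\mathbb{E}_\xi[(1+\|v+\sigma\xi\|^2)^{p/2}] \le \mathbb{E}_\xi[(1+\|v\|^2+\sigma^2\|\xi\|^2)^{p/2}] \le (1+\|v\|^2)^{p/2} + \sigma^p\,\mathbb{E}\|\xi\|^p$ by subadditivity of $t\mapsto t^{p/2}$. The moment $\mathbb{E}\|\xi\|^p$ is finite because $p\le 1<\alpha$, and I would compute it explicitly through the Gaussian scale-mixture $\xi\stackrel{d}{=}\sqrt{\phi}\,G$ exactly as in the proof of Lemma~\ref{lem:gamma_gd_new}, which produces the $\Gamma$-function term of $H_p$.

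It then remains to convert $(1+\|v\|^2)^{p/2}$ into a genuine contraction of $V_p$. Using the identity $1+(1-\eta m)s = (1-\eta m)(1+s)+\eta m$ with $s=\|\theta-\vartheta_\star\|^2$, the drift bound on $\|v\|^2$ gives $1+\|v\|^2\le (1-\eta m)(1+s)+\eta m+2\eta K$, and subadditivity applied to the three summands yields $(1+\|v\|^2)^{p/2}\le (1-\eta m)^{p/2}V_p(\theta) + (\eta m)^{p/2}+(2\eta K)^{p/2}$. Finally Bernoulli's inequality $(1-x)^{p/2}\le 1-\tfrac{p}{2}x$, applied with $x=\eta m\in(0,1)$ (which uses $\eta<1/m$), upgrades the prefactor to $\beta_p=1-\tfrac{\eta m p}{2}$, while $(\eta m)^{p/2}\le 1$ is absorbed into the additive constant, giving $H_p = 1 + (2\eta K)^{p/2} + \sigma^p\,\mathbb{E}\|\xi\|^p$. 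The inequality for $\hat{P}$ follows verbatim after replacing $X_n$ by $\hat{X}_n$: Assumptions~\ref{assump:1}, \ref{assump:grad} and \ref{asmp:interp} — in particular $\nabla F(\vartheta_\star,\hat{X}_n)=0$ — hold uniformly over data, so every constant is unchanged. I expect the symmetrization-plus-concavity step to be the only genuinely delicate point; everything else is routine bookkeeping with subadditivity, Bernoulli's inequality, and the stable-moment computation already carried out for Lemma~\ref{lem:gamma_gd_new}.
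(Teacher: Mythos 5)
Your proof is correct and follows the same overall skeleton as the paper's: write the update relative to $\vartheta_\star$ (using $\nabla F(\vartheta_\star,X_n)=0$ from Assumption~\ref{asmp:interp}), get the one-step drift bound $\|v\|^2\le(1-2\eta m+\eta^2K_1^2)\|\theta-\vartheta_\star\|^2+2\eta K$ from Assumptions~\ref{assump:1} and~\ref{assump:grad}, separate the stable noise through its finite $p$-th fractional moment, and upgrade $(1-\eta m)^{p/2}$ to $\beta_p=1-\eta mp/2$ via Bernoulli. The one place where you genuinely deviate is the step you call the heart of the argument: the symmetrization of $\xi$ plus midpoint concavity of $t\mapsto t^{p/2}$ to kill the cross term $2\sigma\langle v,\xi\rangle$. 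That step is valid (rotationally invariant stable noise is symmetric, and the midpoint inequality holds since $p/2\le 1$), but it is also unnecessary: the paper simply uses $V_p(\theta_1)\le 1+\|\theta_1-\vartheta_\star\|^p$ and then $\|v+\sigma\xi\|^p\le\|v\|^p+\sigma^p\|\xi\|^p$ (triangle inequality plus subadditivity of $t\mapsto t^p$ for $p\le 1$), so the infinite second moment never threatens to appear and no cancellation of cross terms is needed. Your route buys nothing extra here beyond a marginally different bookkeeping (your identity $1+(1-\eta m)s=(1-\eta m)(1+s)+\eta m$ versus the paper's $\|\theta-\vartheta_\star\|^p\le(1+\|\theta-\vartheta_\star\|^2)^{p/2}$), and both land on the same $\beta_p$ and, up to the absorbed $(\eta m)^{p/2}\le 1$, the same $H_p$. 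Two small remarks: the paper evaluates $\mathbb{E}\|\xi_1\|^p$ by directly citing a known formula rather than redoing the Gaussian scale-mixture computation, and if you take the mixture route you must be careful with the scale normalization of the mixing variable to recover exactly the constant $2^p\Gamma(1-p/\alpha)\Gamma(\frac{d+p}{2})/(\Gamma(1-p/2)\Gamma(\frac{d}{2}))$; also, your $\sigma^p$ prefactor agrees with what the paper's own proof derives (the $\sigma$ appearing in the lemma statement is a typo).
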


\begin{proof}
    We follow the same proof strategy that was introduced by \citet[Proposition 1.7]{chen2022approximation}. We begin by estimating $(PV_p)(\theta)$ as follows: 
    \begin{align}
        \nonumber (PV_p)(\theta) =& \mathbb{E}[V_p(\theta_1) \>|\> \theta_0 = \theta] \\
        \nonumber \leq& \mathbb{E}\left[1+ \|\theta_1 - \vartheta_\star\|^p \>|\> \theta_0 = \theta \right] \\
        \nonumber =&  \mathbb{E}\left[1+ \|\theta - \eta \nabla F(\theta,X_n) + \sigma \xi_1 - \vartheta_\star\|^p\right] \\
        \label{eqn:proof_erg1_newp} \leq& 1 + \|\theta  - \vartheta_\star - \eta \nabla F(\theta,X_n) \|^p + \sigma^p \mathbb{E} \| \xi_1\|^p,
    \end{align}
     where we used the inequality that $(x+y)^{p}\leq x^{p}+y^{p}$ for any $x,y\geq 0$ since $p\leq 1$.
        Let us now focus on the second term in \eqref{eqn:proof_erg1_newp}. We can compute that:
    \begin{align}
        \nonumber \|\theta  - \vartheta_\star - \eta \nabla F(\theta,X_n) \|^2 =& \|\theta  - \vartheta_\star\|^2 - 2\eta \langle \theta  - \vartheta_\star,  \nabla F(\theta,X_n) - \nabla F(\vartheta_\star,X_n) \rangle \\
        \nonumber &\qquad \qquad \qquad \qquad \qquad \qquad + \eta^2 \|\nabla F(\theta,X_n) - \nabla F(\vartheta_\star,X_n)\|^2 \\
        \label{eqn:proof_erg2_newp}
        \leq & \left(1-2\eta m + \eta^2 K_1^2\right) \|\theta  - \vartheta_\star\|^2 + 2\eta K,
    \end{align}
    where in \eqref{eqn:proof_erg2_newp} we used Assumptions~\ref{assump:1} and \ref{assump:grad}. Using \eqref{eqn:proof_erg2_newp} in \eqref{eqn:proof_erg1_newp}, and the fact that $p/2<1$ such that $(x+y)^{p/2}\leq x^{p/2}+y^{p/2}$ for every $x,y\geq 0$, we obtain:
    \begin{align}
        \nonumber (PV_p)(\theta) \leq& 1 + \left(\left(1-2\eta m + \eta^2 K_1^2\right) \|\theta  - \vartheta_\star\|^2 + 2\eta K \right)^{p/2} + \sigma^p \mathbb{E} \| \xi_1\|^p \\
        \nonumber \leq& 1 + (1-2\eta m + \eta^2 K_1^2)^{p/2} \|\theta  - \vartheta_\star\|^p + (2\eta K)^{p/2}  + \sigma^p \mathbb{E} \| \xi_1\|^p  \\
        \label{eqn:proof_erg3_newp} \leq& 1+ (1-\eta m p/2) \|\theta - \vartheta_\star\|^p +  (2\eta K)^{p/2}  + \sigma^p \mathbb{E} \| \xi_1\|^p \nonumber\\
        \leq&  1+ (1-\eta m p/2) (1+\|\theta - \vartheta_\star\|^2)^{p/2} +  (2\eta K)^{p/2}  + \sigma^p \mathbb{E} \| \xi_1\|^p,
    \end{align}
    where \eqref{eqn:proof_erg3_newp} follows from the condition $\eta < \min\{m/K_1^2, 1/m\}$ and Bernoulli's inequality.  
    By using the fact that 
    $$\mathbb{E}[\|\xi_1\|^p] = 2^p \frac{\Gamma\left(1-\frac{p}{\alpha}\right) \Gamma\left(\frac{d+p}{2}\right)}{\Gamma(1-\frac{p}{2}) \Gamma(\frac{d}{2})}, $$ 
    (see 
    \citet[Lemma 4.2]{deng2019exact})
    we obtain:
    \begin{align*}
           \nonumber (PV_p)(\theta)  \leq&  (1-\eta m p/2) V_p(\theta) + 1 + (2\eta K)^{p/2}  + \sigma^p  2^p \frac{\Gamma\left(1-\frac{p}{\alpha}\right) \Gamma\left(\frac{d+p}{2}\right)}{\Gamma(1-\frac{p}{2}) \Gamma(\frac{d}{2})}.
    \end{align*}
The proof for $\hat{P}$ is identical. This concludes the proof. 
\end{proof}

\begin{proof}[Proof of Lemma~\ref{lem:erg}]
By Lemma~\ref{lem:erg_new_p}, we have that:
\begin{align*}
    \nonumber (PV_p)(\theta)  \leq&  (1-\eta m p/2) V_p(\theta) + 1 + (2\eta K)^{p/2}  + \sigma^p  2^p \frac{\Gamma\left(1-\frac{p}{\alpha}\right) \Gamma\left(\frac{d+p}{2}\right)}{\Gamma(1-\frac{p}{2}) \Gamma(\frac{d}{2})} \\
    =:& (1-\eta m p/2) V_p(\theta) +  C_0.
\end{align*}
      Defining $\lambda := 1- \eta mp /4 < 1$, we then have:
    \begin{align*}
         (PV)(\theta) \leq& \lambda V(\theta) +  C_0 - (\eta mp/4) (1+ \|\theta - \vartheta_\star\|^2)^{p/2} \\
        \leq& \lambda V(\theta) +  C_0 - (\eta mp/4) \|\theta - \vartheta_\star\|^p. 
    \end{align*}
   By defining 
    \begin{align*}
        A :=& \left\{ \theta \in \mathbb{R}^d : \|\theta -\vartheta_\star\| \leq \left( \frac{4C_0}{\eta m p} \right)^{1/p} \right\},
    \end{align*}
    we then obtain
    \begin{align*}
        (PV)(\theta) \leq \lambda V(\theta) + 
       C_0 \mathds{1}_A(\theta),
    \end{align*}
    where $\mathds{1}_A$ denotes the indicator function for the set $A$: $\mathds{1}_A(\theta) = 1$ if $\theta \in A$ and $\mathds{1}_A(\theta) = 0$, otherwise. 
    As $\lambda <1$ and $A$ is compact, the result follows from \citet[Appendix A]{lu2022central} and \citet[Theorem 6.3]{meyn1992stability}. This completes the proof.
\end{proof}

\subsection{Proof of Lemma~\ref{lem:erg_new_p2}}

We begin by providing an additional technical background that will be necessary in the proofs.

\paragraph{L\'{e}vy processes.}

L\'{e}vy processes are stochastic processes with independent and stationary increments.
Their successive displacements can be viewed as the continuous-time
analogue of random walks.
L\'{e}vy processes in general admit jumps
and have heavy tails which are appealing
in many applications; see e.g.\ \citet{Cont2004}.
L\'{e}vy processes include the Poisson process, the Brownian motion,
the Cauchy process, and more generally stable
processes; see e.g. \cite{bertoin1996,ST1994,applebaum2009levy}.
In particular, the \emph{rotationally invariant $\alpha$-stable L\'{e}vy process}, denoted by $\mathrm{L}_{t}^{\alpha}$ in $\mathbb{R}^d$ and is defined as follows.
\begin{itemize}[topsep=0pt,leftmargin=.11in]
\item $\mathrm{L}_0^\alpha=0$ almost surely;
\item For any $t_{0}<t_{1}<\cdots<t_{N}$, the increments $\mathrm{L}_{t_{n}}^\alpha-\mathrm{L}_{t_{n-1}}^\alpha$
are independent;
\item The difference $\mathrm{L}_{t}^{\alpha}-\mathrm{L}_{s}^{\alpha}$ and $\mathrm{L}_{t-s}^{\alpha}$
have the same distribution as $(t-s)^{1/\alpha}\xi_{1}$, where $\xi_{1}$ follows the rotationally invariant stable distribution \eqref{stable:dist} for any $t>s$, i.e. $\mathbb{E}[e^{\iu u^\top (\mathrm{L}_{t}^{\alpha}-\mathrm{L}_{s}^{\alpha})}]=\mathbb{E}[e^{\iu u^\top \mathrm{L}_{t-s}^{\alpha}}]=\mathbb{E}[e^{\iu u^\top(t-s)^{1/\alpha}\xi_1}]=e^{- (t-s)\|u\|^\alpha}$ for any $u\in\mathbb{R}^{d}$;
\item $\mathrm{L}_{t}^{\alpha}$ has stochastically continuous sample paths, i.e.
for any $\delta>0$ and $s\geq 0$, $\mathbb{P}(\|\mathrm{L}_{t}^{\alpha}-\mathrm{L}_{s}^{\alpha}\|>\delta)\rightarrow 0$
as $t\rightarrow s$.
\end{itemize}
When $\alpha=2$, $\mathrm{L}_{t}^{\alpha}=\sqrt{2}\mathrm{B}_{t}$, where $\mathrm{B}_{t}$ is the standard $d$-dimensional Brownian motion.

\paragraph{Fractional Laplacian.}  The fractional Laplacian operator, denoted by $(-\Delta)^{\alpha / 2} $, is the infinitesimal generator
of the rotationally invariant $\alpha$-stable L\'{e}vy process $\mathrm{L}_{t}^{\alpha}$
that is defined as a principal value (p.v.) integral: for any $f:\mathbb{R}^{d}\rightarrow\mathbb{R}$ that is $\mathcal{C}^{2}$, we have
\begin{equation}\label{fractional:Laplacian}
(-\Delta)^{\alpha/2}f(x)=C_{d,\alpha}\cdot\mathrm{p.v.}\int_{\mathbb{R}^{d}}(f(x+y)-f(x))\frac{\mathrm{d}y}{\Vert y\Vert^{\alpha+d}},    
\end{equation}
where
\begin{equation}\label{C:d:alpha}
C_{d,\alpha}:=\alpha 2^{\alpha-1}\pi^{-d/2}\frac{\Gamma(\frac{d+\alpha}{2})}{\Gamma(1-\frac{\alpha}{2})}.    
\end{equation}

\begin{lemma}
\label{lem:fraclap}
Let $\alpha \in (1,2)$, $p\in [1,\alpha)$, and $V_p :\mathbb{R}^d \to \mathbb{R}$ be defined as $V_p(\theta)=(1+ \|\theta - x\|^2)^{p/2}$ for some $x\in \mathbb{R}^d$. Then:
\begin{align*}
    \left|(-\Delta)^{\alpha / 2} V_p(\theta)\right| \leq {\sf C} \left( \frac{ p (\sqrt{d}+2) }{2-\alpha} +   \frac{p}{\alpha -1} \|\theta -x\|^{p-1} + \frac{1}{\alpha -p} \right),
\end{align*}
where $(-\Delta)^{\alpha/2}$ is the fractional Laplacian given in \eqref{fractional:Laplacian} and
\begin{align*}
    {\sf C} := \alpha 2 ^{\alpha}  \frac{\Gamma\left( \frac{d +\alpha}{2} \right)}{\Gamma \left(1-\frac{\alpha}{2} \right) \Gamma\left(\frac{d}{2}\right)}.
\end{align*}
\end{lemma}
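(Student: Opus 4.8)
The plan is to estimate $(-\Delta)^{\alpha/2} V_p(\theta)$ directly from the principal-value integral representation \eqref{fractional:Laplacian}, splitting the domain of integration into a near-field $\{\|y\|\le 1\}$ and a far-field $\{\|y\|>1\}$, and bounding each piece separately. Throughout I would write $g(\theta):=\|\theta-x\|$ and use the translation invariance of the operator to reduce to the function $\phi(z):=(1+\|z\|^2)^{p/2}$ evaluated at $z=\theta-x$; all constants $C_{d,\alpha}$ bookkeeping is then collected at the end into the stated $\sf C$ (note $C_{d,\alpha}=\alpha 2^{\alpha-1}\pi^{-d/2}\Gamma(\tfrac{d+\alpha}{2})/\Gamma(1-\tfrac\alpha2)$, and the surface-area factor $\mathrm{vol}(\mathbb S^{d-1})=2\pi^{d/2}/\Gamma(d/2)$ combines with it to give the $\pi$-free constant $\sf C$).

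For the near-field, I would Taylor-expand $\phi$ to second order: the first-order term vanishes by the symmetry of the kernel $\|y\|^{-\alpha-d}$ under $y\mapsto -y$ (this is exactly why the principal value is finite), so $|\phi(z+y)+\phi(z-y)-2\phi(z)|\le \|y\|^2 \sup_{\|u\|\le 1}\|\nabla^2\phi(z+u)\|$. I would bound $\|\nabla^2\phi\|$ by something of order $p(\sqrt d+2)$ uniformly in $z$ — the key point is that $\phi$ has bounded Hessian since $(1+\|z\|^2)^{p/2}$ grows only like $\|z\|^p$ with $p<2<\alpha$ wait, actually $p\in[1,\alpha)$ so $p$ can exceed $1$ but the relevant bound is that the Hessian of $(1+r^2)^{p/2}$ stays bounded when... hmm, it does not for $p>2$, but here $p<\alpha\le 2$, so $p<2$ and the Hessian is indeed globally bounded by a constant times $p$, with the $\sqrt d$ coming from the trace/operator-norm passage in $d$ dimensions and the $+2$ absorbing lower-order terms. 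Integrating $\|y\|^2\|y\|^{-\alpha-d}$ over $\{\|y\|\le 1\}$ in spherical coordinates gives $\int_0^1 r^{1-\alpha}\,dr = \frac{1}{2-\alpha}$, which produces the first term $\frac{p(\sqrt d+2)}{2-\alpha}$.

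For the far-field $\{\|y\|>1\}$, Taylor expansion is too lossy, so instead I would use the pointwise bound $|\phi(z+y)-\phi(z)|\le \|y\|\sup_{t\in[0,1]}\|\nabla\phi(z+ty)\|$ together with $\|\nabla\phi(w)\|\le p(1+\|w\|^2)^{(p-1)/2}\le p(1+\|w\|)^{p-1}$, and then split further according to whether $\|y\|\le \|z\|$ or $\|y\|>\|z\|$: on the inner annulus $\|\nabla\phi(z+ty)\|\lesssim p\,\|z\|^{p-1}$, contributing $p\|z\|^{p-1}\int_{1\le \|y\|\le \|z\|}\|y\|^{1-\alpha-d}\,dy \lesssim \frac{p\|z\|^{p-1}}{\alpha-1}$ after the radial integral $\int_1^{\|z\|} r^{-\alpha}\,dr\le\frac{1}{\alpha-1}$; on the outer region one uses directly $|\phi(z+y)|\lesssim (1+\|y\|)^p$ and $\int_{\|y\|>\max(1,\|z\|)} \|y\|^{p}\|y\|^{-\alpha-d}\,dy=\int_{\max(1,\|z\|)}^\infty r^{p-\alpha-1}\,dr\le \frac{1}{\alpha-p}$ (finite precisely because $p<\alpha$), giving the last term $\frac{1}{\alpha-p}$; the $2\phi(z)$ piece over the same region contributes something already dominated by $\|z\|^{p-1}$ or the constant. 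Collecting the three contributions and the constant factors yields the claimed inequality.

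The main obstacle I anticipate is the careful bookkeeping of the dimension dependence, specifically tracking where the $\sqrt d$ enters (only through the Hessian operator-norm estimate in the near-field) and making sure the constant ${\sf C}=\alpha 2^\alpha \Gamma(\tfrac{d+\alpha}{2})/(\Gamma(1-\tfrac\alpha2)\Gamma(\tfrac d2))$ cleanly absorbs $C_{d,\alpha}$ times the sphere surface area without leaving residual $d$-dependence outside the stated three bracketed terms; a secondary subtlety is handling the $\|z\|\le 1$ versus $\|z\|>1$ case distinction in the far-field so that the $\|z\|^{p-1}$ and constant terms combine correctly (when $\|z\|\le1$ the ``inner annulus'' is empty and everything folds into the $\frac{1}{\alpha-p}$ term).
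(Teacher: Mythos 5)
Your skeleton is the same as the paper's: split the singular integral at $\|y\|=1$, control the near field by a second-order (Hessian) estimate and the far field by a first-order (gradient) estimate, and absorb $C_{d,\alpha}$ times the sphere area ${\sf s}_d=2\pi^{d/2}/\Gamma(d/2)$ into ${\sf C}$. (The paper does this via the Taylor-with-integral-remainder identity of Chen et al., rather than your symmetrization of the principal value, but these are equivalent.) Your near-field step is fine: the symmetrized second difference is bounded by $\|y\|^2\sup\|\nabla^2 V_p\|$, and the Hessian of $(1+\|z\|^2)^{p/2}$ with $p<2$ is bounded in operator norm by $p+p(2-p)\leq 3p\leq p(\sqrt d+2)$, so integrating $\|y\|^{2-\alpha-d}$ over $\{\|y\|<1\}$ indeed yields the first term.

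The gap is in the far field: the lemma claims the bound with the \emph{exact} coefficients $\frac{p}{\alpha-1}$ on $\|\theta-x\|^{p-1}$ and $\frac{1}{\alpha-p}$, and your treatment does not deliver them. Writing $z=\theta-x$, on your inner annulus $1\leq\|y\|\leq\|z\|$ you only have $\|\nabla\phi(z+ty)\|\leq p(1+2\|z\|)^{p-1}\leq p\bigl(1+2\|z\|^{p-1}\bigr)$, which produces a coefficient of at least $2p/(\alpha-1)$ on $\|z\|^{p-1}$ plus an extra additive $p/(\alpha-1)$ that is not dominated by the two constant terms in the statement; and in the outer region $\|y\|>\max(1,\|z\|)$ the crude bound $|\phi(z+y)-\phi(z)|\leq|\phi(z+y)|+|\phi(z)|$ with $|\phi(z+y)|\leq(1+\|z\|+\|y\|)^p\leq 3^p\|y\|^p$ gives roughly $(3^p+2^p)/(\alpha-p)$ instead of $1/(\alpha-p)$. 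So as written you prove an inequality of the right \emph{form} but with strictly inflated constants, not the stated one. The fix is what the paper does: on all of $\{\|y\|\geq 1\}$ keep the mean-value form $|\phi(z+y)-\phi(z)|\leq\|y\|\int_0^1\|\nabla\phi(z+ry)\|\,\mathrm{d}r$ (no case split on $\|y\|$ versus $\|z\|$), bound $\|\nabla\phi(w)\|\leq p\|w\|^{p-1}$, and apply the subadditivity $(a+b)^{p-1}\leq a^{p-1}+b^{p-1}$ (valid since $0\leq p-1<1$) to $\|z+ry\|\leq\|z\|+r\|y\|$. Integrating the $\|z\|^{p-1}$ piece against $\int_1^\infty r^{-\alpha}\mathrm{d}r=\frac{1}{\alpha-1}$ gives exactly $\frac{p}{\alpha-1}\|\theta-x\|^{p-1}$, and the $r^{p-1}\|y\|^{p-1}$ piece gives exactly $\frac{1}{\alpha-p}$ because $\int_0^1 r^{p-1}\mathrm{d}r=1/p$ cancels the prefactor $p$.
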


\begin{proof}[Proof of Lemma~\ref{lem:fraclap}]
We use the same proof technique introduced in the proof of \citet[Equation (A.3)]{chen2022approximation}. We provide the proof here for completeness.

By using \citet[Equation (A.2)]{chen2022approximation}, we have that:
\begin{align*}
    (-\Delta)^{\alpha / 2} V_p(\theta) &= C_{d,\alpha}\int_{\|y\|< 1} \int_{0}^{1}\int_{0}^{r} \left\langle\nabla^{2}V_{p}(\theta+sy),yy^{\top}\right\rangle \,\mathrm{d} s\,\mathrm{d} r\,\frac{\mathrm{d} y}{\|y\|^{\alpha+d}} \\
&\qquad+ C_{d,\alpha}\int_{\|y\|\geq 1}\int_{0}^{1} \left\langle\nabla V_{p}(\theta+ry),y\right\rangle \,\mathrm{d} r\,\frac{\mathrm{d} y}{\|y\|^{\alpha+d}},
\end{align*}
where $C_{d,\alpha}$ is defined in \eqref{C:d:alpha} and for any two matrices $A,B\in\mathbb{R}^{d\times d}$, $\langle A,B\rangle:=\sum_{i,j=1}^{d}A_{ij}B_{ij}$. Hence, by Lemma~\ref{lem:v_bounds}, we have that:
\begin{align}
    \left|(-\Delta)^{\alpha / 2} V_p(\theta)\right| &\leq C_{d,\alpha} p \left(\sqrt{d}+2\right) \int_{\|y\|< 1}  \frac{\|y\|^2}{\|y\|^{\alpha+d}} \mathrm{d} y\nonumber \\
    &\qquad+ C_{d,\alpha} p \int_{\|y\|\geq 1}\int_{0}^{1}  \|\theta + ry -x\|^{p-1} \|y\| \frac{1}{\|y\|^{\alpha+d}}\mathrm{d} r\, \mathrm{d} y.\label{2:terms}
\end{align}
Let us first focus on the first term in \eqref{2:terms}. As $y \mapsto \|y\|^{2-\alpha-d}$ is radially symmetric, we have that:
\begin{align*}
    \int_{\|y\|< 1}  \frac{\|y\|^2}{\|y\|^{\alpha+d}} \mathrm{d} y \leq {\sf s}_d \int_0^1 r^{1-\alpha  } \mathrm{d}r = \frac{ {\sf s}_d}{2-\alpha},
\end{align*}
where 
\begin{equation}
{\sf s}_d :=\frac{2\pi^{d/2}}{\Gamma(d/2)}
\end{equation}
is the surface area of the unit sphere $\mathbb{S}^{d-1} := \{x\in \mathbb{R}^d; \> \|x\|=1\}$.

We now focus on the second term in \eqref{2:terms}. As $p-1 \leq 1$, we have that:
\begin{align*}
\int_{\|y\|\geq 1}\int_{0}^{1} & \|\theta + ry -x\|^{p-1}  \frac{1}{\|y\|^{\alpha+d-1}}\mathrm{d} r\, \mathrm{d} y \\
\leq& \int_{\|y\|\geq 1}\int_{0}^{1} \left( \|\theta -x\|^{p-1} + r^{p-1}\|y\|^{p-1} \right)  \frac{1}{\|y\|^{\alpha+d-1}}\mathrm{d} r\, \mathrm{d} y \\
=& \|\theta -x\|^{p-1} \int_{\|y\|\geq 1}   \frac{1}{\|y\|^{\alpha+d-1}} \mathrm{d} y + \int_{\|y\|\geq 1}  \frac{1}{\|y\|^{\alpha+d-p}}  \int_{0}^{1}  r^{p-1} \mathrm{d} r\, \mathrm{d} y \\
=& \|\theta -x\|^{p-1} \int_{\|y\|\geq 1}   \frac{1}{\|y\|^{\alpha+d-1}} \mathrm{d} y + \frac1{p} \int_{\|y\|\geq 1}  \frac{1}{\|y\|^{\alpha+d-p}}  \mathrm{d} y \\
=&   \frac{{\sf s}_d}{\alpha -1} \|\theta -x\|^{p-1} + \frac{{\sf s}_d}{p (\alpha -p)} .
\end{align*}
By combining all these estimates and using the fact that:
\begin{align*}
       {\sf C} := C_{d,\alpha}{\sf s}_d = \alpha 2 ^{\alpha}  \frac{\Gamma\left( \frac{d +\alpha}{2} \right)}{\Gamma\left(\frac{d}{2}\right)\Gamma \left(1-\frac{\alpha}{2} \right)},   
\end{align*}
we arrive at:
\begin{align*}
    \left|(-\Delta)^{\alpha / 2} V_p(\theta)\right| \leq&  {\sf C} \left( \frac{ p (\sqrt{d}+2) }{2-\alpha} +   \frac{p}{\alpha -1} \|\theta -x\|^{p-1} + \frac{1}{\alpha -p} \right) .
\end{align*}
This concludes the proof. 
\end{proof}

\begin{proof}[Proof of Lemma~\ref{lem:erg_new_p2}]
The proof is inspired by the proof technique introduced in \citet[Lemma 1.8]{chen2022approximation}. 
Recall that our goal is upper-bounding $(PV_p)(\theta) = \mathbb{E}[V_p(\theta_1) | \theta_0 = \theta]$ in terms of $V_p(\theta)$. Setting $\theta_0 = \theta$, and denoting $b(\theta) := -\nabla F(\theta,X_n)$ for notational simplicity, we start by decomposing $V_p(\theta_1)$ as follows:
\begin{align}
    \nonumber V_p(\theta_1) =& V_p(\theta + \eta b(\theta) + \sigma \xi_1) \\
   \nonumber =&  V_p(\theta + \eta b(\theta) ) +  V_p(\theta + \eta b(\theta) + \sigma \xi_1) -  V_p(\theta + \eta b(\theta) ) \\
    \nonumber=& V_p(\theta) + \int_0^\eta\left\langle\nabla V_p(\theta +r b(\theta)),  b(\theta) \right\rangle \mathrm{d} r\\
    \label{eqn:lyp_newbigp_proof1}
    &\qquad \qquad +V_p(\theta + \eta b(\theta) + \sigma \xi_1) -  V_p(\theta + \eta b(\theta) ).
\end{align}
 Recalling that $\nabla V_p(\theta)=p \left(1+\|\theta - \vartheta_\star\|^2\right)^{-\frac{2-p}{2}} (\theta -\vartheta_\star)$, and we have  
\begin{align} \label{eqn:lyp_newbigp_proof2}
\int_0^\eta\left\langle\nabla V_p\left(\theta + r 
b(\theta) \right), 
 b(\theta) \right\rangle \mathrm{d} r
 = \int_0^\eta \frac{ p \langle \theta -  \vartheta_\star, b(\theta) \rangle + p r \|b(\theta)\|^2 }{\left(1+\|\theta + r b(\theta) - \vartheta_\star\|^2\right)^{\frac{2-p}{2}}} \mathrm{d} r.
\end{align}
By using Assumption~\ref{assump:grad}, we further obtain that:
\begin{align*}
    \langle \theta -  \vartheta_\star, b(\theta) \rangle =& \langle \theta -  \vartheta_\star, b(\theta) - b(\vartheta_\star) \rangle\\
    = & - \langle \theta -  \vartheta_\star, \nabla F(\theta,X_n) -\nabla F(\vartheta_\star,X_n) \rangle\\
    \leq& -m \|\theta -\vartheta_\star\|^2 +K. 
\end{align*}
Similarly, by using Assumption~\ref{assump:1}, we have:
\begin{align*}
    \|b(\theta)\|^2 = \|\nabla F(\theta,X_n) - \nabla F(\vartheta_\star,X_n) \|^2
    \leq  K_1^2 \|\theta - \vartheta_\star \|^2.
\end{align*}
Hence, the numerator in \eqref{eqn:lyp_newbigp_proof2} can be bounded as follows:
\begin{align*}
    p \langle \theta -  \vartheta_\star, b(\theta) \rangle + p r \|b(\theta)\|^2 \leq&  (-mp +rpK_1^2) \|\theta -\vartheta_\star\|^2 + pK\\
    \leq& -\frac{mp}{2}\|\theta -\vartheta_\star\|^2 + pK,
\end{align*}
where in the last line we used the fact that $0\leq r \leq \eta \leq \frac{m}{2K_1^2}$.

Now let us focus on the denominator. By using a similar strategy, we have that:
\begin{align}
    \label{eqn:proof_v_large_new1}
    1+\|\theta + r b(\theta) - \vartheta_\star\|^2 =& 1 + \|\theta-\vartheta_\star\|^2 + 2r \langle \theta-\vartheta_\star, b(\theta)  \rangle + r^2 \|b(\theta)\|^2 \\
    \nonumber
    \leq& 1 + \|\theta-\vartheta_\star\|^2 + 2r(-m \|\theta -\vartheta_\star\|^2 +K) + r^2 K_1^2 \|\theta - \vartheta_\star \|^2 \\
    \nonumber
    =& 1+ (1 - 2rm + r^2K_1^2) \|\theta-\vartheta_\star\|^2 + 2rK\\
    \nonumber
    \leq&  \|\theta-\vartheta_\star\|^2 + 2rK + 1.
\end{align}
By combining all these estimates, and using the fact $p\in[1,\alpha)$ for $\alpha \in (1,2)$ such that $p<2$, we have:
\begin{align*}
     &\frac{ p \langle \theta -  \vartheta_\star, b(\theta) \rangle + p r \|b(\theta)\|^2 }{\left(1+\|\theta + r b(\theta) - \vartheta_\star\|^2\right)^{\frac{2-p}{2}}} 
     \\
     &\leq\frac{ -\frac{mp}{2} \|\theta -\vartheta_\star\|^2 + pK}{\left(1+\|\theta + r b(\theta) - \vartheta_\star\|^2\right)^{\frac{2-p}{2}}}
     \\
      &\leq -\frac{mp}{2}\frac{  \|\theta -\vartheta_\star\|^2 }{\left( \|\theta-\vartheta_\star\|^2 + 2rK + 1\right)^{\frac{2-p}{2}}} + pK
      \\
      &=-\frac{mp}{2}\frac{  \|\theta -\vartheta_\star\|^2+2rK+1 }{\left( \|\theta-\vartheta_\star\|^2 + 2rK + 1\right)^{\frac{2-p}{2}}}
      +\frac{mp}{2}\frac{ 2rK+1}{\left( \|\theta-\vartheta_\star\|^2 + 2rK + 1\right)^{\frac{2-p}{2}}}+ pK
      \\
      &\leq-\frac{mp}{2}\left( \|\theta-\vartheta_\star\|^2 + 2rK + 1\right)^{\frac{p}{2}}+\frac{mp}{2}(2rK+1)^{\frac{p}{2}}+pK
      \\
      &\leq-\frac{mp}{2} V_p(\theta)+\frac{mp}{2}(2rK+1)^{\frac{p}{2}} + pK\\
      &\leq -\frac{mp}{2} V_p(\theta)+\frac{mp K^{\frac{p}{2}}}{2^{1-\frac{p}{2}}}r^{\frac{p}{2}}  + p\left(\frac{m}{2} +K\right),
\end{align*}
where we used the inequality $(x+y)^{p/2}\leq x^{p/2}+y^{p/2}$
for any $x,y\geq 0$ since $p/2<1$.
By using this in \eqref{eqn:lyp_newbigp_proof2}, we obtain:
\begin{align}
  \nonumber  \int_0^\eta\left\langle\nabla V_p\left(\theta + r 
b(\theta) \right), 
 b(\theta) \right\rangle \mathrm{d} r 
 \leq& -\frac{mp\eta}{2} V_p(\theta)  + \eta  p\left(\frac{m}{2} +K\right) + \frac{mp K^{\frac{p}{2}}}{2^{1-\frac{p}{2}}} \frac{ 2 \eta^{1+p/2} }{p+2}   \\
  \label{eqn:lyp_newbigp_proof3}
 \leq&  -\frac{mp\eta}{2} V_p(\theta)  + \eta \left(p\left(\frac{m}{2} +K\right) + m(2K)^{p/2} \right). 
\end{align}
We now continue with analyzing the term $V_p(\theta + \eta b(\theta) + \sigma \xi_1) -  V_p(\theta + \eta b(\theta) )$. Let us first define the function $h : \mathbb{R}^d \to \mathbb{R}$:
\begin{align*}
    h(x) := V_p(\theta + \eta b(\theta) + x).
\end{align*}
Denoting the rotationally-invariant stable process in $\mathbb{R}^d$ by $\mathrm{L}_t^\alpha$, we have that
\begin{align*}
    h(\mathrm{L}_0^\alpha) &= V_p\left(\theta + \eta b(\theta)\right), \qquad \text{almost surely}, \\
    h(\mathrm{L}_{\sigma^\alpha}^\alpha) &\stackrel{\mathrm{d}}{=}  V_p\left(\theta + \eta b(\theta) + \sigma \xi_1\right), 
\end{align*}
where $\stackrel{\mathrm{d}}{=}$ denotes equality in distribution. Applying It\^{o}'s formula on $h$, we obtain the following identity:
\begin{align*}
\left|\mathbb{E}\left[V_p(\theta + \eta b(\theta) + \sigma \xi_1 )- V_p(\theta +  \eta b(\theta) )\right]\right| =&|\mathbb{E}[h(\mathrm{L}_{\sigma^\alpha}^\alpha) - h(\mathrm{L}_{0}^\alpha)]| \\
=& \left| \int_0^{\sigma^\alpha} \mathbb{E}\left[(-\Delta)^{\alpha / 2} h\left(\mathrm{L}^\alpha_r\right) \right] \mathrm{d} r \right|\\
=& \left| \int_0^{\sigma^\alpha} \mathbb{E}\left[\underbrace{(-\Delta)^{\alpha / 2} V_p\left(\theta +\eta b(\theta) +  \mathrm{L}^\alpha_r\right)}_{{\sf =:A_r}}\right] \mathrm{d} r \right|\\
\leq&  \int_0^{\sigma^\alpha} \mathbb{E}\left[ \left|{\sf A_r} \right| \right ] \mathrm{d} r, 
\end{align*}
where $(-\Delta)^{\alpha / 2} $ is the fractional Laplacian.
By Lemma~\ref{lem:fraclap}, we have that:
\begin{align*}
    |{\sf A}_r| \leq& {\sf C} \left( \frac{ p (\sqrt{d}+2) }{2-\alpha} +   \frac{p}{\alpha -1} \|\theta + \eta b(\theta) + \mathrm{L}^\alpha_r - \vartheta_\star\|^{p-1} + \frac{1}{\alpha -p} \right).
\end{align*}
We now upper-bound the second term by using similar arguments that we used in \eqref{eqn:proof_v_large_new1}, \eqref{eqn:proof_erg2_newp}, and \eqref{eqn:proof_erg3_newp}:
\begin{align*}
    \|\theta + \eta b(\theta) + \mathrm{L}^\alpha_r - \vartheta_\star\|^{p-1} \leq& \|\theta  - \vartheta_\star - \eta \nabla F(\theta,X_n) \|^{p-1}  + \|\mathrm{L}^\alpha_r \|^{p-1} \\
    \leq& \left(1-\eta m (p-1)/2\right) \|\theta  - \vartheta_\star\|^{p-1} + (2\eta K)^{(p-1)/2} + \|\mathrm{L}^\alpha_r \|^{p-1}\\
    \leq& \|\theta  - \vartheta_\star\|^{p-1} + (2\eta K)^{(p-1)/2} + \|\mathrm{L}^\alpha_r \|^{p-1}.
\end{align*}
By using the above inequality and rearranging the terms, we obtain:
\begin{align}
    \nonumber \int_0^{\sigma^\alpha}\mathbb{E}|{\sf A}_r| \mathrm{d} r \leq&  {\sf C} \sigma^{\alpha} \left( \frac{ p (\sqrt{d}+2) }{2-\alpha}  +   \frac{p}{\alpha -1}  \left[\|\theta  - \vartheta_\star\|^{p-1} + (2\eta K)^{(p-1)/2} \right]  + \frac{1}{\alpha -p}\right) \\
    \label{eqn:proof_lyp_bigp_new_2}
    &+ \frac{{\sf C} p}{\alpha -1}  \int_0^{\sigma^\alpha} 
     \mathbb{E}  \|\mathrm{L}^\alpha_r \|^{p-1} \mathrm{d} r.
\end{align}
We can further compute the last integral in \eqref{eqn:proof_lyp_bigp_new_2} as follows:
\begin{align*}
    \int_0^{\sigma^\alpha} 
     \mathbb{E}  \|\mathrm{L}^\alpha_r \|^{p-1} \mathrm{d} r =& \left(\int_0^{\sigma^\alpha} r^{(p-1)/\alpha} 
      \mathrm{d} r\right) \mathbb{E}  \|\xi_1 \|^{p-1} \\
      =& \frac{2^{p-1} \alpha \sigma^{\alpha +p-1}}{\alpha+p-1}   \frac{\Gamma\left(1-\frac{p-1}{\alpha}\right) \Gamma\left(\frac{d+p-1}{2}\right)}{\Gamma(1-\frac{p-1}{2}) \Gamma(\frac{d}{2})}.
\end{align*}
In the last line, we used \citet[Lemma 4.2]{deng2019exact}.

By using Young's inequality, we further obtain that:
\begin{align*}
    \frac{ {\sf C} p }{\alpha-1} \sigma^\alpha \|\theta -\vartheta_\star \|^{p-1} =& \eta \underbrace{
\left[ \frac{\frac{\sigma^\alpha}{\eta} {\sf C} p}{\alpha-1}  \left(\frac{4}{mp}\right)^{p-1} \right]}_{=: {\sf B}_1} {\underbrace{ \left[ \frac{mp }{4} \|\theta - \vartheta_\star\| \right]}_{=:{\sf B}_2}}^{p-1} \\
\leq& \eta \left( \frac{{\sf B}_1^p}{p} +\frac{{\sf B}_2^{\frac{p}{p-1}}}{p/(p-1)} \right) \\
=& \eta \frac{{\sf B}_1^p}{p} + \eta \frac{m(p-1)}{4} \|\theta -\vartheta_\star\|^p\\
\leq& \eta \frac{{\sf B}_1^p}{p} + \eta \frac{m(p-1)}{4} (1+\|\theta -\vartheta_\star\|^2)^{p/2}\\
=& \eta \frac{m(p-1)}{4} V_p(\theta) + \eta \frac{{\sf B}_1^p}{p} .
\end{align*}
Using the inequality above in \eqref{eqn:proof_lyp_bigp_new_2} yields:
\begin{align}
  \label{eqn:lyp_newbigp_proof4}
\int_0^{\sigma^\alpha}\mathbb{E}|{\sf A}_r| \mathrm{d} r \leq& \eta \frac{m(p-1)}{4} V_p(\theta) + {\sf C}_1,
\end{align}
where 
\begin{align*}
    {\sf C}_1 :=&  \eta \frac{{\sf B}_1^p}{p}  + {\sf C} \sigma^{\alpha} \left( \frac{ p (\sqrt{d}+2) }{2-\alpha}  +   \frac{p  (2\eta K)^{(p-1)/2} }{\alpha -1}   + \frac{1}{\alpha -p}\right) \\
    &\qquad \qquad + \frac{{\sf C} p}{\alpha -1}  \frac{2^{p-1} \alpha \sigma^{\alpha +p-1}}{\alpha+p-1}   \frac{\Gamma\left(1-\frac{p-1}{\alpha}\right) \Gamma\left(\frac{d+p-1}{2}\right)}{\Gamma(1-\frac{p-1}{2}) \Gamma(\frac{d}{2})}.
\end{align*}
By using \eqref{eqn:lyp_newbigp_proof3} and \eqref{eqn:lyp_newbigp_proof4} in \eqref{eqn:lyp_newbigp_proof1}
, we obtain:
\begin{align*}
    (PV_p)(\theta) &\leq \left(1 - \frac{mp\eta}{4}\right) V_p(\theta) +  \eta \left(p\left(\frac{m}{2} +K\right) + m(2K)^{p/2} \right) + {\sf C}_1\\
    &=: \beta_p V_p(\theta) + H_p.
\end{align*}
The proof of $(\hat{P}V_p)(\theta)$ is identical. This completes the proof.
\end{proof}

\subsection{Proof of Theorem~\ref{thm:dp_gd}}

\begin{proof}%
We will bound $\mathrm{TV}(\theta_k,\hat{\theta}_k)$ by using Lemma~\ref{lem:perturb_new} and the result will directly follow from Proposition~\ref{prop:tv_dp}. Let $P$ and $\hat{P}$ be the transition kernels associated with the Markov processes $(\theta_k)_{k\geq 0}$ and $(\hat{\theta}_k)_{k\geq 0}$, respectively. Furthermore assume that $\theta_0 = \hat{\theta}_0$ and denote $p_0$ as the common law of $\theta_0$ and $\hat{\theta}_0$.  

To invoke Lemma~\ref{lem:perturb_new}, we will use our intermediate results. More precisely, recalling that $p\in (0,\min(\frac1{2},\alpha-1))$, by Lemma~\ref{lem:erg}, there exist a Lyapunov function $V_p$, such that it holds that 
\begin{align}
    \left\|P^k(\theta, \cdot)-\pi\right\|_{V_p} \leq c V_p(\theta) \rho^k, \qquad\text{for any $\theta \in \mathbb{R}^d, k \in \mathbb{N}$},    
    \end{align}
for some $c>0$ and $\rho \in (0,1)$.

By Lemma~\ref{lem:erg_new_p2}, for $\hat{V}_p = V_{1+p}$, the following inequality holds:
    \begin{align*}
        (\hat{P}\hat{V}_p)(\theta)  \leq&  \beta_{1+p} \hat{V}_p(\theta) + H_{1+p},
    \end{align*}
where $\beta_{1+p}$ and $H_{1+p}$ are defined in Lemma~\ref{lem:erg_new_p2}.

Finally, by Lemma~\ref{lem:gamma_gd_new}, we have that
     \begin{align*}
        \gamma =& \sup_{\theta \in \mathbb{R}^d} \frac{\|P(\theta, \cdot) - \hat{P}(\theta, \cdot)\|_{V_p}}{\hat{V}_p(\theta)} 
        \leq \frac{ C_\gamma } {n } .
    \end{align*}
Now, we can invoke Lemma~\ref{lem:perturb_new}: for all $k$, we have that
\begin{align}
\label{eqn:proof_thm1}
\mathrm{TV}(\theta_k, \hat{\theta}_k) \leq \frac1{n} \cdot \frac{ c\left(1-\rho^k\right) C_\gamma H_{1+p}}{2(1-\rho)(1-\beta_{1+p})}.
\end{align}
This completes the proof. 
\end{proof}

\subsection{Proof of Lemma~\ref{lem:gamma_gd_withoutasmp}}

\begin{proof}
By using the same steps of the proof of Lemma~\ref{lem:gamma_gd_new} and using the exact same notation (we will deliberately avoid defining some of the constants that are already defined in the proof of Lemma~\ref{lem:gamma_gd_new} for brevity), we have that 
\begin{align}
     \|P(\theta, \cdot) - \hat{P}(\theta, \cdot)\|_{V_p} \leq \int_{\mathbb{R}_+} \sqrt{ 2\left\{\mu_{\theta,\phi} \left(V_p^2\right)+\hat{\mu}_{\theta,\phi}\left(V_p^2\right)\right\} \mathrm{KL}\left(\mu_{\theta,\phi} \mid \hat{\mu}_{\theta,\phi}\right)}  \>  p(\phi)  \mathrm{d}\phi,
     \label{eqn:proof_onestep_tmp_wout}
\end{align}
where
\begin{align*}
\left(\mathrm{KL}\left(\mu_{\theta,\phi} \mid \hat{\mu}_{\theta,\phi}\right)\right)^{1/2}   \leq& \phi^{-1/2} C_1 (1 + \|\theta - \hat{\theta}_\star\|) + \phi^{-1/2} C_1 \|\hat{\theta}_\star\|  \\
\leq&  \phi^{-1/2} C_1 (1 + \|\theta - \hat{\theta}_\star\|) + \phi^{-1/2} C_1 \frac{B+\sqrt{B^{2}+4mK}}{2m}, 
    \end{align*}
where the last line follows from Lemma~\ref{lem:nonconvex:minimizer}.

We now proceed to the estimation of the expectations of the  Lyapunov function under $\mu_{\theta,\phi}$ and $\hat{\mu}_{\theta,\phi}$. We start by the expecation with respect to $\mu_{\theta,\phi}$.
\begin{align}
    \nonumber \mu_{\theta,\phi}(V_p^2) =& \int_{\mathbb{R}^d} (1 + \|y - \theta_\star\|^2)^p \> p_\theta(y | \phi) \>  \mathrm{d}y \\
    \nonumber 
    \leq& 1 + \int_{\mathbb{R}^d}  \|y - \theta_\star\|^{2p} \> p_\theta(y | \phi) \>  \mathrm{d}y \\
    \nonumber =& 1 + \int_{\mathbb{R}^d} \|y - \theta_\star - (\theta - \eta \nabla F(\theta,X_n)) + (\theta - \eta \nabla F(\theta,X_n)) \|^{2p} \> p_\theta(y | \phi) \>  \mathrm{d}y \\
    \leq& 1 +  \|\theta - \theta_\star - \eta \nabla F(\theta,X_n) \|^{2p} + \int_{\mathbb{R}^d} \|y  - (\theta - \eta \nabla F(\theta,X_n)) \|^{2p} \> p_\theta(y | \phi) \>  \mathrm{d}y .\label{three:terms_noasmp}
\end{align}
Let us first focus on the second term in \eqref{three:terms_noasmp}. We can compute that:
    \begin{align}
        \nonumber \|\theta  - \vartheta_\star - \eta \nabla F(\theta,X_n) \|^2 =& \|\theta  - \theta_\star\|^2 - 2\eta \langle \theta  - \theta_\star,  \nabla F(\theta,X_n) - \nabla F(\theta_\star,X_n) \rangle \\
        \nonumber &\qquad \qquad \qquad \qquad \qquad \qquad + \eta^2 \|\nabla F(\theta,X_n) - \nabla F(\theta_\star,X_n)\|^2 \\
        \label{eqn:proof_onestepnew1_noasmp}
        \leq & \left(1-2\eta m + \eta^2 K_1^2\right) \|\theta  - \vartheta_\star\|^2 + 2\eta K,
    \end{align}
    where in \eqref{eqn:proof_onestepnew1_noasmp} we used Assumptions~\ref{assump:1} and \ref{assump:grad}.

For the third term in \eqref{three:terms_noasmp}, as before, we have that:
\begin{align*}
    \int_{\mathbb{R}^d} \|y  - (\theta - \eta \nabla F(\theta,X_n)) \|^{2p} \> p_\theta(y | \phi) \>  \mathrm{d}y 
    =& \phi^p \sigma^{2p} C(p).
\end{align*}

By combining these computations, we obtain that:
\begin{align}
\nonumber
     \mu_{\theta,\phi}(V_p^2) \leq& 1 +  \left(1-2\eta m + \eta^2 K_1^2\right)^p \|\theta  - \theta_\star\|^{2p} + (2\eta K)^p +  \phi^p \sigma^{2p} C(p) \\
     \nonumber
     \leq& 1 +  \left(1-2\eta m + \eta^2 K_1^2\right)^p \|\theta  - \hat{\theta}_\star\|^{2p} + (2\eta K)^p +  \phi^p \sigma^{2p} C(p) 
 \\
 \nonumber
 &+ \left(1-2\eta m + \eta^2 K_1^2\right)^p \|\theta_\star  - \hat{\theta}_\star\|^{2p}\\
 \nonumber
 \leq& 1 +  \left(1-2\eta m + \eta^2 K_1^2\right)^p \|\theta  - \hat{\theta}_\star\|^{2p} + (2\eta K)^p +  \phi^p \sigma^{2p} C(p) 
 \\
 \nonumber
 &+ \left(1-2\eta m + \eta^2 K_1^2\right)^p  \left(\|\theta_\star\|^{2p}  + \| \hat{\theta}_\star\|^{2p}\right) \\
 \nonumber
  \leq& 1 +  \left(1-2\eta m + \eta^2 K_1^2\right)^p \|\theta  - \hat{\theta}_\star\|^{2p} + (2\eta K)^p +  \phi^p \sigma^{2p} C(p) 
 \\
 &+ 2 \left(1-2\eta m + \eta^2 K_1^2\right)^p  \left(\frac{B+\sqrt{B^{2}+4mK}}{2m}\right)^{2p},
 \label{eqn:proof_noasmp2}
\end{align}
where in the last inequality, we again used Lemma~\ref{lem:nonconvex:minimizer}. 

We now proceed to estimating $\hat{\mu}_{\theta,\phi}(V_p^2)$.
\begin{align}
    \nonumber \hat{\mu}_{\theta,\phi}(V_p^2) =& \int_{\mathbb{R}^d} (1 + \|y - \theta_\star\|^2)^p \> \hat{p}_\theta(y | \phi) \>  \mathrm{d}y \\
    \nonumber
    \leq& 1 + \int_{\mathbb{R}^d}  \|y - \theta_\star\|^{2p} \> \hat{p}_\theta(y | \phi) \>  \mathrm{d}y \\
    \nonumber
    \leq&  1 + \int_{\mathbb{R}^d}  \|y - \hat{\theta}_\star\|^{2p} \> \hat{p}_\theta(y | \phi) \>  \mathrm{d}y   + \int_{\mathbb{R}^d}  \|\theta_\star - \hat{\theta}_\star\|^{2p} \> \hat{p}_\theta(y | \phi) \>  \mathrm{d}y \\
    \nonumber
    \leq& 1 + \int_{\mathbb{R}^d}  \|y - \hat{\theta}_\star\|^{2p} \> \hat{p}_\theta(y | \phi) \>  \mathrm{d}y   + 2^{2p}\left(\frac{B+\sqrt{B^{2}+4mK}}{2m}\right)^{2p} \\
    \nonumber
    \leq& 1 +  \|\theta - \hat{\theta}_\star - \eta \nabla F(\theta,\hat{X}_n)) \|^{2p} + \int_{\mathbb{R}^d} \|y  - (\theta - \eta \nabla F(\theta,\hat{X}_n)) \|^{2p} \> \hat{p}_\theta(y | \phi) \>  \mathrm{d}y \\
    & +
    2^{2p} \left(\frac{B+\sqrt{B^{2}+4mK}}{2m}\right)^{2p} .\label{three:terms_noasmp2}
\end{align}

By following similar steps, we also obtain:
\begin{align} 
    \hat{\mu}_{\theta,\phi}(V_p^2) 
    &\leq 1 +  \left(1-2\eta m + \eta^2 K_1^2\right)^p \|\theta  - \hat{\theta}_\star\|^{2p}\nonumber
    \\
    &\qquad\qquad\qquad+ (2\eta K)^p +  \phi^p \sigma^{2p} C(p) +
    2^{2p}\left(\frac{B+\sqrt{B^{2}+4mK}}{2m}\right)^{2p}.\label{eqn:proof_noasmp_1}
\end{align}
By combining and \eqref{eqn:proof_noasmp2} and \eqref{eqn:proof_noasmp_1}, we obtain:
\begin{align*}
    &\mu_{\theta,\phi} \left(V_p^2\right)+\hat{\mu}_{\theta,\phi}\left(V_p^2 \right) 
    \\
    &\leq 2 \Biggl(  1 +  \left(1-2\eta m + \eta^2 K_1^2\right)^p \|\theta  - \hat{\theta}_\star\|^{2p} + (2\eta K)^p +  \phi^p \sigma^{2p} C(p) + 2\hat{C }^{2p} \Biggr),
\end{align*}
where
\begin{align*}
    \hat{C}:=  \frac{B+\sqrt{B^{2}+4mK}}{2m}.
\end{align*}

Using these estimates for the integrand in \eqref{eqn:proof_onestep_tmp}, we get:
\begin{align*}
     &\sqrt{ 2\left\{\mu_{\theta,\phi} \left(V_p^2\right)+\hat{\mu}_{\theta,\phi}\left(V_p^2\right)\right\} \mathrm{KL}(\mu_{\theta,\phi} \mid \hat{\mu}_{\theta,\phi})} 
     \\
     &\leq 2 \left\{ 1 +  \left(1-2\eta m + \eta^2 K_1^2\right)^p \|\theta  - \hat{\theta}_\star\|^{2p} + (2\eta K)^p +  \phi^p \sigma^{2p} C(p) + 2\hat{C}^{2p}  \right\}^{1 / 2} \\
     & \qquad\qquad \cdot\left(\phi^{-1/2} C_1 (1 + \|\theta - \hat{\theta}_\star\|) +\phi^{-1/2} C_1 \hat{C} \right)\\
     &\leq 2\phi^{-1/2}  C_1 \left\{ 1 +  \left(1-2\eta m + \eta^2 K_1^2\right)^{p/2} \|\theta  - \hat{\theta}_\star\|^{p} + (2\eta K)^{p/2} +  \phi^{p/2} \sigma^{p} \sqrt{C(p)} + \sqrt{2}\hat{C}^p  \right\} \\
     & \qquad\qquad \cdot\left\{  (1 + \|\theta - \hat{\theta}_\star\|) + \hat{C} \right\}.
\end{align*}

By using above inequality in \eqref{eqn:proof_onestep_tmp_wout}, we obtain:
\begin{align}
  \nonumber &\|P(\theta, \cdot) - \hat{P}(\theta, \cdot)\|_{V_p}
  \\
  &\leq  \left(C_4  +  C_4 \left(1-2\eta m + \eta^2 K_1^2\right)^{p/2} \|\theta  - \hat{\theta}_\star\|^{p} + C_4 (2\eta K)^{p/2} + C_5 \sigma^{p} \sqrt{C(p)} + \sqrt{2} C_4 \hat{C}^p\right)\nonumber   \\
     & \qquad\qquad \cdot\left\{  (1 + \|\theta - \hat{\theta}_\star\|) + \hat{C} \right\}\nonumber \\
   &= \Bigl( \hat{C}_6 \| \theta -\hat{\theta}_\star \|^p + \hat{C}_7 \Bigr) \Bigl(  (1 + \|\theta - \hat{\theta}_\star\|) + \hat{C} \Bigr),
\end{align}
where
\begin{align}
    \hat{C}_6 :=& C_4 \left(1-2\eta m + \eta^2 K_1^2\right)^{p/2},\label{defn:hat:C:6} \\
    \hat{C}_7:= & C_4   + C_4 (2\eta K)^{p/2} + C_5 \sqrt{2} \sigma^{p} \sqrt{C(p) }+ \sqrt{2}C_4 \hat{C}^p. \label{defn:hat:C:7}
\end{align}
The rest of the proof follows the same lines of the proof of Lemma~\ref{lem:gamma_gd_new}, where we use $\hat{C}_6$ and $\hat{C}_7$ in place of $C_6$ and $C_7$.
This completes the proof.
\end{proof}

\section{Proofs of the Results of Section~\ref{sec:sgd}}\label{sec:app:proof_sgd}

\subsection{$V$-Uniform ergodicty}

\begin{lemma}
\label{lem:erg_sgd}
Let $P$ be the transition kernel associated with the Markov process \eqref{eqn:sgd}. Suppose that Assumptions~\ref{assump:1} and \ref{assump:grad} hold, and assume that the step-size is chosen as $\eta<\min\{m/K_1^2, 1/m\}$. Further assume that Assumption~\ref{asmp:interp} holds and set $V_p(\theta) = (1 + \|\theta - \vartheta_\star \|^2)^{p/2}$, where $\vartheta_\star$ is defined in Assumption~\ref{asmp:interp} and $p\in (0,1]$.  Then, for all $b \in \{1,\dots, n\}$
the process \eqref{eqn:sgd} admits a unique invariant measure $\pi$ such that the following inequality holds for some constants $C >0$ and $\rho \in (0,1)$:
    \begin{align}
    \label{eqn:lem:sgd_erg}
    \left\|P^k(\theta, \cdot)-\pi\right\|_{V_p} \leq C V_p(\theta) \rho^k,    
    \end{align}
    for all $\theta \in \mathbb{R}^d$ and $k \in \mathbb{N} $.

\end{lemma}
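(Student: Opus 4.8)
The plan is to mirror, almost verbatim, the two-step argument used for noisy GD, namely the one-step drift of Lemma~\ref{lem:erg_new_p} followed by the small-set/Foster--Lyapunov upgrade carried out in the proof of Lemma~\ref{lem:erg}; the only genuinely new feature is that one must additionally average over the random minibatch $\Omega_k$. First I would prove the one-step drift $(PV_p)(\theta)\le\beta_pV_p(\theta)+H_p$ with $\beta_p=1-\eta mp/2<1$ and the \emph{same} $H_p$ as in Lemma~\ref{lem:erg_new_p}. Since $p\le1$, write $V_p(\theta_1)\le 1+\|\theta-\eta\nabla F_1(\theta,X_n)-\vartheta_\star+\sigma\xi_1\|^p\le 1+\|\theta-\eta\nabla F_1(\theta,X_n)-\vartheta_\star\|^p+\sigma^p\|\xi_1\|^p$ by subadditivity of $t\mapsto t^p$, and take the full expectation over $(\Omega_1,\xi_1)$. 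The noise term contributes $\sigma^p\,\mathbb{E}\|\xi_1\|^p=\sigma^p2^p\Gamma(1-\tfrac p\alpha)\Gamma(\tfrac{d+p}2)/(\Gamma(1-\tfrac p2)\Gamma(\tfrac d2))$ exactly as in the GD case, via \citet[Lemma 4.2]{deng2019exact}, so it matches $H_p$.

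For the drift term, I would exploit concavity of $t\mapsto t^{p/2}$ and Jensen's inequality \emph{over $\Omega_1$} to get $\mathbb{E}_{\Omega_1}\|\theta-\eta\nabla F_1(\theta,X_n)-\vartheta_\star\|^p\le\big(\mathbb{E}_{\Omega_1}\|\theta-\eta\nabla F_1(\theta,X_n)-\vartheta_\star\|^2\big)^{p/2}$, and then control the conditional second moment exactly as in \eqref{eqn:proof_erg2_newp}. Expanding the square, the cross term is $-2\eta\langle\theta-\vartheta_\star,\,\mathbb{E}_{\Omega_1}\nabla F_1(\theta,X_n)\rangle=-2\eta\langle\theta-\vartheta_\star,\,\nabla F(\theta,X_n)-\nabla F(\vartheta_\star,X_n)\rangle\le-2\eta m\|\theta-\vartheta_\star\|^2+2\eta K$, where we used unbiasedness of the minibatch gradient, Assumption~\ref{assump:grad}, and Assumption~\ref{asmp:interp} (which makes $\nabla F(\vartheta_\star,X_n)=0$); the quadratic term satisfies $\|\nabla F_1(\theta,X_n)\|\le\tfrac1b\sum_{i\in\Omega_1}\|\nabla f(\theta,x_i)-\nabla f(\vartheta_\star,x_i)\|\le K_1\|\theta-\vartheta_\star\|$ pathwise by Assumptions~\ref{assump:1} and \ref{asmp:interp}, hence $\mathbb{E}_{\Omega_1}\|\nabla F_1(\theta,X_n)\|^2\le K_1^2\|\theta-\vartheta_\star\|^2$. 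This yields $\mathbb{E}_{\Omega_1}\|\theta-\eta\nabla F_1(\theta,X_n)-\vartheta_\star\|^2\le(1-2\eta m+\eta^2K_1^2)\|\theta-\vartheta_\star\|^2+2\eta K$, identical to the GD bound; the step-size condition $\eta<\min\{m/K_1^2,1/m\}$, subadditivity of $t\mapsto t^{p/2}$, and Bernoulli's inequality then close the drift with $\beta_p=1-\eta mp/2$, just as in Lemma~\ref{lem:erg_new_p}.

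Second, I would upgrade this drift to $V_p$-uniform ergodicity exactly as in the proof of Lemma~\ref{lem:erg}: writing $(PV_p)(\theta)\le(1-\eta mp/2)V_p(\theta)+C_0$ and setting $\lambda:=1-\eta mp/4<1$ gives $(PV_p)(\theta)\le\lambda V_p(\theta)+C_0-(\eta mp/4)\|\theta-\vartheta_\star\|^p\le\lambda V_p(\theta)+C_0\,\mathds{1}_A(\theta)$ on the complement of the compact sublevel set $A=\{\theta:\|\theta-\vartheta_\star\|\le(4C_0/(\eta mp))^{1/p}\}$. The SGD kernel $P$ is $\psi$-irreducible and aperiodic, and every compact set is small: conditioning on $\Omega_1$, the one-step transition density is a scale mixture of Gaussians convolved with the strictly positive, continuous $\alpha$-stable density, hence bounded below by a positive continuous function on compacts, uniformly over the finitely many realizations of $\Omega_1$. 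Invoking \citet[Appendix A]{lu2022central} together with \citet[Theorem 6.3]{meyn1992stability} then yields existence and uniqueness of the invariant measure $\pi$ and the bound \eqref{eqn:lem:sgd_erg}.

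I do not expect a serious obstacle: the argument is a routine adaptation of the GD proofs in the spirit of \citet{chen2022approximation}. The only points that need a little care are the order in which the expectations are taken --- pushing $\mathbb{E}_{\Omega_1}$ inside via Jensen at the level of the \emph{squared} norm rather than after raising to the power $p$ --- and the observation that it is unbiasedness of the minibatch gradient that lets the dissipativity of the \emph{full} gradient $\nabla F(\cdot,X_n)$ absorb the cross term, while the per-sample Lipschitz bound of Assumption~\ref{assump:1} controls the second-moment term pathwise; together these make the batch size $b$ drop out of $\beta_p$ and $H_p$ entirely, so the constants coincide with those of the full-batch case.
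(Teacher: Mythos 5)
Your proposal is correct and follows essentially the same route as the paper: the same one-step Foster--Lyapunov drift with $\beta_p = 1-\eta m p/2$ and the same $H_p$ as in the full-batch case, followed by the identical compact-set/ergodicity upgrade used in Lemma~\ref{lem:erg} via \citet{lu2022central} and \citet{meyn1992stability}. The only immaterial local difference is that you push $\mathbb{E}_{\Omega_1}$ inside via Jensen and invoke unbiasedness of the minibatch gradient for the cross term, whereas the paper notes that the per-sample dissipativity and pseudo-Lipschitz conditions together with Assumption~\ref{asmp:interp} make the quadratic bound \eqref{eqn:proof_sgd_erg_new_2} hold pathwise for $\nabla F_1$ (so no averaging over $\Omega_1$ is needed); both yield the same constants, independent of the batch size $b$.
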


\begin{proof}
Recall that we define $V_p(\theta) = (1 + \|\theta - \vartheta_\star\|^2)^{1/2}$ for $p \in (0,1]$ in this part where $\vartheta_\star$ is defined in Assumption~\ref{asmp:interp}. We begin by estimating $(PV_p)(\theta)$ as follows:
    \begin{align}
        \nonumber (PV_p)(\theta) =& \mathbb{E}[V_p(\theta_1) \> | \> \theta_0 =\theta] \\
        \nonumber \leq& \mathbb{E}\left[1+ \|\theta_1 - \vartheta_\star\|^p\right] \\
        \nonumber =&  \mathbb{E}\left[1+ \|\theta - \eta \nabla F_1(\theta,X_n) + \sigma \xi_1 - \vartheta_\star\|^p\right] \\
        \label{eqn:proof_sgd_erg_new_1} \leq& 1 + \mathbb{E}\|\theta  - \vartheta_\star - \eta \nabla F_1(\theta,X_n) \|^p + \sigma^p \mathbb{E} \| \xi_1\|^p.
    \end{align}
    Let us now focus on the second term in \eqref{eqn:proof_sgd_erg_new_1}. It holds that:
    \begin{align}
        \nonumber \|\theta  - \vartheta_\star - \eta \nabla F_1(\theta,X_n) \|^2 =& \|\theta  - \vartheta_\star\|^2 - 2\eta \langle \theta  - \vartheta_\star,  \nabla F_1(\theta,X_n) - \nabla F_1(\vartheta_\star,X_n) \rangle \\
        \nonumber &\qquad \qquad \qquad \qquad \qquad + \eta^2 \|\nabla F_1(\theta,X_n) - \nabla F_1(\vartheta_\star,X_n)\|^2 \\
        \label{eqn:proof_sgd_erg_new_2}
        \leq & \left(1-2\eta m + \eta^2 K_1^2\right) \|\theta  - \vartheta_\star\|^2 + 2\eta K,
    \end{align}
    where we used Assumptions~\ref{assump:1} and \ref{assump:grad} to obtain \eqref{eqn:proof_sgd_erg_new_2}. 
    The result then follows by using the same arguments of the proof of Lemma~\ref{lem:erg}. This completes the proof. 
\end{proof}

\subsection{Estimation of the Lyapunov function}

\begin{lemma}
\label{lem:lyapunov_sgd}
Let $P$ be the transition kernel associated with the Markov process $(\theta_k)_{k\geq 0}$ (i.e., \eqref{eqn:sgd}) and $\hat{P}$ be the transition kernel associated with $(\hat{\theta}_k)_{k\geq 0}$ (i.e., \eqref{eqn:sgd2}). Suppose that Assumptions~\ref{assump:1}, \ref{assump:grad}, and the step-size satisfies: $\eta<\min\{m/K_1^2, 1/m\}$. Further assume that Assumption~\ref{asmp:interp} holds and set $V_p(\theta) = (1 + \|\theta - {\vartheta}_\star\|^2)^{p/2}$ for $p \in [1,\alpha)$. 
Then, the following inequalities hold:
   \begin{align*}
            (PV_p)(\theta)  \leq \beta_p  V_p(\theta) + H_p, \qquad 
            (\hat{P}V_p)(\theta)  \leq \beta_p  V_p(\theta) + H_p,
    \end{align*}
    where $\beta_p$ and $H_p$ are defined in Lemma~\ref{lem:erg_new_p2}.

\end{lemma}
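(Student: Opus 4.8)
The plan is to run the proof of Lemma~\ref{lem:erg_new_p2} \emph{conditionally on the minibatch} and then average, exploiting the fact that Assumption~\ref{asmp:interp} makes $\vartheta_\star$ a common stable point not just of $\nabla F(\cdot, X_n)$ but of every minibatch gradient. Concretely, fix $\theta_0 = \theta$, let $\Omega_1$ be the batch drawn at the first step, and set $b_{\Omega_1}(\theta) := -\nabla F_1(\theta, X_n) = -\tfrac1b\sum_{i\in\Omega_1}\nabla f(\theta, x_i)$, which is independent of $\xi_1$. Since $b_{\Omega_1}$ is a convex combination of the maps $-\nabla f(\cdot, x_i)$, Assumption~\ref{asmp:interp} gives $b_{\Omega_1}(\vartheta_\star) = 0$ for every realization of $\Omega_1$, and averaging the bounds in Assumptions~\ref{assump:1} and \ref{assump:grad} yields, for all $\theta \in \mathbb{R}^d$ and all $\Omega_1$,
\begin{align*}
\langle \theta - \vartheta_\star,\, b_{\Omega_1}(\theta)\rangle \leq -m\|\theta - \vartheta_\star\|^2 + K, \qquad \|b_{\Omega_1}(\theta)\| \leq K_1\|\theta - \vartheta_\star\|.
\end{align*}
These are precisely the two inequalities that drove the estimates in the proof of Lemma~\ref{lem:erg_new_p2}, now holding uniformly in $\Omega_1$ and in the batch size $b$.

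Given this, I would replay the three-part decomposition of \eqref{eqn:lyp_newbigp_proof1}, namely
\begin{align*}
V_p(\theta_1) &= V_p(\theta) + \int_0^\eta \big\langle \nabla V_p(\theta + r\,b_{\Omega_1}(\theta)),\, b_{\Omega_1}(\theta)\big\rangle\,\mathrm{d}r \\
&\qquad + V_p(\theta + \eta\,b_{\Omega_1}(\theta) + \sigma\xi_1) - V_p(\theta + \eta\,b_{\Omega_1}(\theta)),
\end{align*}
with the deterministic drift $-\nabla F(\theta,X_n)$ replaced by $b_{\Omega_1}(\theta)$. The drift integral is controlled exactly as in \eqref{eqn:lyp_newbigp_proof2}--\eqref{eqn:lyp_newbigp_proof3}, using the displayed inequalities and the step-size restriction, producing the contraction $-\tfrac{mp\eta}{2} V_p(\theta)$ plus an $\Omega_1$-free constant. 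For the noise term, the injected perturbation $\sigma\xi_1$ is identical to the GD case, so the L\'evy--It\^{o} argument carries over verbatim: freeze the point $\theta + \eta\,b_{\Omega_1}(\theta)$, run the rotationally invariant $\alpha$-stable process $\mathrm{L}^\alpha_t$ up to time $\sigma^\alpha$, apply It\^{o}'s formula and Lemma~\ref{lem:fraclap}, and absorb the resulting $\|\theta + \eta\,b_{\Omega_1}(\theta) + \mathrm{L}^\alpha_r - \vartheta_\star\|^{p-1}$ factor with the same dissipativity estimate and Young's inequality. Since $\|b_{\Omega_1}(\theta)\| \leq K_1\|\theta - \vartheta_\star\|$ and the inner-product bound do not depend on $\Omega_1$, neither does the constant $\mathsf{C}_1$ obtained this way. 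Adding the three pieces gives $\mathbb{E}[V_p(\theta_1)\mid \theta_0 = \theta,\, \Omega_1] \leq \beta_p V_p(\theta) + H_p$ for every $\Omega_1$, with $\beta_p, H_p$ as in Lemma~\ref{lem:erg_new_p2}; taking expectation over $\Omega_1$ yields $(PV_p)(\theta) \leq \beta_p V_p(\theta) + H_p$. The bound for $\hat{P}$ follows identically, using that $\vartheta_\star$ is also a common stable point for $\hat{X}_n$.

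The only conceptual point beyond the GD proof --- and hence the step I expect to require the most care --- is the transfer of dissipativity and of the gradient-growth bound from single-sample gradients to minibatch gradients \emph{uniformly} over all realizations of $\Omega_1$; this is what makes the batch size disappear from $H_p$, and it hinges entirely on Assumption~\ref{asmp:interp}. Without it, $\nabla F_1(\vartheta_\star, X_n)$ would be a nonzero, batch-dependent random vector, forcing an additional minibatch-variance term into the drift estimate and inflating $H_p$; this is precisely the price alluded to in Section~\ref{sec:sgd} for dropping Assumption~\ref{asmp:interp}. Everything else is a line-by-line replay of the computations already carried out for noisy GD.
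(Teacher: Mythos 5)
Your proposal is correct and follows essentially the same route as the paper: the paper's own proof is a one-liner observing that, by Assumption~\ref{asmp:interp}, $\vartheta_\star$ is also a stable point of the minibatch gradient, so one replaces $b(\theta)=-\nabla F(\theta,X_n)$ by $-\nabla F_1(\theta,X_n)$ and repeats the argument of Lemma~\ref{lem:erg_new_p2}. Your version merely makes explicit the steps the paper leaves implicit (conditioning on $\Omega_1$, transferring the dissipativity and growth bounds to the batch average, and noting the resulting constants are batch-independent before averaging), which is exactly the intended argument.
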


\begin{proof}
Thanks to Assumption~\ref{asmp:interp}, $\vartheta_\star$ is also a stable point of $F_1(\cdot,X_n)$. Hence,  
by defining $b(\theta) := - \nabla F_1(\theta,X_n)$ (instead of $b(\theta) = \nabla F(\theta,X_n)$) and using the same arguments that we used in Lemma~\ref{lem:erg_new_p2}, we obtain the desired inequalities.
    This completes the proof. 
\end{proof}

\subsection{Distance between one-step transition kernels}

\begin{lemma}
    \label{lem:tv_sgd}

     Let $\alpha \in (1,2)$, $p\in (0,\min(\frac1{2},\alpha-1))$, $P$ be the transition kernel associated with the Markov process $(\theta_k)_{k\geq 0}$ in \eqref{eqn:sgd} and $\hat{P}$ be the transition kernel associated with $(\hat{\theta}_k)_{k\geq 0}$ in \eqref{eqn:sgd2}. Suppose that Assumptions~\ref{assump:1} and \ref{assump:grad} hold and further assume that $\sup_{x,\hat{x} \in \mathcal{X}} \|x-\hat{x}\|\leq D$, for some $D<\infty$. Suppose that Assumption~\ref{asmp:interp} holds and set $V_p(\theta)$ and $  \hat{V}_p(\theta)$ as in \eqref{eqn:v_vhat_new} and \eqref{eqn:v_vhat_new2}, respectively.
      Then, the following inequality holds for any $b \geq 1$:
        \begin{align*}
        \gamma =& \sup_{\theta \in \mathbb{R}^d} \frac{\|P(\theta, \cdot) - \hat{P}(\theta, \cdot)\|_{V_p}}{\hat{V}_p(\theta)} 
        \leq \frac{ C_\gamma } {n } ,
    \end{align*}
    where $C_\gamma>0$ is the same constant as in Lemma \ref{lem:gamma_gd_new}.
\end{lemma}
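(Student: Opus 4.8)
The plan is to reduce the statement to the noisy-GD estimate of Lemma~\ref{lem:gamma_gd_new} by conditioning on the random minibatch. Let $\Omega$ be a uniformly drawn size-$b$ subset of $\{1,\dots,n\}$, and for a fixed realization of $\Omega$ let $P_\Omega$ and $\hat{P}_\Omega$ be the one-step transition kernels of the maps $\theta\mapsto\theta-\eta\,b^{-1}\sum_{i\in\Omega}\nabla f(\theta,x_i)+\sigma\xi_1$ and $\theta\mapsto\theta-\eta\,b^{-1}\sum_{i\in\Omega}\nabla f(\theta,\hat{x}_i)+\sigma\xi_1$. Then $P(\theta,\cdot)=\mathbb{E}_\Omega[P_\Omega(\theta,\cdot)]$ and $\hat{P}(\theta,\cdot)=\mathbb{E}_\Omega[\hat{P}_\Omega(\theta,\cdot)]$, and since $\|\cdot\|_{V_p}$ is a supremum of linear functionals, Jensen's inequality gives
\begin{align*}
\gamma=\sup_{\theta\in\mathbb{R}^d}\frac{\|P(\theta,\cdot)-\hat{P}(\theta,\cdot)\|_{V_p}}{\hat{V}_p(\theta)}\leq\mathbb{E}_\Omega\!\left[\sup_{\theta\in\mathbb{R}^d}\frac{\|P_\Omega(\theta,\cdot)-\hat{P}_\Omega(\theta,\cdot)\|_{V_p}}{\hat{V}_p(\theta)}\right].
\end{align*}
Assuming without loss of generality that $X_n$ and $\hat{X}_n$ differ only in the $i_0$-th coordinate, if $i_0\notin\Omega$ then $\nabla F_\Omega(\cdot,X_n)\equiv\nabla F_\Omega(\cdot,\hat{X}_n)$, so $P_\Omega=\hat{P}_\Omega$ and the corresponding summand vanishes; moreover $\mathbb{P}(i_0\in\Omega)=b/n$.

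Next, for a fixed $\Omega$ containing $i_0$ I would rerun the argument behind Lemma~\ref{lem:gamma_gd_new} verbatim, with $\nabla F(\cdot,X_n)$ and $\nabla F(\cdot,\hat{X}_n)$ replaced by $\nabla F_\Omega(\cdot,X_n)$ and $\nabla F_\Omega(\cdot,\hat{X}_n)$. Applied conditionally on $\Omega$, the scale-mixture-of-Gaussians representation of Lemma~\ref{lem:v_norm_cond} again reduces the one-step $V_p$-norm to Gaussian conditionals $\mu_{\theta,\phi,\Omega},\hat{\mu}_{\theta,\phi,\Omega}$ differing only in their means, with
\begin{align*}
\mathrm{KL}\!\left(\mu_{\theta,\phi,\Omega}\mid\hat{\mu}_{\theta,\phi,\Omega}\right)^{1/2}=\frac{\eta}{\sigma\sqrt{\phi}}\left\|\nabla F_\Omega(\theta,X_n)-\nabla F_\Omega(\theta,\hat{X}_n)\right\|=\frac{\eta}{b\,\sigma\sqrt{\phi}}\left\|\nabla f(\theta,x_{i_0})-\nabla f(\theta,\hat{x}_{i_0})\right\|,
\end{align*}
which by Assumption~\ref{assump:1} and $\|x_{i_0}-\hat{x}_{i_0}\|\leq D$ is at most $\phi^{-1/2}\,\frac{2K_2D\eta}{b\sigma}\bigl(\|\theta-\vartheta_\star\|+1+\|\vartheta_\star\|\bigr)$ — the same bound as in Lemma~\ref{lem:gamma_gd_new} but with the constant $C_1$ replaced by $C_1^{(b)}:=(n/b)\,C_1$. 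The moment estimates on $\mu_{\theta,\phi,\Omega}(V_p^2)$ and $\hat{\mu}_{\theta,\phi,\Omega}(V_p^2)$ go through unchanged, since Assumption~\ref{asmp:interp} makes $\vartheta_\star$ a stable point of $\nabla F_\Omega(\cdot,X_n)$ and $\nabla F_\Omega(\cdot,\hat{X}_n)$ for \emph{every} $\Omega$, and the dissipativity and Lipschitz bounds $\langle\theta-\vartheta_\star,\nabla F_\Omega(\theta,X_n)\rangle\geq m\|\theta-\vartheta_\star\|^2-K$ and $\|\nabla F_\Omega(\theta,X_n)\|\leq K_1\|\theta-\vartheta_\star\|$ hold uniformly in $\Omega$, being convex combinations of the per-sample bounds of Assumptions~\ref{assump:1} and \ref{assump:grad}. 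Feeding these into Lemma~\ref{lem:vnorm_kl} and integrating in $\phi$ exactly as in the proof of Lemma~\ref{lem:gamma_gd_new} yields, for every $\Omega\ni i_0$,
\begin{align*}
\sup_{\theta\in\mathbb{R}^d}\frac{\|P_\Omega(\theta,\cdot)-\hat{P}_\Omega(\theta,\cdot)\|_{V_p}}{\hat{V}_p(\theta)}\leq\bigl(C_6^{(b)}+C_7^{(b)}\bigr)\bigl(2+\|\vartheta_\star\|\bigr)=\frac{n}{b}\,(C_6+C_7)(2+\|\vartheta_\star\|)=\frac{C_\gamma}{b},
\end{align*}
where $C_6^{(b)},C_7^{(b)}$ are the constants $C_6,C_7$ of Lemma~\ref{lem:gamma_gd_new} with $C_1$ replaced by $C_1^{(b)}$ (each of $C_6,C_7$ being proportional to $C_1$), and where $(C_6+C_7)(2+\|\vartheta_\star\|)=C_\gamma/n$ by Lemma~\ref{lem:gamma_gd_new}.

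Finally, combining the two displays,
\begin{align*}
\gamma\leq\mathbb{E}_\Omega\!\left[\sup_{\theta\in\mathbb{R}^d}\frac{\|P_\Omega(\theta,\cdot)-\hat{P}_\Omega(\theta,\cdot)\|_{V_p}}{\hat{V}_p(\theta)}\right]=\mathbb{P}(i_0\in\Omega)\cdot\frac{C_\gamma}{b}=\frac{b}{n}\cdot\frac{C_\gamma}{b}=\frac{C_\gamma}{n},
\end{align*}
with exactly the same $C_\gamma$ as in Lemma~\ref{lem:gamma_gd_new}: the batch size $b$ cancels, which is the reason $b$ does not appear in Theorem~\ref{thm:dp_sgd}. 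I expect the main obstacle to be organizational rather than analytical — one must verify that every estimate in the proof of Lemma~\ref{lem:gamma_gd_new} holds uniformly over the random batch $\Omega$, which is exactly where the universal stable point of Assumption~\ref{asmp:interp} is essential (it lets the stochastic-gradient randomness be absorbed without any batch-dependent terms) — and then track the $1/b$ factor through the KL computation so that it cancels $\mathbb{P}(i_0\in\Omega)=b/n$ at the end. Without Assumption~\ref{asmp:interp} the same scheme would still go through, but the uniform-in-$\Omega$ dissipativity would have to be replaced by a bound carrying additional minibatch-variance terms, at the cost of a larger constant.
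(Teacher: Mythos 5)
Your proposal is correct and takes essentially the same route as the paper's own proof of Lemma~\ref{lem:tv_sgd}: condition on the minibatch, bound $\|P(\theta,\cdot)-\hat{P}(\theta,\cdot)\|_{V_p}$ by the expectation over $\Omega_1$ of the conditional $V_p$-norms, and rerun the argument of Lemma~\ref{lem:gamma_gd_new} with $\nabla F$ replaced by the minibatch gradient. Your explicit bookkeeping---zero contribution when $i_0\notin\Omega$, the KL factor $\eta/(b\sigma\sqrt{\phi})$ when $i_0\in\Omega$ (i.e.\ $C_1$ scaled by $n/b$), and the cancellation $\mathbb{P}(i_0\in\Omega)\cdot(n/b)=1/n$---is exactly the detail the paper's terse proof leaves implicit, and it correctly recovers the same constant $C_\gamma$.
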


\begin{proof}%
We start by estimating the $V_p$-norm of the difference between the one-step transition kernels $P$ and $\hat{P}$. As opposed to the GD case (Lemma~\ref{lem:gamma_gd_new}), the Markov kernels have two sources of randomness: one coming from the injected noise $\xi_k$ and the random mini-batches $\Omega_k$. 
By conditioning on the random mini-batch $\Omega_1$ and using the same conditioning argument that we used in Lemma~\ref{lem:v_norm_cond}, for $\theta \in \mathbb{R}^d$, we have that:
\begin{align*}
    \left\|P(\theta, \cdot) - \hat{P}(\theta,\cdot)\right\|_{V_p} \leq \mathbb{E}_{\Omega_1} \left[ \left\|P(\theta, \cdot | \Omega_1) - \hat{P}(\theta,\cdot | \Omega_1)\right\|_{V_p} \right],
\end{align*}
where $P(\theta, \cdot | \Omega_1)$ and $\hat{P}(\theta,\cdot | \Omega_1)$ denote the transition probabilities when the random mini-batch is fixed to $\Omega_1$. Hence, the problem reduces to the estimation of $\|P(\theta, \cdot | \Omega_1) - \hat{P}(\theta,\cdot | \Omega_1)\|_{V_p}$. The rest of the proof is almost identical to the one of Lemma~\ref{lem:v_norm_cond}, where we replace $\nabla F$ with $\nabla F_1$. This concludes the proof.
\end{proof}

\section{Technical Lemmas}\label{sec:technical}

\begin{lemma}[{\citet[Lemma 24]{durmus2017nonasymptotic}}]
\label{lem:vnorm_kl}
    Let $\mu$ and $\nu$ be two probability measures on $\left(\mathbb{R}^d, \mathcal{B}\left(\mathbb{R}^d\right)\right)$ and $V: \mathbb{R}^d \rightarrow[1, \infty)$ be a measurable function. Then
$$
\|\mu-\nu\|_V \leq \sqrt{2}\left\{\nu\left(V^2\right)+\mu\left(V^2\right)\right\}^{1 / 2} \mathrm{KL}^{1 / 2}(\mu \mid \nu) .
$$
\end{lemma}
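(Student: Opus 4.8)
The plan is to reduce the statement to a single application of the Cauchy--Schwarz inequality, performed after a square-root splitting of the Radon--Nikodym derivative, so that the quantity controlling the bound is governed by the Kullback--Leibler divergence rather than by the (larger) $\chi^2$-divergence. First I would dispose of the degenerate cases: if $\mu$ is not absolutely continuous with respect to $\nu$, then $\mathrm{KL}(\mu\mid\nu)=+\infty$ and there is nothing to prove, and similarly the bound is trivial whenever $\mu(V^2)$ or $\nu(V^2)$ is infinite (with $\mu\ne\nu$). So assume $\mu\ll\nu$ with density $g:=\mathrm{d}\mu/\mathrm{d}\nu$, and fix a measurable $f$ with $|f|\le V$; then
\[
\int f\,\mathrm{d}\mu-\int f\,\mathrm{d}\nu=\int f\,(g-1)\,\mathrm{d}\nu=\int f\,\bigl(\sqrt{g}-1\bigr)\bigl(\sqrt{g}+1\bigr)\,\mathrm{d}\nu .
\]

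Applying Cauchy--Schwarz to the last integral gives
\[
\left|\int f\,\mathrm{d}\mu-\int f\,\mathrm{d}\nu\right|
\le\Bigl(\textstyle\int f^2\bigl(\sqrt{g}+1\bigr)^2\,\mathrm{d}\nu\Bigr)^{1/2}\Bigl(\textstyle\int\bigl(\sqrt{g}-1\bigr)^2\,\mathrm{d}\nu\Bigr)^{1/2}.
\]
For the first factor, using $|f|\le V$ together with $\bigl(\sqrt{g}+1\bigr)^2\le 2(g+1)$ one gets $\int f^2\bigl(\sqrt{g}+1\bigr)^2\,\mathrm{d}\nu\le 2\int V^2(g+1)\,\mathrm{d}\nu=2\bigl(\mu(V^2)+\nu(V^2)\bigr)$. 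For the second factor, expanding the square yields $\int\bigl(\sqrt{g}-1\bigr)^2\,\mathrm{d}\nu=2\bigl(1-\int\sqrt{g}\,\mathrm{d}\nu\bigr)$, while the elementary inequality $\log t\le t-1$ (for $t>0$), applied with $t=g^{-1/2}$, gives $\log g\ge 2\bigl(1-g^{-1/2}\bigr)$ and hence
\[
\mathrm{KL}(\mu\mid\nu)=\int\log g\,\mathrm{d}\mu \ge \int 2\bigl(1-g^{-1/2}\bigr)\,\mathrm{d}\mu=2\Bigl(1-\textstyle\int\sqrt{g}\,\mathrm{d}\nu\Bigr)=\int\bigl(\sqrt{g}-1\bigr)^2\,\mathrm{d}\nu .
\]
Combining the two factors and taking the supremum over all measurable $f$ with $|f|\le V$ yields exactly $\|\mu-\nu\|_V\le\sqrt{2}\,\bigl(\mu(V^2)+\nu(V^2)\bigr)^{1/2}\mathrm{KL}^{1/2}(\mu\mid\nu)$.

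I do not expect a genuine obstacle here: this is the classical estimate of \citet[Lemma 24]{durmus2017nonasymptotic}, and every step above is elementary. The only point that calls for a moment of care is the factorization $g-1=\bigl(\sqrt{g}-1\bigr)\bigl(\sqrt{g}+1\bigr)$ performed before invoking Cauchy--Schwarz: a direct split of $\int f(g-1)\,\mathrm{d}\nu$ would produce the second moment $\int(g-1)^2\,\mathrm{d}\nu$, i.e.\ the $\chi^2$-divergence, which in general is not dominated by $\mathrm{KL}(\mu\mid\nu)$. Routing through $\sqrt{g}$ instead turns the controlling quantity into (twice) the squared Hellinger affinity defect $1-\int\sqrt{g}\,\mathrm{d}\nu$, which \emph{is} dominated by the KL divergence, and this is precisely what makes the proof close.
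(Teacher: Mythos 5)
Your proof is correct: the square-root factorization, Cauchy--Schwarz, the bound $(\sqrt{g}+1)^2\le 2(g+1)$, and the Hellinger--KL inequality via $\log t\le t-1$ all check out, and taking the supremum over $|f|\le V$ gives exactly the stated bound. The paper itself offers no proof here---it invokes the result as \citet[Lemma 24]{durmus2017nonasymptotic}---and your argument is essentially the standard one from that reference (which works with a common dominating measure rather than assuming $\mu\ll\nu$, a cosmetic difference), so there is nothing to flag.
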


\begin{lemma}[{\citet[Lemma E.6]{zhu2023uniform}}]
\label{lem:nonconvex:minimizer}
Under Assumption~\ref{assump:grad}, 
we have
\begin{align*}
&\Vert\theta_{\ast}\Vert\leq\frac{B+\sqrt{B^{2}+4mK}}{2m}, \\
&\Vert\hat{\theta}_{\ast}\Vert\leq\frac{B+\sqrt{B^{2}+4mK}}{2m},
\end{align*}
where $\theta_*$ is a stable point of $\theta \mapsto F(\theta,X_n)$ and $\hat{\theta}_*$ is a stable point of $\theta \mapsto 
 F(\theta,\hat{X}_n)$. 
\end{lemma}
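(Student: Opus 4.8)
The plan is to exploit the uniform dissipativity in Assumption~\ref{assump:grad}, first lifted from the per-sample loss $f$ to the empirical risk $F$, and then to reduce the desired norm bound to a scalar quadratic inequality.

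First I would note that $F(\cdot,X_n)=\frac{1}{n}\sum_{x\in X_n}f(\cdot,x)$ satisfies the two conditions of Assumption~\ref{assump:grad} with the \emph{same} constants $B,m,K$: by the triangle inequality $\|\nabla F(0,X_n)\|\le\frac{1}{n}\sum_{x}\|\nabla f(0,x)\|\le B$, and averaging the one-point inequalities $\langle\nabla f(\theta_1,x)-\nabla f(\theta_2,x),\theta_1-\theta_2\rangle\ge m\|\theta_1-\theta_2\|^2-K$ over $x\in X_n$ gives $\langle\nabla F(\theta_1,X_n)-\nabla F(\theta_2,X_n),\theta_1-\theta_2\rangle\ge m\|\theta_1-\theta_2\|^2-K$ for all $\theta_1,\theta_2\in\mathbb{R}^d$. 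The same statements hold verbatim for $\hat X_n$.

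Next, since $\theta_\ast$ is a stable point we have $\nabla F(\theta_\ast,X_n)=0$. Applying the lifted dissipativity with $\theta_1=\theta_\ast$ and $\theta_2=0$ yields
$$-\langle\nabla F(0,X_n),\theta_\ast\rangle=\langle\nabla F(\theta_\ast,X_n)-\nabla F(0,X_n),\theta_\ast\rangle\ge m\|\theta_\ast\|^2-K,$$
and bounding the left-hand side by Cauchy--Schwarz together with $\|\nabla F(0,X_n)\|\le B$ gives $m\|\theta_\ast\|^2-B\|\theta_\ast\|-K\le 0$. Reading this as a quadratic inequality in $t:=\|\theta_\ast\|\ge 0$ with positive leading coefficient, $t$ must lie at or below the larger root $\frac{B+\sqrt{B^2+4mK}}{2m}$ (the smaller root is negative because $K>0$), which is precisely the claimed bound; the argument for $\hat\theta_\ast$ is identical with $\hat X_n$ in place of $X_n$.

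There is no substantial obstacle here: the only points requiring a moment of care are the transfer of Assumption~\ref{assump:grad} from $f$ to $F$ (immediate, since averaging preserves both inequalities), keeping the sign right when moving $\nabla F(0,X_n)$ across the dissipativity inequality, and selecting the correct (positive) root of the resulting quadratic. This is the standard mechanism for bounding critical points under dissipativity used in, e.g., \citet{raginsky2017non}, which is why the statement can simply be imported from \citet{zhu2023uniform}.
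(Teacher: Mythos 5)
Your argument is correct: averaging Assumption~\ref{assump:grad} over the data lifts both inequalities to $F$ with the same constants, and plugging the stable point into the dissipativity inequality with $\theta_2=0$ plus Cauchy--Schwarz yields the quadratic $m\|\theta_\ast\|^2-B\|\theta_\ast\|-K\le 0$, whose positive root gives exactly the stated bound. The paper does not reprove this lemma but imports it from \citet[Lemma E.6]{zhu2023uniform}, and the cited proof is essentially this same standard dissipativity argument, so your proposal matches the intended derivation.
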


\begin{lemma}
    \label{lem:v_bounds}
    Let $\alpha \in (1,2)$, $p\in [1,\alpha)$, and $V_p :\mathbb{R}^d \to \mathbb{R}$ be defined as $V_p(\theta)=(1+ \|\theta - x\|^2)^{p/2}$ for some $x\in \mathbb{R}^d$. Then,
\begin{align*}
    \|\nabla V_p(\theta)\| \leq& p \|\theta -x\|^{p-1},\\
    \|\nabla^2 V_p(\theta)\|_\mathrm{F} \leq& p \left(\sqrt{d}+2\right).
\end{align*}
\end{lemma}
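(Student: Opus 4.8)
The plan is to compute the gradient and Hessian of $V_p(\theta) = (1 + \|\theta - x\|^2)^{p/2}$ explicitly and then bound each by elementary estimates. Write $u := \theta - x$ and $g(u) := (1 + \|u\|^2)^{p/2}$, so that $\nabla V_p(\theta) = \nabla g(u)$ and $\nabla^2 V_p(\theta) = \nabla^2 g(u)$ by the chain rule (the shift by $x$ is irrelevant to the derivatives). First I would record $\nabla g(u) = p (1 + \|u\|^2)^{(p-2)/2} u$, whence $\|\nabla g(u)\| = p (1 + \|u\|^2)^{(p-2)/2} \|u\|$. Since $p \leq \alpha < 2$, the exponent $(p-2)/2$ is negative, so $(1 + \|u\|^2)^{(p-2)/2} \leq (\|u\|^2)^{(p-2)/2} = \|u\|^{p-2}$, and therefore $\|\nabla V_p(\theta)\| \leq p \|u\|^{p-1} = p \|\theta - x\|^{p-1}$, which is the first claimed bound. (One should note this is vacuous/automatic at $u=0$ when $p=1$, and for $p>1$ the bound still holds since $\nabla g(0) = 0$.)

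For the Hessian, differentiating once more gives
\begin{align*}
\nabla^2 g(u) = p (1 + \|u\|^2)^{(p-2)/2} \mathrm{I}_d + p(p-2)(1 + \|u\|^2)^{(p-4)/2} u u^\top.
\end{align*}
I would bound the Frobenius norm by the triangle inequality: $\|\nabla^2 g(u)\|_\mathrm{F} \leq p (1 + \|u\|^2)^{(p-2)/2} \|\mathrm{I}_d\|_\mathrm{F} + p|p-2| (1 + \|u\|^2)^{(p-4)/2} \|u u^\top\|_\mathrm{F}$, where $\|\mathrm{I}_d\|_\mathrm{F} = \sqrt{d}$ and $\|u u^\top\|_\mathrm{F} = \|u\|^2$. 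For the first term, $(1 + \|u\|^2)^{(p-2)/2} \leq 1$ since the exponent is negative and $1 + \|u\|^2 \geq 1$, giving a contribution $\leq p \sqrt{d}$. For the second term, $|p-2| \leq 1$ (as $1 \leq p < 2$) and $(1 + \|u\|^2)^{(p-4)/2}\|u\|^2 \leq (1+\|u\|^2)^{(p-4)/2}(1+\|u\|^2) = (1+\|u\|^2)^{(p-2)/2} \leq 1$, giving a contribution $\leq p$. Wait — that only yields $p(\sqrt d + 1)$; to match the stated $p(\sqrt d + 2)$ I would be slightly more generous, e.g.\ bounding $|p-2|\le 2$ over the closed range, or simply keeping the looser constant, which is harmless since the lemma only needs an upper bound. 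Summing the two contributions gives $\|\nabla^2 V_p(\theta)\|_\mathrm{F} \leq p(\sqrt{d} + 2)$.

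I do not anticipate a genuine obstacle here: the only point requiring any care is the sign bookkeeping of the exponents $(p-2)/2$ and $(p-4)/2$, which are both negative on the relevant range $p \in [1,\alpha) \subset [1,2)$, so that powers of $1 + \|u\|^2 \geq 1$ can be discarded upward and the singularity-looking factor $\|u\|^{p-2}$ is in fact controlled once multiplied by $\|u\|$ or $\|u\|^2$. The mild looseness in the constant (the "$+2$" rather than "$+1$") is deliberate slack and needs no justification beyond the inequalities above.
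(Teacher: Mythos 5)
Your proposal is correct and follows essentially the same route as the paper: compute $\nabla V_p$ and $\nabla^2 V_p$ explicitly, bound the gradient by discarding the $+1$ in $(1+\|\theta-x\|^2)^{(p-2)/2}$, and bound the Hessian's Frobenius norm term by term. The only (harmless) difference is in the rank-one term of the Hessian, where you use $\|u\|^2\le 1+\|u\|^2$ together with $2-p\le 1$ to get a contribution of $p$ (in fact yielding the slightly sharper $p(\sqrt{d}+1)$), whereas the paper maximizes $y\mapsto y/(1+y)^{2-p/2}$ to bound that contribution by $2p$; both give the stated inequality.
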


\begin{proof}[Proof of Lemma~\ref{lem:v_bounds}]
    By the definition of $V_p$, we have that:
    \begin{align}
        \nabla V_p(\theta)=&p \left(1+\|\theta - x\|^2\right)^{-\frac{2-p}{2}} (\theta -x), \label{grad:term}\\
        \nabla^2 V_p(\theta) =& p\frac{ \mathrm{I}_d}{\left(1+\|\theta - x\|^2\right)^{1-p/2}} + p(p-2) \frac{ (\theta -x)   (\theta-x)^\top }{\left(1+\|\theta - x\|^2\right)^{2-p/2}}.\label{Hessian:term}
    \end{align}
    First, we start with the gradient in \eqref{grad:term}. We can compute that
    \begin{align*}
        \|\nabla V_p(\theta) \| = p \frac{\|\theta -x\|}{\left(1+\|\theta - x\|^2\right)^{\frac{2-p}{2}}} 
        \leq p \frac{\|\theta -x\|}{\|\theta - x\|^{2-p}}
        =p \|\theta - x\|^{p-1}.
    \end{align*}
    Next, we proceed with the Hessian in \eqref{Hessian:term}. We can compute that
    \begin{align*}
        \|\nabla^2 V_p(\theta)\|_\mathrm{F} \leq& p \left\|\frac{ \mathrm{I}_d}{\left(1+\|\theta - x\|^2\right)^{1-p/2}}\right\|_\mathrm{F} + p(2-p) \left\| \frac{ (\theta -x)   (\theta-x)^\top }{\left(1+\|\theta - x\|^2\right)^{2-p/2}}\right\|_\mathrm{F} \\
        \leq& p \sqrt{d} + p(2-p) \frac{\|\theta - x\|^2}{\left(1+\|\theta - x\|^2\right)^{2-p/2}}.
    \end{align*}
    Simple calculation shows that the map $y \mapsto \frac{y}{(1+y)^{2-p/2}} $ with $y\geq 0$ is maximized when $y=\frac{2}{2-p}$. Hence we have that:
    \begin{align*}
        \|\nabla^2 V_p(\theta)\|_\mathrm{F} 
        \leq& p \sqrt{d} + 2p.
    \end{align*}
    This concludes the proof.
\end{proof}

\bibliography{heavy,levy,opt-ml}

\end{document}